%% file: main.tex
\documentclass[11pt]{article}

\usepackage{times}
\usepackage[margin=1in]{geometry}
\usepackage{natbib}
\setcitestyle{authoryear,round,citesep={;},aysep={,},yysep={;}}

\usepackage[utf8]{inputenc}
\usepackage[T1]{fontenc}
\usepackage{url}
\usepackage{booktabs}
\usepackage{amsfonts}
\usepackage{amsmath}
\usepackage{amssymb}
\usepackage{amsthm}
\usepackage{mathtools}
\usepackage{nicefrac}
\usepackage{microtype}
\usepackage{graphicx}
\usepackage{float}
\usepackage{placeins}
\usepackage{xcolor}
\usepackage{enumitem}
\usepackage{hyperref}
\hypersetup{
  colorlinks=true,
  linkcolor=blue,
  citecolor=blue,
  urlcolor=blue
}
\let\oldeqref\eqref
\renewcommand{\eqref}[1]{Eq.~\oldeqref{#1}}

\newtheorem{theorem}{Theorem}[section]
\newtheorem{proposition}[theorem]{Proposition}
\newtheorem{lemma}[theorem]{Lemma}
\newtheorem{corollary}[theorem]{Corollary}
\theoremstyle{definition}
\newtheorem{assumption}[theorem]{Assumption}
\theoremstyle{remark}

\title{A Fine-Grained Analysis of the LoRA Fine-Tuning Landscape with Implications for Data Selection}

\author{
Bowen Zhang\textsuperscript{1},
Changrui Fang\textsuperscript{1},
Xinsong Ma\textsuperscript{1},
Jiaye Teng\textsuperscript{2},
Ziye Ma\textsuperscript{1}\thanks{Corresponding author.}\\[0.5em]
\textsuperscript{1}City University of Hong Kong\\
\textsuperscript{2}Shanghai University of Finance and Economics
}
\date{}

\begin{document}

\maketitle

\begin{abstract}
Low-Rank Adaptation (LoRA) has become a standard approach for parameter-efficient fine-tuning, yet a fundamental practical question remains unresolved: how should the adapter rank be chosen? An overly small rank may lead to a poorly conditioned optimization landscape, whereas an unnecessarily large rank sacrifices the efficiency that motivates LoRA in the first place. Existing theoretical analyses provide only limited guidance on this trade-off, and their guarantees are typically established under restrictive theoretical settings. We address this gap by developing a substantially sharper landscape theory for LoRA, building on modern results from nonconvex low-rank matrix sensing. Our central insight is that the appropriate adapter rank should depend on the quality of the data-induced optimization geometry, rather than on the model alone. To formalize this connection, we introduce LoRA-RIP, a data-dependent restricted-isometry metric that characterizes the conditioning of the cross-entropy (CE) objective along LoRA-relevant low-rank directions. We prove that sufficient rank over-parameterization, with the required rank explicitly determined by the LoRA-RIP constant, eliminates spurious local minima, thereby extending existing RIP-based guarantees beyond the classical $1/3$ regime. This characterization further enables principled data selection under a fixed rank budget. Experiments across language and vision tasks support these theoretical predictions, showing that rank and data quality are two coupled resources that should be jointly considered for more efficient and reliable LoRA fine-tuning.
\end{abstract}

\section{Introduction}

Low-Rank Adaptation (LoRA) is a standard parameter-efficient method for fine-tuning large models~\citep{hu2021lora}. Given a pretrained weight matrix \(W_0\in\mathbb R^{n_1\times n_2}\), LoRA freezes \(W_0\) and learns an additive low-rank update \(W=W_0+\gamma M\), where \(M=UV^\top\), \(U\in\mathbb R^{n_1\times r}\), \(V\in\mathbb R^{n_2\times r}\), and $\gamma$ is a scaling factor. With \(r\ll\min(n_1,n_2)\), this factorization reduces the trainable parameter count for the adapted matrix from \(n_1n_2\) to \(r(n_1+n_2)\), lowering optimizer-state and memory requirements while often maintaining strong downstream performance. Realizing these efficiency gains, however, requires an appropriate choice of the adapter rank. Although larger ranks offer greater expressivity~\citep{zeng2024expressive}, their practical value cannot be judged from representational capacity alone. In linearized fine-tuning experiments, \citet{jang2024ntk} observe that lower ranks converge more slowly even when they ultimately reach the same loss. Conversely, increasing rank need not improve downstream performance. For example, \citet{albertRandLoRAFullRank2024} report that, when fine-tuning Phi3 on 15,000 examples, increasing the LoRA rank from $16$ to $32$ reduces the reported mean accuracy across eight commonsense reasoning tasks from $81.8\%$ to $80.3\%$. These observations illustrate the cost of an unsuitable rank choice: an overly restrictive rank can slow optimization, whereas an unnecessarily large rank can consume additional resources without corresponding performance gains. Adaptive methods such as AdaLoRA improve the allocation of rank across weight matrices, but still require a prespecified target budget~\citep{zhang2023adaptivebudget}. This leaves a complementary question: how should adapter rank be chosen to support effective fine-tuning without unnecessary parameter overhead? 


This question naturally leads us to nonconvex low-rank optimization:
LoRA learns a factorized update, with the adapter rank $r$ controlling
its degree of parameterization~\citep{burerNonlinearProgrammingAlgorithm2003}.
A long line of work has characterized the landscapes of factorized
problems and established conditions excluding spurious local
minima~\citep{NIPS2016_b139e104,ge2017nospurious,
zhu2018globaloptimality,zhangGeneralLowrankMatrix2021}.
In classical matrix sensing, the restricted
isometry property (RIP) is a key metric that characterizes the optimization landscape, and it measures how uniformly the sensing data
capture low-rank structure, making its constant a natural indicator
of the underlying data geometry~\citep{recht2010guaranteed}.
More recent advances show that rank over-parameterization can
improve the landscape by eliminating spurious local minima under
weaker RIP conditions~\citep{zhangImprovedGlobalGuarantees2025,
zhangSharpGlobalGuarantees2025}, directly connecting search rank
with data-induced geometry. Classical sensing, however, assumes a
fixed $\ell_2$ objective in matrix space, whereas LoRA settings have
cross-entropy loss coupling with the base model structure. Its
geometry therefore varies with the model's responses to the data
and the adapter update, making the extension of classical rank--RIP
analysis nontrivial. Specifically for the LoRA fine-tuning landscape,
\citet{jang2024ntk} establish no-spurious-minima guarantees in the
neural tangent kernel (NTK) regime, while \citet{liu2025landscape} further 
relate adapter rank to local optimization geometry under $\ell_2$
loss. These results demonstrate the relevance of landscape theory,
but their reliance on local linearization or squared-loss settings
limits the guidance they provide for nonlinear cross-entropy
fine-tuning. More recently, \citet{kim2025failsloudly} move beyond
linearization by adapting established low-rank optimization
arguments~\citep{zhu2018globaloptimality,ha2020equivalence},
showing how RIP-type conditions still govern the landscapes in nonlinear LoRA settings
and providing numerical evidence for their applicability.
Their global-optimality guarantee in the analyzed region
corresponds to a RIP constant below $1/3$, while larger values
allow a ``fails-loudly'' alternative. However, their analysis leaves two important aspects insufficiently
characterized: the CE-specific structure of the RIP constant in LoRA,
and how the adapter rank $r$ shapes the optimization landscape beyond
the small-RIP regime. Since RIP is strongly determined by the geometry
of the fine-tuning data, these two questions are closely connected:
understanding LoRA optimization requires relating data-induced RIP to
the role of rank in shaping the landscape. Clarifying this relationship is therefore a key step toward a more principled theory of rank choice in LoRA fine-tuning.

In this work, we build on theoretical tools from classical low-rank
optimization and matrix sensing
~\citep{zhangGeneralLowrankMatrix2021,
zhangSharpGlobalGuarantees2025,yalccin2023semidefinite} and extend them to develop a fine-grained landscape characterization
of LoRA under cross-entropy loss. We introduce a new data-dependent LoRA-RIP constant, denoted by $\delta_k^{\mathrm{LoRA}}$. Under the stated assumptions, we establish an explicit
relation between \(r/r_\star\) and
\(\delta_k^{\mathrm{LoRA}}\) that guarantees a landscape
free of spurious local minima, where $r_\star$ refers to the rank of the global solution. This implies that any algorithm that can converge to a
second-order stationary point (SOSP)\footnote{For definition of SOSP
please see Appendix~\ref{sec:app-sosp-definitions}} in polynomial time
can reach the global solution reliably. More importantly, our results hold in the regime where $\delta_k^{\mathrm{LoRA}}$
exceeds $1/3$, and
show that larger ranks permit larger LoRA-RIP values in the guarantee. Under common curvature bounds covering the ranks considered,
this characterization yields a sufficient constant-factor
rank requirement relative to \(r_\star\), with the factor
determined by the LoRA-RIP bound.
Once this requirement is met, further rank is unnecessary
for the same guarantee within that range. Therefore it means that in most LoRA settings, $r$ does not need to be excessively large to ensure good performance, preserving its parameter efficiency. Viewed from the data side, the same relationship suggests a complementary route to effective fine-tuning: improving the optimization geometry through data selection under a fixed rank budget. We therefore analyze the CE-specific structure of LoRA-RIP, revealing how certain properties of fine-tuning data affect its value and deriving corresponding data-selection principles. Together, these results connect rank requirements with data composition, providing theoretical guidance for choosing adapter rank and selecting data for more parameter-and data-efficient LoRA fine-tuning.

\paragraph{Main Contributions:}
\begin{itemize}[leftmargin=1.5em]
    \item We develop a theoretical framework linking adapter rank \(r\) and a newly introduced notion LoRA-RIP \(\delta_k^{\mathrm{LoRA}}\) to global optimality in the balanced LoRA landscape. We show that as long as $r/ r_\star >
    \left(2\delta_k^{\mathrm{LoRA}} / (1-\delta_k^{\mathrm{LoRA}}) \right)^2 $ , no spurious local minima exist in the landscape of \eqref{eq:main-balanced-lora-objective}, where $r_\star$ denotes the rank of the global solution $M^*$ in Assumption\ref{asm:main-regularized-target}. This result establishes a sufficient rank threshold for a benign landscape. Our experiments further illustrate rank-dependent optimization trends consistent with this characterization.
    \item We identify theoretically motivated data-selection principles such as latent-space coverage and pre-tuning softmax saturation. Our analysis links data composition to LoRA-RIP and shows how more favorable data geometry reduces the amount of rank-over-parameterization needed for a benign landscape. This yields a landscape-guided view of data selection in which data quality and adapter rank act as complementary resources.
\end{itemize}

\section{Related Work}

\paragraph{LoRA methods and theory.}
Low-Rank Adaptation (LoRA)~\citep{hu2021lora} has become a central parameter-efficient fine-tuning method, and numerous variants have been built on its structure, including quantized adapters~\citep{dettmers2023qlora}, adaptive rank
allocation~\citep{zhang2023adaptivebudget}, and modified factor
updates~\citep{hayou2024loraplus}. Other work studies geometric updates and stabilization
mechanisms~\citep{zhang2024riemannian,yen2025rite,wu2026stable,zhao2024galore}. Recent theory has studied LoRA from complementary angles. Expressivity results characterize what a
given adapter rank can represent~\citep{zeng2024expressive}. From the view of optimization, NTK-regime analysis gives benign
landscape guarantees for a linearized fine-tuning model~\citep{jang2024ntk}. Other works analyze
nonlinear LoRA landscapes or gradient-flow dynamics under structural assumptions
~\citep{liu2025landscape,kim2025failsloudly,pmlr-v258-xu25h}.

\paragraph{Low-rank optimization.}

Low-rank matrix optimization provides a geometric foundation for understanding factorized
objectives. A rank-constrained variable is often written as \(X=UV^\top\) by Burer-Monteiro factorization~\citep{burerNonlinearProgrammingAlgorithm2003}, which replaces the rank
constraint by a nonconvex optimization problem over the factors. For matrix recovery and related
objectives, restricted curvature or RIP-type conditions can imply benign landscape properties, such
as the absence of spurious local minima or strict-saddle geometry
~\citep{ge2017nospurious,zhu2018globaloptimality}. Similar geometric analyses have also been
developed for more general low-rank objectives~\citep{zhangGeneralLowrankMatrix2021}. A more
recent direction studies rank over-parameterization: when the search rank exceeds the target rank
\(r_\star\), the required amount of over-parameterization can be expressed through RIP quantities~\citep{zhangImprovedGlobalGuarantees2025,zhangSharpGlobalGuarantees2025}.


\paragraph{Data selection and training geometry.} As innovations in model architecture hit a plateau, the strategic selection of training data becomes increasingly crucial. Existing work includes Shapley-based data valuation~\citep{ghorbani2019data, wang2024data}, gradient-based data attribution and selection~\citep{pruthi2020estimating,xia2024less,wang2026opus} and similarity-based data filtering~\citep{sachdeva2026how,das2026matched}. Meanwhile, recent theoretical work explores the statistical properties~\citep{allouah2026distributional,rezaei2026highdimensional}. Complementary work studies training geometry through Hessian spectra and cross-entropy dynamics
~\citep{pmlr-v97-ghorbani19b,
garrodDiagonalizingSoftmaxHadamard2025a}.
In this work, we connect data composition to LoRA rank requirements through
the optimization geometry of the cross-entropy objective.

\section{{Preliminaries}}

\paragraph{One-layer LoRA Fine-tuning}
We freeze all pretrained parameters and apply a trainable
LoRA update to a single weight matrix
\(W_0\in\mathbb R^{n_1\times n_2}\).
A LoRA update takes the form
\(
W=W_0+M,
\quad
M=UV^\top,
\quad
U\in\mathbb R^{n_1\times r},
\quad
V\in\mathbb R^{n_2\times r}.
\)
The LoRA scaling factor can often be absorbed into the factors, so we adopt the normalized
format \(M=UV^\top\) throughout. Given a training set
\(S=\{(x_i,y_i)\}_{i=1}^N\), define the empirical loss as
\(
\widehat L(M)
:=
\frac1N\sum_{i=1}^N
\ell\bigl(f_{W_0+M}(x_i),y_i\bigr)
\). With weight decay $(\lambda>0)$ on the LoRA factors, the standard regularized LoRA objective is
\begin{equation}
\label{eq:main-decay-only-objective}
\widetilde h_\lambda(U,V)
:=
\widehat L(UV^\top)
+
\frac{\lambda}{2}
\bigl(
\|U\|_F^2+\|V\|_F^2
\bigr).
\end{equation}
The factorizations are not unique: if $R$ is invertible, then \((UR, V R^{-\top})\) represents the
same product as $UV^\top$. 
Thus the same product can have an infinite number of factorized representations. For the landscape analysis, we introduce a balancing term 
\(
    \|U^\top U - V^\top V\|_F^2
\)
. This balancing term improves the factor-space geometry without
restricting the set of representable updates. Such balancing
principles are well established in asymmetric low-rank optimization
and have recently been explored in LoRA
~\citep{zhu2018globaloptimality,castinBalancedLoRARemoving2026}. This leads to our main formulation:
\begin{equation}
\label{eq:main-balanced-lora-objective}
h_{\mu,\lambda}(U,V)
:=
\widehat L(UV^\top)
+
\frac{\mu}{4}\|U^\top U-V^\top V\|_F^2
+
\frac{\lambda}{2}
\bigl(
\|U\|_F^2+\|V\|_F^2
\bigr),
\end{equation}
where \(\mu>0\) is the balancing weight. In Section~\ref{sec:rank-thresholds}, we first establish results for \eqref{eq:main-balanced-lora-objective} and then transfer them back to the standard objective \eqref{eq:main-decay-only-objective}.

\paragraph{The Traditional Restricted Isometry Property(RIP) Notion} 
In classic matrix sensing, one often considers the quadratic loss of 
\(
    \|\mathcal A(M)-y\|_2^2 /2 ,
\)
where $\mathcal A$ is a linear operator consisting of a number (for example $m$) of sensing matrices $\{ A_i \}_{i = 1}^m$ that each take an inner product with $M$ to give a $m$-dimensional vector. Here we assume the target $y = \mathcal A(M^*)$, the observation of the ground truth low-rank matrix $M^*$. In such classical settings, it is well understood that the landscape and recovery guarantees are mostly determined by one metric, the restricted isometry property (RIP) constant ~\citep{recht2010guaranteed,ge2017nospurious}. We say \(\mathcal A\) satisfies 
 RIP with constant \(\delta_k\) if
\[
    (1-\delta_k)\|H\|_F^2
    \le
    \|\mathcal A(H)\|_2^2
    \le
    (1+\delta_k)\|H\|_F^2
\]
for every low-rank matrix \(H\) with \(\operatorname{rank}(H)\le k\). The RIP constant can be understood as the conditioning of the Hessian restricted to the low-rank manifold of rank at most $k$, with smaller values
corresponding to more uniform restricted curvature. Under the simple $\ell_2$ loss, the Hessian will stay constant over the entire landscape, but for our LoRA objective \eqref{eq:main-balanced-lora-objective}, Hessians vary across points, complicating direct RIP-based arguments.

\paragraph{Low-rank global minimizer exists}
One important distinction between our LoRA formulation and classical low-rank matrix recovery is that LoRA does not assume the existence of a “ground-truth” matrix $M^*$. Nevertheless, the optimization objective is analogous: in both settings, we seek global minima because they correspond to the best achievable empirical fit and, ultimately, the solutions expected to yield the strongest performance. In low-rank recovery, this target coincides with the ground-truth matrix $M^*$ by construction, whereas in LoRA fine-tuning it corresponds to the globally optimal low-rank adaptation under the training objective. Motivated by evidence that fine-tuning can
exploit low-dimensional parameterizations
~\citep{aghajanyanIntrinsicDimensionalityExplains2021a},
we make the following assumption that the global solution to the matrix lasso objective $F_\lambda(M)$ has rank $r_\star$, which is lower than the LoRA search rank $r$:
\begin{assumption}[Existence of low-rank global minimizer]
\label{asm:main-regularized-target}
\begin{equation}
\label{eq:main-regularized-target}
M_\star^\lambda
\in
\operatorname*{argmin}_{M \in \mathbb{R}^{n_1\times n_2}} F_\lambda(M),
\quad
\operatorname{rank}(M_\star^\lambda)=r_\star,
\quad
r\ge r_\star.
\end{equation}
\end{assumption}
Here, $F_\lambda(M):=\widehat L(M)+\lambda\|M\|_*$ is the matrix lasso formulation of the main objective $h_{\mu,\lambda}(U,V)$, and in Appendix \ref{sec:app-proof-prop-factor-decay} we prove that $\inf_{U,V} h_{\mu,\lambda}(U,V)
=
\inf_{\operatorname{rank}(M)\le r} F_\lambda(M)$.

\section{LoRA-RIP and Rank Thresholds for Benign Landscape}
\label{sec:rank-thresholds}

\subsection{Definition of LoRA-RIP}


\begin{assumption}[Rank-$k$ restricted strong convexity / restricted strong smoothness (RSC/RSS)]
\label{asm:main-rsc-rss}
For a candidate LoRA update \(M=UV^\top\), define the line segment as
\(M_t=M_\star^\lambda+t(M-M_\star^\lambda)\), \(t\in[0,1]\).
For a fixed rank order \(k\ge r+r_\star\), assume that
\(\widehat L\) has restricted curvature bounds
\(0<\alpha_k\le\beta_k<\infty\) along this segment:
\begin{equation}
\label{eq:main-rank-k-rsc-rss}
\alpha_k \|H\|_F^2
\le
\langle \nabla^2 \widehat L(M_t)[H], H \rangle
\le
\beta_k \|H\|_F^2
\end{equation}
for all \(t\in[0,1]\) and all directions \(H\) with
$\operatorname{rank}(H)\le k$.
\end{assumption}

\paragraph{LoRA-RIP constant}
Under Assumption~\ref{asm:main-rsc-rss}, we define the rank-\(k\) LoRA-RIP
constant by
\[
\delta_k^{\mathrm{LoRA}}
:=
\frac{\beta_k-\alpha_k}{\beta_k+\alpha_k}.
\]
For clarity, we state the main result with the balancing weight
\(\mu=\alpha_k/2\). The corresponding guarantee for general
\(0<\mu<\alpha_k\) is given in Appendix~\ref{sec:app-general-mu}.



\subsection{Main Landscape Results}
We now state the main landscape guarantee for the balanced LoRA objective. For notational convenience, we first define an augmented variable \(X:=[\begin{smallmatrix} U\\ V \end{smallmatrix}]\in\mathbb{R}^{(n_1+n_2)\times r}\), which contains both our LoRA factors. Writing \(
J:=\operatorname{Diag}(I_{n_1},-I_{n_2})
\), we define the augmented objective as
\begin{equation}
\label{eq:main-stacked-balanced-objective}
g_{\mu,\lambda}(X)
:=
4h_{\mu,\lambda}(U,V)
=
4\widehat L(UV^\top)
+
2\lambda\|X\|_F^2
+
\mu\|X^\top JX\|_F^2 .
\end{equation}
Accordingly, we also define the augmented global solution as \(Z:=[\begin{smallmatrix} U_\star\\ V_\star \end{smallmatrix}]\in\mathbb{R}^{(n_1+n_2)\times r_\star}\) and let the lifted matrix error be \(E:=XX^\top-ZZ^\top\). Since the \((1,2)\)-block of \(XX^\top\) is \(UV^\top\), exact recovery follows once we prove that
no second-order stationary point (SOSP) can have \(E\neq0\).  The proof follows the negative-certificate
strategy of \citet{zhang2019sharp, zhangSharpGlobalGuarantees2025}. Assuming \(E\neq0\), we build a
nonnegative combination of first- and second-order stationarity tests that must be nonnegative at an SOSP, but is in fact strictly negative. By contradiction, it follows that \(E = 0\), certifying global optimality. The exact certificate we construct is
\[
\psi_{\lambda,\widehat L}(t_1,t_2)
:=
t_1\,\bigl\langle \nabla g_{\mu,\lambda}(X),D_0\bigr\rangle
+
\frac{t_2}{2}\sum_{i=1}^{r_\star}
\nabla^2 g_{\mu,\lambda}(X)[D_i,D_i],
\qquad t_1,t_2\ge0 .
\]
At an SOSP, the gradient term vanishes and every Hessian quadratic form is nonnegative, so
\(\psi_{\lambda,\widehat L}(t_1,t_2)\ge0\) for all \(t_1,t_2\ge0\), while our main result shows that this is
impossible when \(E\neq0\).  The construction decomposes the lifted error into a tangent part
visible to the current factor span and a normal part missed by it:
\[
E=E_T+E_N,
\qquad
E_N=-(I-XX^\dagger)ZZ^\top(I-XX^\dagger).
\]
The direction \(D_0\) is chosen so that
$
D_0X^\top+XD_0^\top=-E_T,
$
and therefore its gradient pairing tests the tangent error (see
\eqref{eq:app-D0-identity}).  The directions \(D_i\) are built from
\(Z_\perp=(I-XX^\dagger)Z\) and satisfy
$
\sum_{i=1}^{r_\star}D_iD_i^\top=-E_N,
$
so their Hessian terms probe the missing normal directions (see \eqref{eq:app-Di-identity}).  The
weights \(t_1,t_2\) then choose the right mixture of these two probes.  If the error is mostly
tangent, \(D_0\) already gives a negative certificate; if it is mostly normal, the directions \(D_i\) expose
negative curvature; and in the mixed case the proof uses an interpolating certificate.  The
geometric heart of the argument is Lemma~\ref{lem:app-admissible-geometry}, derived from the scalar
clipping and blockwise Eckart--Young bounds in Lemmas~\ref{lem:app-scalar-clipping} and
\ref{lem:app-blockwise-ey}.  It shows that the extra \(r-r_\star\) LoRA directions are sufficient to
make some certificate strictly negative whenever
$
(\beta_k-\alpha_k)/\beta_k
<
1/(1+\sqrt{r_\star/r}) .
$
All directions have rank at most \(k\), so the RSC/RSS bounds in
Assumption~\ref{asm:main-rsc-rss} apply. Hence \(E\neq0\) contradicts second-order stationarity. Therefore, every SOSP must satisfy
\(XX^\top=ZZ^\top\). With that, we're ready to introduce our main landscape result:
\begin{theorem}[Global optimality of SOSPs for balanced regularized LoRA]
\label{thm:main-exact-sosp}
Suppose Assumption~\ref{asm:main-regularized-target} holds
with \(1\le r_\star\le r\). Fix
\(k\ge r+r_\star\), \(0<\alpha_k<\beta_k<\infty\) and
\(\mu=\alpha_k/2\).
Every SOSP \((U,V)\) of \(h_{\mu,\lambda}\) whose product
\(M=UV^\top\) satisfies Assumption~\ref{asm:main-rsc-rss}
with these constants is globally optimal and satisfies
\(UV^\top=M_\star^\lambda\) whenever
\begin{equation}
\label{eq:main-rank-threshold}
\frac{r}{r_\star}
    >
    \left(
        \frac{2\delta_k^{\mathrm{LoRA}}}
             {1-\delta_k^{\mathrm{LoRA}}}
    \right)^2 .
\end{equation}

Specifically, if
\(X:=[\begin{smallmatrix} U\\ V \end{smallmatrix}]\in\mathbb{R}^{(n_1+n_2)\times r}\) and
\(Z:=[\begin{smallmatrix} U_\star\\ V_\star \end{smallmatrix}]
\in\mathbb{R}^{(n_1+n_2)\times r_\star}\), we have \(XX^\top = ZZ^\top\).
\end{theorem}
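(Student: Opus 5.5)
The plan is the negative-certificate argument already outlined before the statement: assume $E:=XX^\top-ZZ^\top\neq0$ at an SOSP $X$ and show that some choice of $t_1,t_2\ge0$ makes $\psi_{\lambda,\widehat L}(t_1,t_2)<0$. This contradicts second-order stationarity.

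\textbf{Step 1 (lifting and first-order structure).} Write $g_{\mu,\lambda}(X)=4\widehat L(UV^\top)+2\lambda\|X\|_F^2+\mu\|X^\top JX\|_F^2$. Observe that $UV^\top$ is the off-diagonal block of $XX^\top$, and that
\[
\|XX^\top\|_*=\|X\|_F^2 \quad\text{whenever } U^\top U=V^\top V.
\]
Use the optimality of $M_\star^\lambda$ for $F_\lambda$ (Assumption~\ref{asm:main-regularized-target}) to produce a subgradient certificate $\nabla\widehat L(M_\star^\lambda)=-\lambda G$ with $\|G\|\le1$ and $G$ aligned with the balanced factorization $Z$.

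Next, rewrite the gradient and Hessian of $g_{\mu,\lambda}$ at $X$ in terms of a lifted symmetric operator acting on $XX^\top$. By the mean value theorem along the segment $M_t$, express
\[
\nabla\widehat L(M)-\nabla\widehat L(M_\star^\lambda)=\mathcal H[M-M_\star^\lambda],
\qquad \mathcal H=\int_0^1\nabla^2\widehat L(M_t)\,dt.
\]
Assumption~\ref{asm:main-rsc-rss} gives $\alpha_k\le\mathcal H\le\beta_k$ on rank-$\le k$ directions. After rescaling by $2/(\alpha_k+\beta_k)$, this is exactly a RIP-type operator with constant $\delta=\delta_k^{\mathrm{LoRA}}$.

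The balancing term with $\mu=\alpha_k/2$ supplies the missing curvature on the diagonal blocks. Combined with the off-diagonal curvature, it turns the symmetrized quadratic form on $(n_1+n_2)\times(n_1+n_2)$ rank-$k$ matrices into one with condition ratio controlled by $\beta_k/\alpha_k$. This is the standard trick of \citet{zhu2018globaloptimality} and \citet{ge2017nospurious}, adapted here to the integrated Hessian. The nuclear-norm/weight-decay terms are handled through the subgradient certificate: they contribute a term $\lambda\langle I-\text{(aligned part)},\cdot\rangle$ that is nonnegative in the directions used.

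\textbf{Step 2 (certificate directions).} Split $E=E_T+E_N$ with $E_N=-(I-XX^\dagger)ZZ^\top(I-XX^\dagger)$. Choose the directions as follows:
\begin{itemize}
\item $D_0$ solves $D_0X^\top+XD_0^\top=-E_T$, for instance $D_0=-(I-\tfrac12XX^\dagger)E_TX^{\dagger\top}$.
\item $D_i=Z_\perp e_i w_i^\top$ for unit vectors $w_i$ chosen from the $r-r_\star$ extra (or least significant) columns of $X$'s right singular space, so that $\sum_i D_iD_i^\top=-E_N$ up to the scaling dictated by $\|w_i\|$.
\end{itemize}
Then evaluate the two terms:
\begin{itemize}
\item $\langle\nabla g,D_0\rangle\lesssim-\langle\mathcal H[E],E_T\rangle$.
\item $\tfrac12\sum_i\nabla^2g[D_i,D_i]\le\langle\mathcal H[E],-E_N\rangle+\beta_k\sum_i\|XD_i^\top+D_iX^\top\|_F^2/2$.
\end{itemize}
The last quantity is small when $X$ has small singular values available, which is where $r>r_\star$ enters: among $r$ columns, the smallest $r-r_\star$ can be charged against $\|E\|$.

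\textbf{Step 3 (the main obstacle).} The hardest step is optimizing over $t_1,t_2$ to reduce negativity of $\psi$ to a scalar inequality. Parametrize by $\cos\theta=\|E_T\|_F/\|E\|_F$ and by the fraction of $X$'s spectral mass in its bottom directions. Then bound the cross term $\langle\mathcal H[E_T],E_N\rangle\le\delta\|E_T\|\|E_N\|$ via the RIP polarization identity. Let $\sigma$ be the least singular value; the blockwise Eckart--Young bound (Lemma~\ref{lem:app-blockwise-ey}) gives
\[
\sigma^2\le\|E\|/\sqrt{r-r_\star+1}-\text{type}
\]
estimates, and the scalar clipping lemma (Lemma~\ref{lem:app-scalar-clipping}) handles the case distinction between mostly tangent, mostly normal, and mixed errors. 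Via Lemma~\ref{lem:app-admissible-geometry}, this should yield $\min_{t}\psi<0$ whenever $(\beta_k-\alpha_k)/\beta_k<1/(1+\sqrt{r_\star/r})$.

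Finally, substitute $\beta_k/\alpha_k=(1+\delta)/(1-\delta)$. The condition becomes $2\delta/(1+\delta)<1/(1+\sqrt{r_\star/r})$, i.e.\ $\sqrt{r_\star/r}<(1-\delta)/(2\delta)$, which is exactly \eqref{eq:main-rank-threshold}. Hence $E=0$, so $XX^\top=ZZ^\top$, and reading off the $(1,2)$ block gives $UV^\top=M_\star^\lambda$, which is globally optimal. I expect the delicate part to be the mixed regime, where the algebra of choosing $t_1/t_2$ must produce precisely the $\sqrt{r_\star/r}$ factor rather than a lossy $r_\star/r$.
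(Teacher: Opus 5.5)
\textbf{There is a genuine gap in Step~3.}

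Your architecture matches the paper's: KKT slack with $\widetilde S_\star Z=Z$, segment-averaged Hessian, $E=E_T+E_N$, and first- and second-order probes. Your $D_0=-(I-\tfrac12XX^\dagger)E_TX^{\dagger\top}$ is literally the paper's $D_0=(I-\tfrac12XX^\dagger)Z(X^\dagger Z)^\top-\tfrac12X$, since $E_TX^{\dagger\top}=EX^{\dagger\top}=X-Z(X^\dagger Z)^\top$. Your closing algebra is also right.

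The problem is Step~3, which carries the whole theorem. You cite Lemma~\ref{lem:app-admissible-geometry} without its content. The estimate you actually propose, bounding $\lambda_{\min}(X^\top X)$ by a multiple of $\|E\|_F/\sqrt{r-r_\star}$, cannot reach the stated threshold.

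What is missing is a \emph{coupled} constraint between two ratios. Let $a:=\|Z_\perp Z_\perp^\top\|_F/\|E\|_F$ and $b:=\lambda_{\min}(X^\top X)\operatorname{tr}(Z_\perp Z_\perp^\top)/(\|E\|_F\|Z_\perp Z_\perp^\top\|_F)$. The ratio $b$ is exactly what the $D_i$ cost: with $D_i=Z_\perp e_iu^\top$ and $t_2=\tau/a$, one gets $\tfrac{t_2}{2}\sum_i\|D_iX^\top+XD_i^\top\|_F^2=\tau b\|E\|_F^2$. The needed constraint is $a^2+\tfrac{r}{r_\star}b^2\le 1+[(b-a)_+]^2$.

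The paper proves it as follows.
\begin{enumerate}
\item Rotate to $X=[X_1^\top,0]^\top$, $Z=[Z_1^\top,Z_2^\top]^\top$.
\item Write $\|E\|_F^2=\|X_1X_1^\top-Z_1Z_1^\top\|_F^2+2\langle Z_2^\top Z_2,Z_1^\top Z_1\rangle+\|Z_2^\top Z_2\|_F^2$.
\item Minimize over $Z_1$ with von Neumann trace inequalities. This leaves $\|Z_2^\top Z_2\|_F^2+(r-r_\star)s^2+\sum_j[s_j^2-(s_j-d_j)_+^2]$, with the last sum handled by a scalar clipping bound.
\item Use $(\operatorname{tr}Z_2^\top Z_2)^2\le r_\star\|Z_2^\top Z_2\|_F^2$.
\end{enumerate}
The point is that one budget $\|E\|_F^2$ pays for $a^2\|E\|_F^2$ and for the $b$-terms simultaneously.

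A bound on $\lambda_{\min}(X^\top X)$ alone only yields something like $b^2\le r_\star/(r-r_\star)$, with $a$ free. That admits $a\to1$ and $b=\sqrt{r_\star/r}$. Let $\varrho=\sqrt{r/r_\star}$. At that configuration the best achievable ratio $\sup_{0\le\tau\le a}\frac{\sqrt{1-a^2}\sqrt{1-\tau^2}+(a-b)\tau}{1+b\tau}$, which must exceed the lifted curvature constant, tends to $\frac{1-b}{1+b}=\frac{\varrho-1}{\varrho+1}$. This is strictly below the required $\frac{\varrho}{1+\varrho}$.

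Even with the constraint in hand, you still need three further pieces:
\begin{itemize}
\item the explicit mixing $t_1=\sqrt{(1-\tau^2)/(1-a^2)}$, $t_2=\tau/a$;
\item the resulting bound $\psi\le-\beta_k\bigl[\sqrt{1-a^2}\sqrt{1-\tau^2}+(a-(1+\vartheta)b)\tau-\vartheta\bigr]\|E\|_F^2$;
\item the minimization of that ratio over the admissible region, whose worst case sits at $a/b=1+\varrho$ and gives exactly $\varrho/(1+\varrho)=1/(1+\sqrt{r_\star/r})$.
\end{itemize}

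\textbf{The constants in your Steps~1--2 do not support the condition you end with.}

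$\widehat L$ only sees the $(1,2)$-block of a lifted matrix. The diagonal blocks are controlled solely by the balancing term. With $g=4h$ and $\mu=\alpha_k/2$, the lifted operator $\mathcal H^{\mathrm{avg}}$ (twice the block-symmetrized averaged Hessian plus $2\mu J(\cdot)J$) satisfies only $\alpha_k\|B\|_F^2\le\langle\mathcal H^{\mathrm{avg}}[B],B\rangle\le(2\beta_k-\alpha_k)\|B\|_F^2$. Two corrections follow:
\begin{itemize}
\item The constant in your polarization step must be $\vartheta=(\beta_k-\alpha_k)/\beta_k=2\delta/(1+\delta)$, not $\delta=\delta_k^{\mathrm{LoRA}}$.
\item The pointwise Hessian terms must be charged at $(2\beta_k-\alpha_k)/2$ per unit $\|D_iX^\top+XD_i^\top\|_F^2$. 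Your $\beta_k/2$ is not a valid upper bound.
\end{itemize}
Your final inequality uses $\vartheta$ only because it is imported from the outline. Run literally, Step~3 with $\langle\mathcal H[E_T],E_N\rangle\le\delta\|E_T\|_F\|E_N\|_F$ would produce an unjustified condition in $\delta$.

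Two smaller points:
\begin{itemize}
\item A single bottom eigenvector $u$ of $X^\top X$ serves every $D_i=Z_\perp e_iu^\top$. You need no distinct ``extra'' columns, and $\sum_iD_iD_i^\top=Z_\perp Z_\perp^\top$ holds exactly for unit $u$, with no rescaling.
\item The PSD slack is $N=2\mu JZZ^\top J+2\lambda(I-\widetilde S_\star)$; the first summand comes from $JXX^\top J=JEJ+JZZ^\top J$. Since $\langle N,E_N\rangle\le0$, the favorable sign of $-\langle N,t_1E_T+t_2E_N\rangle$ needs $t_1\ge t_2$, which the restriction $\tau\le a$ guarantees.
\end{itemize}
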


In particular, if the same curvature bounds hold for all rank-at-most-\(r\) candidate updates, the rank condition rules out spurious local minima throughout the factorized landscape. The proof is given in Appendix \ref{sec:app-proof-theorem-exact-sosp}. 
The guarantee transfers to the standard LoRA objective because standard stationary points are already balanced.

\begin{proposition}[Stationarity of standard LoRA implies balance]
\label{prop:main-decay-only-balance}
Let \((U,V)\) be a first-order stationary point of the standard LoRA objective
in ~\eqref{eq:main-decay-only-objective}. Then
\(
    U^\top U = V^\top V .
\)
Consequently, if \((U,V)\) is a second-order stationary point of
~\eqref{eq:main-decay-only-objective}, then it is also a second-order
stationary point of the balanced objective in
~\eqref{eq:main-balanced-lora-objective}.
\end{proposition}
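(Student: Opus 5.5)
We compute the first-order conditions of \eqref{eq:main-decay-only-objective} directly. Write \(G:=\nabla\widehat L(UV^\top)\in\mathbb R^{n_1\times n_2}\). By the chain rule,
\[
\nabla_U\widetilde h_\lambda(U,V)=GV+\lambda U,
\qquad
\nabla_V\widetilde h_\lambda(U,V)=G^\top U+\lambda V .
\]
At a first-order stationary point both vanish, so \(GV=-\lambda U\) and \(G^\top U=-\lambda V\).

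Left-multiplying the first identity by \(U^\top\) gives \(U^\top G V=-\lambda U^\top U\). Transposing the second identity gives \(U^\top G=-\lambda V^\top\), and right-multiplying by \(V\) gives \(U^\top G V=-\lambda V^\top V\). Comparing the two expressions for \(U^\top GV\) yields \(\lambda(U^\top U-V^\top V)=0\). Since \(\lambda>0\), we conclude \(U^\top U=V^\top V\). This is the only place where the weight decay is essential.

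For the second claim, we compare \(h_{\mu,\lambda}=\widetilde h_\lambda+\phi\), where \(\phi(U,V):=\frac{\mu}{4}\|U^\top U-V^\top V\|_F^2\). Set \(B:=U^\top U-V^\top V\). Then
\[
\nabla_U\phi=\mu UB,\qquad \nabla_V\phi=-\mu VB,
\]
and both vanish at a balanced point, where \(B=0\). So the gradient of \(h_{\mu,\lambda}\) equals that of \(\widetilde h_\lambda\), which is zero. For the Hessian, \(\phi=\frac{\mu}{4}\|B\|_F^2\) with \(B\) quadratic in \((U,V)\). Along a direction \((\Delta_U,\Delta_V)\) we have
\[
\nabla^2\phi[\Delta,\Delta]=\frac{\mu}{2}\|DB[\Delta]\|_F^2+\frac{\mu}{2}\langle B, D^2B[\Delta,\Delta]\rangle .
\]
At \(B=0\) the second term vanishes. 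The first term is nonnegative, with \(DB[\Delta]=\Delta_U^\top U+U^\top\Delta_U-\Delta_V^\top V-V^\top\Delta_V\). Hence \(\nabla^2 h_{\mu,\lambda}\succeq\nabla^2\widetilde h_\lambda\succeq0\), and \((U,V)\) is an SOSP of \eqref{eq:main-balanced-lora-objective}.

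The only step needing care is this Hessian expansion of the balancing term at a balanced point, specifically confirming that the term involving \(B\) drops out. It is routine, and the argument otherwise requires no assumption on \(\widehat L\) beyond twice differentiability.
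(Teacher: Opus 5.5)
Your proof is correct and follows the paper's argument. From $GV+\lambda U=0$ and $G^\top U+\lambda V=0$ you compare two expressions for $U^\top G V$ to get $\lambda(U^\top U-V^\top V)=0$; the paper instead subtracts the two multiplied equations and separates the skew-symmetric part $U^\top GV-V^\top G^\top U$ from the symmetric part, which is essentially the same computation.

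For the second claim, the paper skips your explicit Hessian expansion. It notes that a balanced point globally minimizes the nonnegative $C^2$ function $\frac14\|U^\top U-V^\top V\|_F^2$, so that function's gradient vanishes and its Hessian is positive semidefinite there, which is exactly what your calculation verifies.
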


Combining Theorem~\ref{thm:main-exact-sosp} with
Proposition~\ref{prop:main-decay-only-balance} gives the following transfer.

\begin{corollary}[Global optimality of SOSPs for standard LoRA]
\label{cor:main-decay-only-transfer}
Under Assumption~\ref{asm:main-regularized-target} and the rank condition of Theorem~\ref{thm:main-exact-sosp}, every SOSP of the standard LoRA
objective \eqref{eq:main-decay-only-objective} whose product satisfies Assumption~\ref{asm:main-rsc-rss} with the same constants
is globally optimal and satisfies \(UV^\top = M_\star^\lambda\).
\end{corollary}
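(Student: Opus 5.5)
The corollary follows by chaining Proposition~\ref{prop:main-decay-only-balance} into Theorem~\ref{thm:main-exact-sosp}. The plan has three steps.

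\textbf{Step 1: balance at the stationary point.} Let $(U,V)$ be an SOSP of $\widetilde h_\lambda$ in \eqref{eq:main-decay-only-objective}. By Proposition~\ref{prop:main-decay-only-balance} it is balanced, $U^\top U=V^\top V$. Its first-order conditions are
\[
\nabla\widehat L(UV^\top)V+\lambda U=0,
\qquad
\nabla\widehat L(UV^\top)^\top U+\lambda V=0,
\]
which is the content underlying the proposition.

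\textbf{Step 2: transfer to the balanced objective.} I would check directly that $(U,V)$ is also an SOSP of $h_{\mu,\lambda}$ for $\mu=\alpha_k/2$, rather than only citing the proposition. Write $B(U,V):=\tfrac{\mu}{4}\|U^\top U-V^\top V\|_F^2$, so that $h_{\mu,\lambda}=\widetilde h_\lambda+B$.
\begin{itemize}
\item $B$ is a squared norm of the smooth map $(U,V)\mapsto U^\top U-V^\top V$. Its gradient is linear in $U^\top U-V^\top V$, which is zero here, so $\nabla B(U,V)=0$.
\item Its Hessian quadratic form is $\tfrac{\mu}{2}\|U^\top\dot U+\dot U^\top U-V^\top\dot V-\dot V^\top V\|_F^2\ge0$, the squared norm of the linearization, because the residual term vanishes at a balanced point.
\end{itemize}
Hence $\nabla h_{\mu,\lambda}(U,V)=\nabla\widetilde h_\lambda(U,V)=0$ and $\nabla^2 h_{\mu,\lambda}\succeq\nabla^2\widetilde h_\lambda\succeq0$. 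So $(U,V)$ is an SOSP of $h_{\mu,\lambda}$. This holds for every $\mu>0$, in particular for $\mu=\alpha_k/2$.

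\textbf{Step 3: apply the theorem.} Assumption~\ref{asm:main-rsc-rss} concerns only the product $M=UV^\top$ and the segment to $M_\star^\lambda$. It is therefore the same hypothesis with the same $\alpha_k,\beta_k,k$ for both objectives. Assumption~\ref{asm:main-regularized-target} and the rank condition \eqref{eq:main-rank-threshold} are assumed, so Theorem~\ref{thm:main-exact-sosp} applies directly. It gives $XX^\top=ZZ^\top$, so $UV^\top=M_\star^\lambda$.

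Global optimality for $\widetilde h_\lambda$ then needs one more observation. Both objectives, being infima over factorizations, are bounded below by $F_\lambda$ via $\|UV^\top\|_*\le\tfrac12(\|U\|_F^2+\|V\|_F^2)$, with equality for balanced factorizations. At our point this gives
\[
\widetilde h_\lambda(U,V)=h_{\mu,\lambda}(U,V)=F_\lambda(M_\star^\lambda),
\]
which is the global minimum value of $F_\lambda$, by the result in Appendix~\ref{sec:app-proof-prop-factor-decay}. So $(U,V)$ globally minimizes $\widetilde h_\lambda$.

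The main obstacle is Step~2's second-order claim. It has to hold for arbitrary directions $(\dot U,\dot V)$, not just balanced ones, and that requires the residual Hessian term of $B$, namely $\langle U^\top U-V^\top V,\cdot\rangle$, to vanish exactly. This is guaranteed by the balance given by Proposition~\ref{prop:main-decay-only-balance}.
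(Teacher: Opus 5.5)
Your proposal is correct and follows the paper's route. Proposition~\ref{prop:main-decay-only-balance} transfers the SOSP to $h_{\mu,\lambda}$, Theorem~\ref{thm:main-exact-sosp} then gives $UV^\top=M_\star^\lambda$, and the balanced nuclear-norm identity shows the point attains $F_\lambda(M_\star^\lambda)$. Your explicit gradient and Hessian computation for the balancing term simply unpacks the paper's observation that a balanced point globally minimizes this nonnegative penalty, so the penalty's gradient vanishes and its Hessian is positive semidefinite there.
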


The proofs of
Proposition~\ref{prop:main-decay-only-balance} and
Corollary~\ref{cor:main-decay-only-transfer} are given in
Appendix~\ref{sec:app-proof-standard-lora-transfer}. The rank condition reveals a continuous tradeoff between
parameterization and restricted geometry, rather than only a
single sufficient threshold. For a fixed LoRA-RIP level, it
quantifies how much rank over-parameterization is sufficient to
exclude spurious local minima. Conversely, for a fixed adapter
rank, the condition can be written as
\(
\delta_k^{\mathrm{LoRA}}
<
\frac{\sqrt{r/r_\star}}
     {2+\sqrt{r/r_\star}},
\)
showing that additional rank enlarges the range of restricted
geometries covered by the guarantee. This dual view also makes
the role of data geometry explicit: improving restricted
conditioning reduces the amount of parameterization needed for
the same global-optimality guarantee. Thus rank and data-induced
geometry act as complementary resources in shaping the LoRA
optimization landscape.

\section{Towards Improved Data Selection via Landscape Analysis}
\label{sec:data-dependent-curvature-lora}

\subsection{Decomposition of the LoRA-RIP metric}
\label{subsec:data-lora-rip-metric}
We next explain how the LoRA-RIP condition depends on the choice of training data.  Let \(\nu\)
denote the empirical distribution over the supervised prediction instances used for fine-tuning.
Changing the dataset, reweighting groups, or duplicating examples can be represented as changing \(\nu\), and therefore
changes the empirical cross-entropy objective
\begin{equation}
    \widehat L_\nu(M)
:=
\mathbb E_{x\sim \nu}
\left[
-\left\langle q_x,\log \operatorname{softmax}(z_x(M))\right\rangle
\right].
\end{equation}
Here \(z_x(M)\in\mathbb R^K\) is the pre-softmax logit vector produced by the model with LoRA update
matrix \(M\), \(q_x\) is the target distribution, and
\(p_x(M)=\operatorname{softmax}(z_x(M))\).  This formulation is important because the curvature
constants in the landscape theorem are exactly curvature constants of \(\widehat L_\nu\) along
low-rank LoRA directions.  Thus the LoRA-RIP constant is not only a property of the architecture; it
also depends on which examples are averaged in \(\widehat L_\nu\).  We write
\(M_{\nu,\star}^{\lambda}\) for the corresponding regularized target in
Assumption~\ref{asm:main-regularized-target}, and
\(r_{\nu,\star}:=\operatorname{rank}(M_{\nu,\star}^{\lambda})\). For a LoRA direction \(H\), define the first-order logit response
$
A_{x,M}[H]:=Dz_x(M)[H].
$
Since cross-entropy is invariant to adding the same constant to every logit, only the centered
response \(P_\perp A_{x,M}[H]\), with
\(P_\perp=I-\mathbf 1\mathbf 1^\top /K\), contributes meaningful curvature.  We measure how well
the data observe the direction \(H\) by
$
Q_\nu(M,H):=\mathbb E_{x\sim\nu}\|P_\perp A_{x,M}[H]\|_2^2 .
$
Large \(Q_\nu(M,H)\) means that the selected data are sensitive to this LoRA direction; small
\(Q_\nu(M,H)\) means that the direction is nearly invisible.  Good data should therefore give a
large coverage floor over all relevant low-rank directions, without placing excessive
curvature on only a few dominant directions. For cross-entropy, the Hessian decomposes into a Gauss--Newton term and a residual term:
\begin{equation}
\label{eq:main-ce-hessian-decomp}
\left\langle
\nabla^2\widehat L_\nu(M)[H],H
\right\rangle
=
G_\nu(M,H)+R_\nu(M,H).
\end{equation}
To define these terms, let
\[
\mathcal B_{x,M}[H,H]:=D^2z_x(M)[H,H],
\qquad
\Omega_x(M)
:=
\operatorname{Diag}(p_x(M))-p_x(M)p_x(M)^\top .
\]
Here \(\mathcal B_{x,M}[H,H]\) is the second-order logit response, and
\(\Omega_x(M)\) is the softmax covariance.  Then
\[
\begin{aligned}
G_\nu(M,H)
&:=\mathbb E_{x\sim\nu}
\langle A_{x,M}[H],\Omega_x(M)A_{x,M}[H]\rangle,\\
R_\nu(M,H)
&:=\mathbb E_{x\sim\nu}
\langle p_x(M)-q_x,\mathcal B_{x,M}[H,H]\rangle.
\end{aligned}
\]
The term \(G_\nu\) captures curvature created by first-order logit changes, weighted by the local
softmax covariance.  The term \(R_\nu\) is the residual Hessian term caused by the nonlinearity of
the logit map \(M\mapsto z_x(M)\).  In particular, if the logits were affine in \(M\), then
\(\mathcal B_{x,M}[H,H]=0\), and hence \(R_\nu(M,H)=0\). Highly concentrated softmax predictions limit the softmax-side weighting of first-order responses,
motivating the joint consideration of saturation and directional coverage (see
Appendix~\ref{sec:app-data-ce-hessian} and
Appendix~\ref{sec:app-data-softmax-saturation}).

For fixed \(\nu\), \(r\), over the admissible family \(\mathcal T_k(\nu,r)\), defined precisely in
Appendix~\ref{sec:app-section5-proofs}, we summarize the data-side curvature by five quantities.
The constants \(c_{\nu,k}\) and \(C_{\nu,k}\) are the minimum and maximum of the coverage
\(Q_\nu(M,H)\).  Thus \(c_{\nu,k}\) is the weakest observed LoRA-relevant direction, while
\(C_{\nu,k}\) measures the strongest observed direction.  The constants
\(\omega_{\nu,k}^-\) and \(\omega_{\nu,k}^+\) measure the lower and upper softmax-weighting strength
on these observed logit responses, and \(\rho_{\nu,k}\) bounds the residual term \(R_\nu\).  The
data-side restricted curvature condition is
\[
c_{\nu,k}>0,
\qquad
0<\omega_{\nu,k}^-\le \omega_{\nu,k}^+<\infty,
\qquad
\rho_{\nu,k}<\omega_{\nu,k}^-c_{\nu,k}.
\]
This is a restricted observability condition, not a global strong-convexity assumption for
cross-entropy.  It only requires the selected data to see the low-rank LoRA directions used by the
theorem, and it requires the residual Hessian term to stay below the Gauss--Newton curvature floor. Under this condition, the cross-entropy loss has valid restricted curvature constants
\[
\alpha_{\nu,k}^{\mathrm{CE}}
=
\omega_{\nu,k}^-c_{\nu,k}-\rho_{\nu,k},
\qquad
\beta_{\nu,k}^{\mathrm{CE}}
=
\omega_{\nu,k}^+C_{\nu,k}+\rho_{\nu,k}.
\]
Substituting these constants into the LoRA-RIP normalization gives the conservative certificate
\begin{equation}
\label{eq:main-ce-lora-rip-certificate}
\delta_{\nu,k}^{\mathrm{LoRA}}
\le
\overline\delta_{\nu,k}^{\mathrm{LoRA}}
:=
\frac{
\omega_{\nu,k}^+C_{\nu,k}
-
\omega_{\nu,k}^-c_{\nu,k}
+
2\rho_{\nu,k}
}{
\omega_{\nu,k}^+C_{\nu,k}
+
\omega_{\nu,k}^-c_{\nu,k}
}.
\end{equation}
Thus data selection can improve the LoRA-RIP certificate by increasing the weakest observed
LoRA-relevant directions, avoiding excessive concentration on a few dominant directions, and
reducing the residual nonlinear curvature.  Duplicate or imbalanced data can hurt because they
reinforce already-visible directions instead of improving the missing-direction floor, formalized in Appendix~\ref{sec:app-data-descriptor-identities}.

\subsection{Data Selection Criteria Based on LoRA-RIP}
\label{subsec:main-data-selection-curvature-design}

The certificate above turns data selection into the problem of improving the data-side LoRA-RIP
certificate, which leads to several selection rules we discuss below:

\paragraph{Latent-space coverage.}
Here latent-space coverage refers to whether hidden-state directions are both sensitive to data change and visible through the downstream Jacobian in the projected logit space.
Coverage primarily affects \(c_{\nu,k}\) and \(C_{\nu,k}\). For a rank-one direction
\(H=uv^\top\), with \(u\in\mathbb R^{n_1}\) and \(v\in\mathbb R^{n_2}\),
\(
A_{x,M}[uv^\top]
=
J_{x,M}u\langle v,a_x(M)\rangle ,
\) where \(a_x(M)\in\mathbb R^{n_2}\) is the hidden state entering this matrix, and
\(J_{x,M}\in\mathbb R^{K\times n_1}\) is the Jacobian from the adapted-layer output to the logits.
Therefore,

\begin{equation}
\label{eq:main-rank-one-projected-coverage}
Q_\nu(M,uv^\top)
=
\mathbb E_\nu
\left[
\langle v,a_x(M)\rangle^2
\,
\|P_\perp J_{x,M}u\|_2^2
\right].
\end{equation}

This formula shows that coverage has two sides: the selected examples must excite the input-side
hidden direction \(v\), and the downstream Jacobian must make the output-side direction \(u\)
visible in the shift-orthogonal logit space. If a direction is weakly represented, then
\(Q_\nu(M,H)\) can be small and \(c_{\nu,k}\) decreases. If examples are repeatedly drawn from the
same mode, \(C_{\nu,k}\) can increase without improving \(c_{\nu,k}\), which worsens the
LoRA-RIP certificate.

This perspective can be formalized through group-balanced selection. Suppose
\(
\nu_\pi=\sum_{g=1}^G\pi_g\nu_g
\)
is a mixture over groups formed by hidden-state, gradient, or logit-response similarity, where \(g\in\{1,\ldots,G\}\) indexes groups of examples, \(\nu_g\) is the empirical distribution
within group \(g\), and \(\pi_g\ge0\) is the sampling mass assigned to that group with
\(\sum_{g=1}^G\pi_g=1\). Then, for
fixed \(M,H\),
\[
Q_{\nu_\pi}(M,H)
=
\sum\nolimits_{g=1}^G
\pi_g Q_{\nu_g}(M,H).
\]

Appendix~\ref{sec:app-data-descriptor-identities} formalizes how such group masses affect the
coverage floor and ceiling. Although the infimum and supremum defining \(c_{\nu,k}\) and \(C_{\nu,k}\) do not decompose
linearly, this identity shows how group weights move curvature mass across directions. Increasing
the weight of groups that cover weak directions can raise \(c_{\nu,k}\); oversampling redundant
groups can mainly raise \(C_{\nu,k}\). This motivates allocating data across complementary response
groups while avoiding severe undercoverage. We use approximately
balanced group quotas as a practical implementation of this
principle in our experiments. The clustering criteria shall be proxies for LoRA-relevant low-rank directions, such as hidden states, gradients with respect to the adapted matrix, or
projected logit-response sketches. In Section~\ref{sec:experiments}, we use the gradient sketches as the practical proxy. Appendix~\ref{sec:app-data-gradient-sketch-proxy} explains why
such sketches are aligned with the Gauss--Newton coverage operator.

\paragraph{Softmax saturation.}
Saturation primarily affects \(\omega_{\nu,k}^-\) and \(\omega_{\nu,k}^+\). The relevant object is
the softmax covariance \(\Omega_x(M)\), not the loss value alone.
For any logit response \(s=A_{x,M}[H]\), the Gauss--Newton contribution is
\(\langle s,\Omega_x(M)s\rangle\), which is upper bounded by
\(\lambda_{\max}(\Omega_x(M))\|s\|_2^2\). Therefore, when \(\Omega_x(M)\) is small, the same
covered direction produces only weak softmax-weighted curvature.
If
\(p_{\max}=1-\epsilon\), then
\[
\lambda_{\max}(\Omega_x(M))
\le
\operatorname{tr}(\Omega_x(M))
=
1-\|p_x(M)\|_2^2
\le
2\epsilon-\epsilon^2 .
\]
Appendix~\ref{sec:app-data-softmax-saturation} gives the variance identity behind
\(\Omega_x(M)\) and the corresponding curvature-leverage bound. Thus highly confident examples contribute little softmax-weighted curvature even if their LoRA-relevant
directions are covered; they can reduce \(\omega_{\nu,k}^-\). On the other hand, extremely
ambiguous examples are not automatically optimal: their curvature may concentrate on already-covered
directions, increasing \(\omega_{\nu,k}^+C_{\nu,k}\), or they may correspond to noisy examples that
increase the residual gap. These observations motivate confidence-aware selection alongside
directional coverage, favoring examples that provide nontrivial softmax curvature without excessively concentrating it on a few directions.




\section{Experiments}
\label{sec:experiments}

We run some numerical experiments to validate our qualitative view on data selection and landscape results. Relevant code can be found in the supplementary materials.

\paragraph{Experimental setup.}
We consider one language task and one vision task. For NLP, we fine-tune MPNet-base
~\citep{songMPNetMaskedPermuted2020} on MNLI~\citep{williams-etal-2018-broad}, a natural-language
inference task in GLUE~\citep{wang-etal-2018-glue}. For vision, we fine-tune a ViT-B/16 model
~\citep{dosovitskiyImageWorth16x162020} on CIFAR-100~\citep{krizhevskyLearningMultipleLayers}. In
each run, we apply LoRA to the query projection in layer/block 10. Each experiment controls all other
training choices except for the single factor being tested. Full protocols and complete results for the main-text experiments
are provided in Appendix~\ref{sec:app-experiments}.
We further offer
curvature diagnostics, multi-seed evaluations, and robustness
studies across datasets and adaptation settings in Appendix~\ref{sec:app-supplementary-experiments}.

\paragraph{Rank sweep.}
The rank sweep isolates the role of the search rank while keeping
the data distribution fixed. Because our theory concerns the
cross-entropy objective rather than classification accuracy alone,
we report validation CE as the primary metric.
Figure~\ref{fig:exp-rank-sweep-training} shows an extended sweep
over \(r\in\{1,4,16,32,64,128,256,512\}\).
Increasing rank leads to clear reductions in validation CE on MNLI,
while the gains on CIFAR-100 are considerably smaller, although
higher ranks still provide modest improvements overall.
These trends are qualitatively consistent with the optimization role of rank highlighted by our rank--LoRA-RIP analysis.
The landscape diagnostics in
Appendix~\ref{sec:app-landscape-diagnostics} provide complementary
evidence: higher-rank runs reach lower-loss states with more favorable
local second-order geometry in the shared diagnostic subspace.
Together, these results connect the empirical effect of rank to the
optimization perspective developed in our theory.

\begin{figure}[!htbp]
    \centering
    \includegraphics[width=\linewidth]{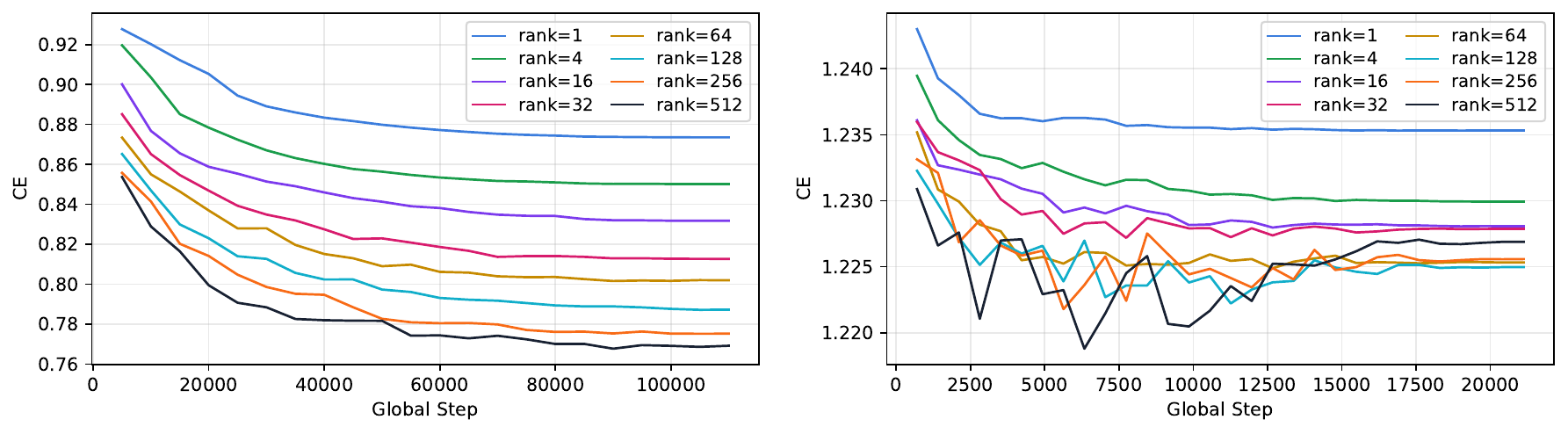}
    \caption{Rank-sweep validation CE. Left: MNLI with MPNet. Right: CIFAR-100 with ViT-B/16.}
    \label{fig:exp-rank-sweep-training}
\end{figure}
\begin{table}[!htbp]
\centering
\small
\caption{Representative rank-sweep summary at the best validation checkpoint.}
\label{tab:exp-rank-heldout}
{\setlength{\tabcolsep}{4.5pt}
\begin{tabular}{rrrr@{\qquad}rrrr}
\toprule
Rank & \multicolumn{3}{c}{MNLI} & Rank & \multicolumn{3}{c}{CIFAR-100} \\
\cmidrule(lr){2-4}\cmidrule(lr){6-8}
& Val acc& Test acc& Val CE & & Val acc& Test acc& Val CE \\
\midrule
4   & 0.6199 & 0.6167 & 0.8501 & 4   & 0.7392 & 0.7267 & 1.2299 \\
16  & 0.6360 & 0.6324 & 0.8318 & 16  & 0.7400 & 0.7251 & 1.2291 \\
64  & 0.6547 & 0.6506 & 0.8018 & 64  & 0.7414 & 0.7279 & 1.2254 \\
256 & 0.6696 & 0.6634 & 0.7752 & 256 & 0.7422 & 0.7318 & 1.2244 \\
\bottomrule
\end{tabular}
}
\end{table}

\paragraph{Gradient-coverage intervention.}
The second experiment tests the data-selection prediction from
Section~\ref{subsec:main-data-selection-curvature-design} on MNLI. We keep the subset size and label
counts fixed, and change only how examples are distributed across gradient groups. These groups are
built from a random-projected margin-Jacobian feature at the adapted matrix, so they act as a
practical proxy for which LoRA directions the data cover; see
Appendix~\ref{sec:app-data-gradient-sketch-proxy}. We compare balanced coverage with light and
heavy long-tail coverage. Figure~\ref{fig:exp-gradient-coverage-steps} shows that balanced coverage performs best overall.
Long-tail sampling hurts validation accuracy, especially under heavy long-tail sampling and at
smaller ranks. This matches the LoRA-RIP view: if too many examples come from the same gradient
groups, the data repeatedly cover the same directions while leaving other useful directions weakly
observed. In the notation of Section~\ref{subsec:main-data-selection-curvature-design}, this worsens
the balance between the coverage floor \(c_{\nu,k}\) and ceiling \(C_{\nu,k}\).

The gap shrinks as rank increases. By \(r=64\), balanced and light long-tail sampling have similar
validation accuracy, although balanced sampling still gives better test accuracy. The remaining gains at larger ranks underscore the complementary roles of adapter rank and data composition,
consistent also with our landscape results. Additional multi-seed robustness and curvature diagnostics are provided in
Appendices~\ref{sec:app-multiinit},
\ref{sec:app-data-diagnostics}, and
\ref{sec:app-trajectory-diagnostics}.

\begin{figure}[!htbp]
    \centering
    \includegraphics[width=\linewidth]{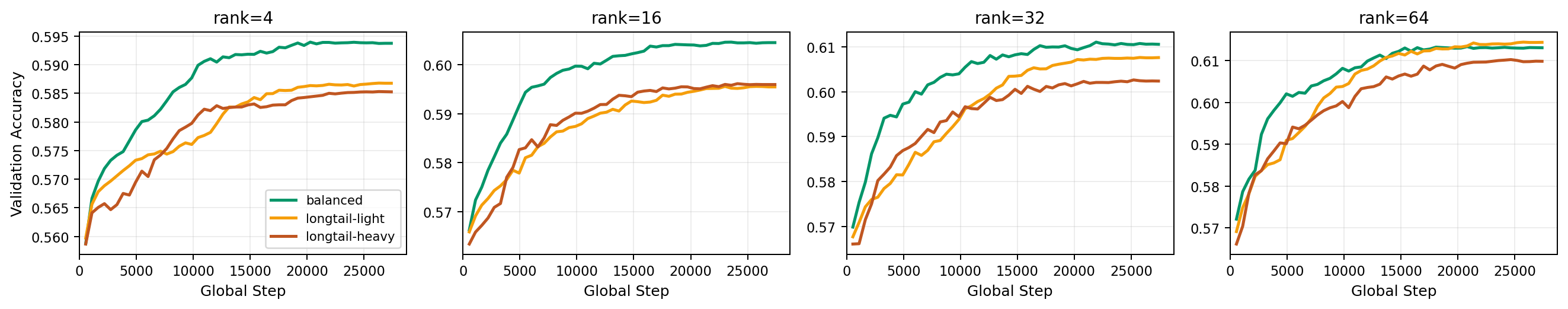}
    \caption{Validation-accuracy trajectories for the MNLI gradient-coverage intervention.}
    \label{fig:exp-gradient-coverage-steps}
\end{figure}
\paragraph{Saturation-band intervention.}
The third experiment evaluates confidence-aware selection while keeping
subset size, label counts, and gradient-group quotas fixed. Within each
group, we select easy, middle, or near-boundary examples using the shared
reference model; Appendix~\ref{sec:app-saturation-band} describes the
construction. Figure~\ref{fig:exp-saturation-band-steps} shows that both
Middle and Boundary outperform Easy in the controlled MNLI comparison.
The multi-seed results in Appendix~\ref{sec:app-saturation-diagnostics}
also show higher mean validation accuracy for both less-saturated bands
than for Easy at every tested rank, while their relative ordering varies
with rank. These performance results are qualitatively consistent with
the limited softmax-side curvature leverage of highly saturated examples
in Section~\ref{subsec:main-data-selection-curvature-design}.
The practical implication is to use confidence alongside directional
coverage and avoid concentrating the data budget on highly saturated
examples.

\begin{figure}[!htbp]
    \centering
    \includegraphics[width=\linewidth]{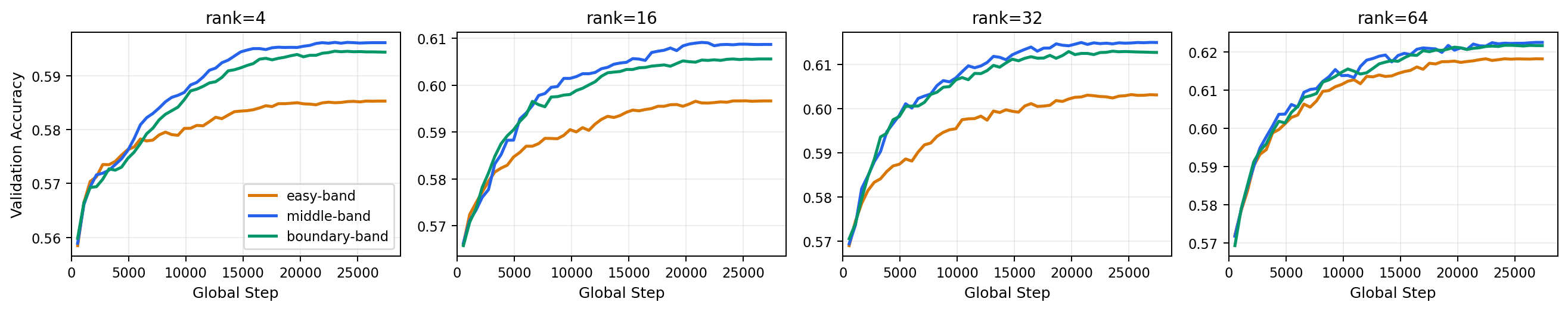}
    \caption{Validation-accuracy trajectories for the MNLI saturation-band intervention.}
    \label{fig:exp-saturation-band-steps}
\end{figure}
\FloatBarrier
\section{Conclusion}

We studied how adapter rank and fine-tuning data jointly shape
LoRA's optimization landscape under cross-entropy loss.
Under the stated assumptions, rank over-parameterization
relative to the target rank rules out spurious local minima,
with the required factor explicitly determined by our
data-dependent LoRA-RIP constant $\delta_k^{\mathrm{LoRA}}$. The guarantee also covers
LoRA-RIP values above $1/3$, quantifying how additional rank
can compensate for less favorable geometry to achieve a benign landscape.
Our framework further motivates data selection under a fixed
rank budget, yielding principles based on latent-space
coverage and pre-tuning softmax saturation.
Experiments support these insights into the interplay
between rank and data composition.
However, most of the bounds presented in this work are sufficient, and an important future direction is to shorten their gap with necessary ones.

\bibliographystyle{plainnat}
\bibliography{references,references_diagnostics}

\clearpage
\appendix
\setcounter{figure}{0}
\setcounter{table}{0}
\renewcommand{\thefigure}{\thesection.\arabic{figure}}
\renewcommand{\thetable}{\thesection.\arabic{table}}

\section{Appendix Overview}

The appendix contains the detailed proofs for the matrix-space reformulation and the
rank--LoRA-RIP theorem in Sections~3--5:
\begin{itemize}[leftmargin=1.5em]
    \item Appendix~\ref{sec:app-sosp-definitions} gives the full definition of first- and second-order stationary point.
    \item Appendix~\ref{sec:app-proof-prop-factor-decay} proves that $\ell_2$ regularization induces a nuclear-norm target in Assumption \ref{asm:main-regularized-target}.
    \item Appendix~\ref{sec:app-proof-theorem-exact-sosp} gives the detailed proof of Theorem~\ref{thm:main-exact-sosp}.
    \item Appendix~\ref{sec:app-envelope-formulation} records the admissible-class / universal-envelope formulation behind the certificate argument.
    \item Appendix~\ref{sec:app-proof-standard-lora-transfer} proves Proposition~\ref{prop:main-decay-only-balance} and Corollary~\ref{cor:main-decay-only-transfer}.
    \item Appendix~\ref{sec:app-section5-proofs} derives the cross-entropy Hessian identity, the data-side LoRA-RIP certificate, the coverage and saturation facts behind Section~\ref{sec:data-dependent-curvature-lora}.
    \item Appendix~\ref{sec:app-experiments} gives the experimental protocols, compute resources,
    full result tables, and additional plots for Section~\ref{sec:experiments}.
    \item Appendix~\ref{sec:app-supplementary-experiments} presents landscape diagnostics,
    multi-seed stability, and cross-backbone, cross-dataset, and adapter-location evaluations.
    \item Appendix~\ref{app-limit} gives the limitations.
\end{itemize}

\input{appendix_proofs}
\input{appendix_experiments}


\end{document}

%% file: appendix_proofs.tex
\section{Detailed Proofs for Sections 3 and 4}
\subsection{Stationarity notions}
\label{sec:app-sosp-definitions}

Let
\(
g:\mathbb{R}^{m\times n}\to\mathbb{R}
\)
be twice-continuously differentiable. We say that
\(
Y\in\mathbb{R}^{m\times n}
\)
is a first-order stationary point if
\(
\nabla g(Y)=0.
\)
We say that
\(
Y\in\mathbb{R}^{m\times n}
\)
is a second-order stationary point (SOSP) if
\(
\nabla g(Y)=0
\)
and
\(
\nabla^2 g(Y)[H,H]\ge 0
\)
for any direction
\(
H\in\mathbb{R}^{m\times n}.
\)
We say that \(Y\) is a local minimum if there exists an open ball \(B\) containing \(Y\) such that
\(
g(Y)\le g(Y')
\)
for every \(Y'\in B\). \(Y\) is a global minimum if
\(
g(Y)\le g(Y')
\)
for every \(Y'\in\mathbb{R}^{m\times n}\). A spurious local minimum is a local minimum that is
not global. A standard
result shows that gradient descent with random initialization avoids strict saddles almost surely;
when the iterates converge, this makes SOSPs the relevant local optimality objects
~\citep{leeGradientDescentOnly2016}. In particular, every local minimum is an SOSP.

\subsection{Proof that $l_2$ regularization induces a nuclear-norm target}
\label{sec:app-proof-prop-factor-decay}

\begin{proof}
Fix a matrix \(M\in\mathbb R^{n_1\times n_2}\) with \(\operatorname{rank}(M)\le r\). For every
factorization \(UV^\top=M\), the balancing term is nonnegative, and the standard variational
characterization of the nuclear norm gives
\[
\|M\|_*
\le
\frac12\bigl(\|U\|_F^2+\|V\|_F^2\bigr).
\]
Therefore
\[
\frac{\mu}{4}\,\|U^\top U - V^\top V\|_F^2
+
\frac{\lambda}{2}\,\bigl(\|U\|_F^2+\|V\|_F^2\bigr)
\ge
\lambda\|M\|_*.
\]
Taking the infimum over all pairs \((U,V)\) with \(UV^\top=M\) yields the lower bound
\[
\inf_{U,V:\,UV^\top=M}
\left[
\frac{\mu}{4}\,\|U^\top U - V^\top V\|_F^2
+
\frac{\lambda}{2}\,\bigl(\|U\|_F^2+\|V\|_F^2\bigr)
\right]
\ge
\lambda\|M\|_*.
\]

To prove the reverse inequality, let
\[
M=P\Sigma Q^\top
\]
be a compact SVD, where \(\Sigma\in\mathbb R^{s\times s}\) and \(s=\operatorname{rank}(M)\le r\).
Define the square-root lift
\[
U:=[P\Sigma^{1/2},\,0_{n_1\times(r-s)}],
\qquad
V:=[Q\Sigma^{1/2},\,0_{n_2\times(r-s)}].
\]
Then \(UV^\top=M\), and
\[
U^\top U = V^\top V = \operatorname{diag}(\Sigma,0_{r-s}),
\]
so the balancing penalty vanishes. Moreover,
\[
\frac{\lambda}{2}\bigl(\|U\|_F^2+\|V\|_F^2\bigr)
=
\frac{\lambda}{2}\bigl(\operatorname{tr}\Sigma+\operatorname{tr}\Sigma\bigr)
=
\lambda\,\operatorname{tr}(\Sigma)
=
\lambda\|M\|_*.
\]
Hence the infimum over factorizations with fixed product \(M\) is \(\lambda\|M\|_*\).

Now minimize first over all factor pairs with fixed product \(M\), and then over all matrices with
\(\operatorname{rank}(M)\le r\). This gives
\[
\inf_{U,V} h_{\mu,\lambda}(U,V)
=
\inf_{\operatorname{rank}(M)\le r}
\bigl[\widehat L(M)+\lambda\|M\|_*\bigr]
=
\inf_{\operatorname{rank}(M)\le r} F_\lambda(M),
\]
which is the matrix-space equivalence used in the main text.
\end{proof}

\subsection{Detailed Proof of Theorem~\ref{thm:main-exact-sosp}}
\label{sec:app-proof-theorem-exact-sosp}

The proof is organized around one lifted error matrix.  We compare the candidate factors with the
balanced lift of the regularized matrix target, and then show that any nonzero lifted error creates
a descent-or-negative-curvature certificate.  This is why the proof is carried out in the stacked
factor variable \(X\), rather than directly in the product \(UV^\top\): the SOSP condition is a
factor-space condition, while the curvature assumption is a matrix-space condition.  The stacked
notation lets us pass between the two.

We retain a general balancing weight \(0<\mu<\alpha_k\) throughout the certificate
construction and specialize to \(\mu=\alpha_k/2\) when concluding the main theorem.
The corresponding general-\(\mu\) guarantee is stated in
Appendix~\ref{sec:app-general-mu}.

Throughout this subsection, fix a candidate pair \((U,V)\), write
\[
M:=UV^\top,
\qquad
\Delta:=M-M_\star^\lambda,
\qquad
X:=\begin{bmatrix}U\\V\end{bmatrix},
\qquad
Z:=\begin{bmatrix}U_\star\\V_\star\end{bmatrix},
\qquad
E:=XX^\top-ZZ^\top,
\]
and let
\[
J:=\operatorname{diag}(I_{n_1},-I_{n_2}).
\]
Recall that \(U_\star V_\star^\top=M_\star^\lambda\) and \(U_\star^\top U_\star=V_\star^\top V_\star\),
so
\[
Z^\top J Z = 0.
\]
If \(E=0\), then the \((1,2)\)-block of \(XX^\top=ZZ^\top\) already gives
\(UV^\top=M_\star^\lambda\).  The proof therefore assumes \(E\neq0\) and derives a contradiction
with second-order stationarity.

\paragraph{KKT slack for the regularized target.}
The theorem targets the nuclear-norm-regularized matrix-space minimizer \(M_\star^\lambda\), so the
correct stationarity condition is a KKT condition rather than \(\nabla \widehat L(M_\star^\lambda)=0\).
The slack form below is the device that keeps the regularized target in the proof: it creates a
positive-semidefinite term that vanishes on the target lift \(Z\), and hence can only help when we
upper-bound the certificate.

\begin{proposition}[KKT slack form of the regularized target]
\label{prop:app-kkt-slack}
There exists a matrix
\[
S_\star=P_\star Q_\star^\top + W_\star \in \partial \|M_\star^\lambda\|_*
\]
such that
\begin{equation}
\label{eq:app-kkt-stationarity}
\nabla \widehat L(M_\star^\lambda)+\lambda S_\star = 0,
\end{equation}
where
\[
P_\star^\top W_\star = 0,
\qquad
W_\star Q_\star = 0,
\qquad
\|W_\star\|_{\mathrm{op}}\le 1.
\]
If
\[
\widetilde S_\star:=
\begin{bmatrix}
0 & S_\star\\
S_\star^\top & 0
\end{bmatrix},
\]
then
\begin{equation}
\label{eq:app-slack-psd-annihilate}
I-\widetilde S_\star \succeq 0,
\qquad
(I-\widetilde S_\star)Z=0.
\end{equation}
\end{proposition}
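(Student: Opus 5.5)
The plan has three steps: get the KKT equation from convex optimality, identify the subdifferential of the nuclear norm, and then verify the two block-matrix claims by direct computation using the balanced square-root lift $Z$.

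\textbf{Step 1: the KKT equation.} I would first obtain \eqref{eq:app-kkt-stationarity} from first-order optimality of $F_\lambda=\widehat L+\lambda\|\cdot\|_*$ at its unconstrained minimizer $M_\star^\lambda$ (Assumption~\ref{asm:main-regularized-target}). Since $\widehat L$ is differentiable and $\|\cdot\|_*$ is convex and finite, the sum rule for Clarke/Fréchet subdifferentials gives $0\in\nabla\widehat L(M_\star^\lambda)+\lambda\,\partial\|M_\star^\lambda\|_*$. This is a necessary condition at any local minimizer, so convexity of $\widehat L$ is not needed.

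\textbf{Step 2: the form of $S_\star$.} I would invoke Watson's characterization of the nuclear-norm subdifferential. With compact SVD $M_\star^\lambda=P_\star\Sigma_\star Q_\star^\top$, every subgradient has the form $P_\star Q_\star^\top+W_\star$ with $P_\star^\top W_\star=0$, $W_\star Q_\star=0$ and $\|W_\star\|_{\mathrm{op}}\le1$. Consequently $\|S_\star\|_{\mathrm{op}}\le1$, because $P_\star Q_\star^\top$ and $W_\star$ act on orthogonal row and column spaces.

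\textbf{Step 3: positive semidefiniteness of $I-\widetilde S_\star$.} The eigenvalues of the symmetric dilation $\widetilde S_\star$ are $\pm\sigma_i(S_\star)$, together with zeros. Hence $\lambda_{\max}(\widetilde S_\star)=\|S_\star\|_{\mathrm{op}}\le1$, which gives $I-\widetilde S_\star\succeq0$.

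\textbf{Step 4: the annihilation identity.} I would fix the convention that $Z$ is the balanced square-root lift, $U_\star=P_\star\Sigma_\star^{1/2}R$ and $V_\star=Q_\star\Sigma_\star^{1/2}R$ for some orthogonal $R$. This is consistent with $U_\star^\top U_\star=V_\star^\top V_\star$ and with the construction in Appendix~\ref{sec:app-proof-prop-factor-decay}.
\begin{itemize}
\item Using $Q_\star^\top Q_\star=I$ and $W_\star Q_\star=0$, I compute $S_\star V_\star=P_\star Q_\star^\top Q_\star\Sigma_\star^{1/2}R=U_\star$.
\item Using $P_\star^\top P_\star=I$ and $W_\star^\top P_\star=0$, I compute $S_\star^\top U_\star=Q_\star\Sigma_\star^{1/2}R=V_\star$.
\item Therefore $\widetilde S_\star Z=\begin{bmatrix}S_\star V_\star\\ S_\star^\top U_\star\end{bmatrix}=Z$, that is, $(I-\widetilde S_\star)Z=0$.
\end{itemize}

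\textbf{Main obstacle.} The delicate point is justifying that an arbitrary balanced factorization $Z$ with $U_\star V_\star^\top=M_\star^\lambda$ has this SVD-aligned form. If $Z$ were only assumed to satisfy the product identity, the annihilation identity could fail.

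My plan is to prove the alignment directly, as follows.
\begin{itemize}
\item Balancedness gives $U_\star^\top U_\star=V_\star^\top V_\star=:G$.
\item It follows that $M_\star^\lambda M_\star^{\lambda\top}=U_\star GU_\star^\top$ and $M_\star^{\lambda\top} M_\star^\lambda=V_\star GV_\star^\top$.
\item Writing the polar form $U_\star=\widetilde P G^{1/2}$, one gets $M_\star^\lambda M_\star^{\lambda\top}=\widetilde P G^2\widetilde P^\top$.
\item Matching this with $P_\star\Sigma_\star^2P_\star^\top$ shows $U_\star U_\star^\top=P_\star\Sigma_\star P_\star^\top$, and symmetrically $V_\star V_\star^\top=Q_\star\Sigma_\star Q_\star^\top$.
\end{itemize}
From these identities, $U_\star=P_\star\Sigma_\star^{1/2}R$ and $V_\star=Q_\star\Sigma_\star^{1/2}R'$ for partial isometries $R,R'$. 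The product identity $U_\star V_\star^\top=M_\star^\lambda$ then forces $RR'^\top$ to equal the identity on the relevant range, so $R=R'$ there. With that in hand, the computation in Step 4 goes through for any balanced $Z$.
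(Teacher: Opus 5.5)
Your proof is correct and follows the paper's route: Fermat's rule for $F_\lambda$, the standard nuclear-norm subgradient formula, the bound $\|\widetilde S_\star\|_{\mathrm{op}}=\|S_\star\|_{\mathrm{op}}\le 1$, and the direct computations $S_\star V_\star=U_\star$ and $S_\star^\top U_\star=V_\star$. The one addition is your alignment argument showing that any balanced factorization $Z$ is the square-root lift up to a common orthogonal factor; the paper assumes this tacitly when it writes $V_\star=Q_\star\Sigma_\star^{1/2}$, so your version makes that step explicit.
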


\begin{proof}
Since \(M_\star^\lambda\) minimizes \(F_\lambda(M)=\widehat L(M)+\lambda\|M\|_*\) and \(\widehat L\)
is differentiable, Fermat's rule gives
\[
0\in \nabla \widehat L(M_\star^\lambda)+\lambda\,\partial\|M_\star^\lambda\|_*.
\]
Hence \eqref{eq:app-kkt-stationarity} holds for some
\(S_\star\in\partial\|M_\star^\lambda\|_*\). Writing the compact SVD
\(
M_\star^\lambda=P_\star\Sigma_\star Q_\star^\top
\),
the standard subgradient formula for the nuclear norm yields
\[
S_\star=P_\star Q_\star^\top+W_\star,
\qquad
P_\star^\top W_\star=0,
\qquad
W_\star Q_\star=0,
\qquad
\|W_\star\|_{\mathrm{op}}\le 1.
\]

Because \(\|\widetilde S_\star\|_{\mathrm{op}}=\|S_\star\|_{\mathrm{op}}\le 1\), we have
\(I-\widetilde S_\star\succeq 0\). Moreover,
\[
S_\star V_\star
=
(P_\star Q_\star^\top+W_\star)Q_\star\Sigma_\star^{1/2}
=
P_\star\Sigma_\star^{1/2}
=
U_\star,
\]
because \(W_\star Q_\star=0\). Similarly,
\[
S_\star^\top U_\star = V_\star.
\]
Therefore
\[
\widetilde S_\star Z
=
\begin{bmatrix}
S_\star V_\star\\
S_\star^\top U_\star
\end{bmatrix}
=
\begin{bmatrix}
U_\star\\
V_\star
\end{bmatrix}
=
Z,
\]
which proves \eqref{eq:app-slack-psd-annihilate}.
\end{proof}

\paragraph{Lifted operators and differential identities.}
We next rewrite the first- and second-order conditions of the factorized objective in lifted
coordinates.  The averaged operator below comes from expanding the matrix-space loss along the
segment from \(M_\star^\lambda\) to \(M=UV^\top\); the pointwise operator is the Hessian that appears
in the SOSP condition at the current factor pair.  Both inherit the same lower and upper curvature
constants after lifting, up to the balancing curvature.

Define the segment-average Hessian
\[
\overline H_\Delta[H]
:=
\int_0^1 \nabla^2 \widehat L(M_\star^\lambda+t\Delta)[H]\,dt,
\]
its block lift
\[
\widetilde{\overline H}_\Delta[E]
:=
\begin{bmatrix}
0 & \overline H_\Delta[E_{12}]\\
\overline H_\Delta[E_{12}]^\top & 0
\end{bmatrix},
\]
and the composite lifted operator
\[
\mathcal H^{\mathrm{avg}}_{\mu,\Delta}[E]
:=
2\widetilde{\overline H}_\Delta[E]+2\mu J E J.
\]
Also define
\[
g_{\mu,\lambda}(X):=4\,h_{\mu,\lambda}(U,V),
\qquad
G_M:=\nabla \widehat L(M),
\qquad
\widetilde G_M:=
\begin{bmatrix}
0 & G_M\\
G_M^\top & 0
\end{bmatrix}.
\]
For a lifted direction
\(
D=\begin{bmatrix}\Delta U\\ \Delta V\end{bmatrix},
\)
set
\[
\widetilde H_M[E]
:=
\begin{bmatrix}
0 & \nabla^2 \widehat L(M)[E_{12}]\\
\nabla^2 \widehat L(M)[E_{12}]^\top & 0
\end{bmatrix},
\qquad
\mathcal H^{\mathrm{pt}}_{\mu,X}[E]
:=
2\widetilde H_M[E]+2\mu J E J.
\]
A direct differentiation gives
\begin{align}
\label{eq:app-gradient-identity}
\bigl\langle \nabla g_{\mu,\lambda}(X),D\bigr\rangle
&=
\Bigl\langle
2\widetilde G_M + 2\mu JXX^\top J + 2\lambda I,\;
DX^\top + X D^\top
\Bigr\rangle,\\
\label{eq:app-hessian-identity}
\frac12\bigl\langle \nabla^2 g_{\mu,\lambda}(X)[D],D\bigr\rangle
&=
\Bigl\langle
2\widetilde G_M + 2\mu JXX^\top J + 2\lambda I,\;
DD^\top
\Bigr\rangle
\\
&\quad
+ \frac12\Bigl\langle
\mathcal H^{\mathrm{pt}}_{\mu,X}[DX^\top+XD^\top],\;
DX^\top+XD^\top
\Bigr\rangle.
\end{align}

By the pathwise rank-\(k\) RSC/RSS assumption for the candidate product \(M\), the curvature bounds
apply along the segment \(\{M_\star^\lambda+t\Delta:t\in[0,1]\}\) and at the current point
\(M\) corresponding to \(t=1\). In particular, let
\begin{align*}
m_\mu&:=2\min\{\mu,\alpha_k-\mu\},\quad
L_\mu:=2\max\{\mu,\beta_k-\mu\},\\
\kappa_\mu&:=\frac{L_\mu}{m_\mu},\quad
s_\mu:=\frac{L_\mu+m_\mu}{2},\quad
d_\mu:=\frac{L_\mu-m_\mu}{2}.
\end{align*}
For a lifted symmetric matrix
\[
B=
\begin{bmatrix}
B_{11} & B_{12}\\
B_{12}^\top & B_{22}
\end{bmatrix}
\]
with \(\operatorname{rank}(B_{12})\le k\), the RSC/RSS assumption and the identity
\(\langle JBJ,B\rangle=\|B_{11}\|_F^2+\|B_{22}\|_F^2-2\|B_{12}\|_F^2\) give
\[
\begin{aligned}
\bigl\langle \mathcal H^{\mathrm{avg}}_{\mu,\Delta}[B],B\bigr\rangle
&=
4\int_0^1
\langle\nabla^2\widehat L(M_\star^\lambda+t\Delta)[B_{12}],B_{12}\rangle\,dt
+2\mu\langle JBJ,B\rangle.
\end{aligned}
\]
Therefore
\[
\begin{aligned}
\bigl\langle \mathcal H^{\mathrm{avg}}_{\mu,\Delta}[B],B\bigr\rangle
&\ge
2\mu(\|B_{11}\|_F^2+\|B_{22}\|_F^2)
+(4\alpha_k-4\mu)\|B_{12}\|_F^2,\\
\bigl\langle \mathcal H^{\mathrm{avg}}_{\mu,\Delta}[B],B\bigr\rangle
&\le
2\mu(\|B_{11}\|_F^2+\|B_{22}\|_F^2)
+(4\beta_k-4\mu)\|B_{12}\|_F^2.
\end{aligned}
\]
Since
\(\|B\|_F^2=\|B_{11}\|_F^2+\|B_{22}\|_F^2+2\|B_{12}\|_F^2\), this implies
\begin{equation}
\label{eq:app-lifted-curvature}
m_\mu\|E\|_F^2
\le
\bigl\langle \mathcal H^{\mathrm{avg}}_{\mu,\Delta}[E],E\bigr\rangle
\le
L_\mu\|E\|_F^2
\end{equation}
for every lifted symmetric matrix \(E\) whose off-diagonal block has rank at most \(k\). The same
calculation at the point \(M\) gives
\begin{equation}
\label{eq:app-pointwise-curvature}
m_\mu\|H\|_F^2
\le
\bigl\langle \mathcal H^{\mathrm{pt}}_{\mu,X}[H],H\bigr\rangle
\le
L_\mu\|H\|_F^2
\end{equation}
for every lifted direction \(H\) whose off-diagonal block has rank at most \(k\).

The next decomposition is the algebraic hinge of the proof.  It rewrites the first-order expression
at \(X\) as the sum of a curvature term applied to the lifted error \(E\) and a PSD slack
\(N_{\mu,\lambda}\).  The slack is harmless in the later upper bounds because it is positive
semidefinite and annihilates \(Z\).

\begin{lemma}[Exact decomposition with KKT slack]
\label{lem:app-exact-decomposition}
Define
\[
N_{\mu,\lambda}
:=
2\mu JZZ^\top J + 2\lambda(I-\widetilde S_\star).
\]
Then
\begin{equation}
\label{eq:app-exact-decomposition}
2\widetilde G_M + 2\mu JXX^\top J + 2\lambda I
=
\mathcal H^{\mathrm{avg}}_{\mu,\Delta}[E] + N_{\mu,\lambda}.
\end{equation}
Moreover,
\begin{equation}
\label{eq:app-N-psd}
N_{\mu,\lambda}\succeq 0,
\qquad
N_{\mu,\lambda}Z=0.
\end{equation}
\end{lemma}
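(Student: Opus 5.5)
The plan is to verify \eqref{eq:app-exact-decomposition} block by block, splitting both sides into a loss part, a balancing part, and a weight-decay part. The identity is purely algebraic, apart from one application of the fundamental theorem of calculus.

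\emph{Loss part.} We write $G_M=\nabla\widehat L(M)$ as
\[
\nabla\widehat L(M_\star^\lambda)+\int_0^1\nabla^2\widehat L(M_\star^\lambda+t\Delta)[\Delta]\,dt
=\nabla\widehat L(M_\star^\lambda)+\overline H_\Delta[\Delta].
\]
Since the $(1,2)$-block of $E=XX^\top-ZZ^\top$ is $UV^\top-U_\star V_\star^\top=\Delta$, we have $E_{12}=\Delta$. Hence $2\widetilde G_M=2\widetilde{\overline H}_\Delta[E]+2\widetilde{\nabla\widehat L(M_\star^\lambda)}$. By the KKT condition \eqref{eq:app-kkt-stationarity} in Proposition~\ref{prop:app-kkt-slack}, $\nabla\widehat L(M_\star^\lambda)=-\lambda S_\star$, so the last term equals $-2\lambda\widetilde S_\star$.

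\emph{Balancing part.} Here we use $2\mu JXX^\top J=2\mu JEJ+2\mu JZZ^\top J$.

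\emph{Weight-decay part.} We keep $2\lambda I$ as is. Summing the three parts gives
\[
2\widetilde G_M+2\mu JXX^\top J+2\lambda I=\bigl(2\widetilde{\overline H}_\Delta[E]+2\mu JEJ\bigr)+2\mu JZZ^\top J+2\lambda(I-\widetilde S_\star).
\]
The first bracket is exactly $\mathcal H^{\mathrm{avg}}_{\mu,\Delta}[E]$, which proves \eqref{eq:app-exact-decomposition}.

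For \eqref{eq:app-N-psd}, note that $JZZ^\top J=(JZ)(JZ)^\top\succeq0$, and $I-\widetilde S_\star\succeq0$ by \eqref{eq:app-slack-psd-annihilate}. Since $\mu,\lambda>0$, the sum $N_{\mu,\lambda}$ is PSD. For annihilation, $JZZ^\top JZ=JZ(Z^\top JZ)=0$ because $Z^\top JZ=U_\star^\top U_\star-V_\star^\top V_\star=0$ (the target lift is balanced). Also $(I-\widetilde S_\star)Z=0$ by \eqref{eq:app-slack-psd-annihilate}. Therefore $N_{\mu,\lambda}Z=0$.

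The only step needing care is the integral form of $G_M$. It requires $\widehat L$ to be $C^2$ along the segment, which is implicit in Assumption~\ref{asm:main-rsc-rss}. It also requires checking that the block-lift map is linear, so that $\widetilde{A+B}=\widetilde A+\widetilde B$; this is immediate. Everything else is direct bookkeeping, and I expect no real obstacle beyond confirming $E_{12}=\Delta$ and the balancedness $U_\star^\top U_\star=V_\star^\top V_\star$ of the square-root lift from Appendix~\ref{sec:app-proof-prop-factor-decay}.
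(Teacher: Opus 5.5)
Your proposal is correct and follows the paper's proof essentially line by line. Both combine the integral expansion of $G_M$ with the KKT condition $\nabla\widehat L(M_\star^\lambda)=-\lambda S_\star$ and the observation $E_{12}=\Delta$, then use the split $JXX^\top J=JEJ+JZZ^\top J$, and obtain positive semidefiniteness and $N_{\mu,\lambda}Z=0$ from Proposition~\ref{prop:app-kkt-slack} together with $Z^\top JZ=0$.
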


\begin{proof}
By the fundamental theorem of calculus,
\[
G_M = \nabla \widehat L(M_\star^\lambda)+\overline H_\Delta[\Delta].
\]
Using \eqref{eq:app-kkt-stationarity}, this becomes
\[
G_M = -\lambda S_\star + \overline H_\Delta[\Delta].
\]
Since \(E_{12}=\Delta\), we have
\[
2\widetilde G_M
=
-2\lambda \widetilde S_\star + 2\widetilde{\overline H}_\Delta[E].
\]
Also,
\[
JXX^\top J = JEJ + JZZ^\top J.
\]
Substituting these identities yields
\[
2\widetilde G_M + 2\mu JXX^\top J + 2\lambda I
=
2\widetilde{\overline H}_\Delta[E] + 2\mu JEJ + 2\mu JZZ^\top J + 2\lambda(I-\widetilde S_\star),
\]
which is \eqref{eq:app-exact-decomposition}.

By \eqref{eq:app-slack-psd-annihilate}, \(I-\widetilde S_\star\succeq 0\) and
\((I-\widetilde S_\star)Z=0\). Also, \((JZ)(JZ)^\top\succeq 0\) and
\[
(JZ)(JZ)^\top Z = JZ(Z^\top J Z)=0
\]
because \(Z^\top J Z=0\). Therefore \eqref{eq:app-N-psd} follows.
\end{proof}

\subsubsection{Admissible Loss Class and Universal Envelope}
\label{sec:app-envelope-formulation}
The direct theorem proof can be carried out using only the actual empirical loss \(\widehat L\).
For the Section~\ref{sec:data-dependent-curvature-lora} interface, however, it is helpful to separate the certificate calculation from
the specific loss and retain only the matrix-space target condition and curvature bounds actually used by the argument.

For the fixed pair \((X,Z)\), define the matrix-space path
\[
M_t:=M_\star^\lambda+t\Delta,
\qquad
t\in[0,1].
\]
Relative to the fixed curvature parameters \((\alpha_k,\beta_k)\) and a rank level
\(k\ge r+r_\star\), define the admissible loss class
\begin{equation}
\label{eq:app-admissible-loss-class}
\mathcal F_{\lambda,\alpha_k,\beta_k}^{(k)}(X,Z)
:=
\left\{
\begin{aligned}
h\in C^2(\mathbb R^{n_1\times n_2};\mathbb R)\ :\ &
M_\star^\lambda\in\operatorname*{argmin}_M\bigl[h(M)+\lambda\|M\|_*\bigr],\\
&
\alpha_k\|H\|_F^2
\le
\langle \nabla^2 h(M_t)[H],H\rangle
\le
\beta_k\|H\|_F^2,\\
&
\text{for all }t\in[0,1]\text{ and all }H\text{ with }\operatorname{rank}(H)\le k
\end{aligned}
\right\}.
\end{equation}
For each \(h\in\mathcal F_{\lambda,\alpha_k,\beta_k}^{(k)}(X,Z)\), let
\[
g_{\mu,\lambda}^h(X)
:=
4\left(
h(UV^\top)
+
\frac{\mu}{4}\|U^\top U-V^\top V\|_F^2
+
\frac{\lambda}{2}\bigl(\|U\|_F^2+\|V\|_F^2\bigr)
\right)
\]
denote the lifted objective obtained by replacing \(\widehat L\) with \(h\).
The corresponding fixed-loss certificate is
\begin{equation}
\label{eq:app-certificate-h}
\psi_{\lambda,h}(t_1,t_2)
:=
t_1\,\bigl\langle \nabla g_{\mu,\lambda}^h(X),D_0\bigr\rangle
+
\frac{t_2}{2}\sum_{i=1}^{r_\star}
\bigl\langle \nabla^2 g_{\mu,\lambda}^h(X)[D_i],D_i\bigr\rangle,
\end{equation}
and the universal envelope is
\begin{equation}
\label{eq:app-envelope}
\Psi_\lambda(t_1,t_2)
:=
\sup_{h\in\mathcal F_{\lambda,\alpha_k,\beta_k}^{(k)}(X,Z)}
\psi_{\lambda,h}(t_1,t_2).
\end{equation}
Here the curvature parameters \((\alpha_k,\beta_k)\) are fixed throughout the subsection, so we
suppress them from the notation of \(\Psi_\lambda\).
Under Assumptions~\ref{asm:main-regularized-target} and~\ref{asm:main-rsc-rss}, the actual loss
\(\widehat L\) belongs to \(\mathcal F_{\lambda,\alpha_k,\beta_k}^{(k)}(X,Z)\). Hence
\[
\psi_{\lambda,\widehat L}(t_1,t_2)\le \Psi_\lambda(t_1,t_2).
\]
The detailed proof below still bounds \(\psi_{\lambda,\widehat L}\) directly. We keep the envelope
formulation because it makes explicit which ingredients of the proof are loss-specific and which
are purely geometric, and this is the separation reused later in
Section~\ref{sec:data-dependent-curvature-lora}.

\paragraph{Certificate family.}
We now construct the directions used to test second-order stationarity.  The lifted error has a part
seen by the current column space of \(X\) and a part orthogonal to it.  The direction \(D_0\) probes
the former through the first-order condition, while the directions \(D_i\) probe the latter through
the second-order condition.  The scalar weights \(t_1,t_2\) choose how much of each probe to use.

Let
\[
\Pi_X:=XX^\dagger,
\qquad
Z_\perp:=(I-\Pi_X)Z.
\]
Split the lifted error into tangent and normal components:
\[
E_T:=\Pi_XE+E\Pi_X-\Pi_X E\Pi_X,
\qquad
E_N:=(I-\Pi_X)E(I-\Pi_X)=-Z_\perp Z_\perp^\top,
\]
so that \(E=E_T+E_N\) and \(\langle E_T,E_N\rangle=0\).

Choose a unit vector \(u\in\mathbb R^r\) satisfying
\[
u^\top X^\top X u = \lambda_{\min}(X^\top X),
\]
and define
\[
D_0
:=
\left(I-\frac12XX^\dagger\right)Z(X^\dagger Z)^\top - \frac12 X,
\qquad
D_i:=Z_\perp e_i u^\top,\quad i=1,\dots,r_\star.
\]
Direct expansion gives the basic identities
\begin{equation}
\label{eq:app-D0-identity}
D_0X^\top + X D_0^\top = -E_T,
\end{equation}
\begin{equation}
\label{eq:app-Di-identity}
\sum_{i=1}^{r_\star} D_iD_i^\top = -E_N.
\end{equation}

In the nontrivial case \(X\neq 0\) and \(Z_\perp\neq 0\), define
\[
a:=\frac{\|Z_\perp Z_\perp^\top\|_F}{\|E\|_F},
\qquad
b_{\mathrm{geo}}
:=
\frac{\lambda_{\min}(X^\top X)\,\operatorname{tr}(Z_\perp Z_\perp^\top)}
{\|E\|_F\,\|Z_\perp Z_\perp^\top\|_F}.
\]
For \(\tau\in[0,a]\), let
\[
t_1(\tau):=\sqrt{\frac{1-\tau^2}{1-a^2}},
\qquad
t_2(\tau):=\frac{\tau}{a},
\qquad
F_\tau:=t_1(\tau)E_T+t_2(\tau)E_N.
\]
Then
\begin{equation}
\label{eq:app-Ftau-identities}
\|F_\tau\|_F=\|E\|_F,
\qquad
\langle E,F_\tau\rangle
=
\bigl(\sqrt{1-a^2}\sqrt{1-\tau^2}+a\tau\bigr)\|E\|_F^2.
\end{equation}
All averaged-curvature applications below are admissible under Assumption~\ref{asm:main-rsc-rss}:
the lifted matrices \(E\), \(F_\tau\), and their linear combinations have column space contained in
\(\operatorname{span}(X,Z)\), hence rank at most \(r+r_\star\), and therefore their \((1,2)\)-blocks
have rank at most \(r+r_\star\le k\). For the pointwise Hessian terms, each
\(\dot E_i:=D_iX^\top+XD_i^\top\) has rank at most two, so its \((1,2)\)-block also has rank at most
\(2\le k\).

Specializing \eqref{eq:app-certificate-h} to the actual empirical loss \(\widehat L\), define
\begin{equation}
\label{eq:app-certificate}
\psi_\lambda(t_1,t_2):=\psi_{\lambda,\widehat L}(t_1,t_2).
\end{equation}
If \((U,V)\) is an SOSP of \(h_{\mu,\lambda}\), then
\begin{equation}
\label{eq:app-certificate-nonnegative}
\psi_\lambda(t_1,t_2)\ge 0
\qquad\text{for all } t_1,t_2\ge 0.
\end{equation}

For \(\eta\ge 0\), define
\[
\phi_\eta(\tau;a,b_{\mathrm{geo}})
:=
\sqrt{1-a^2}\sqrt{1-\tau^2}
+
\bigl(a-(1+\eta)b_{\mathrm{geo}}\bigr)\tau
-
\eta.
\]

The remaining lemmas justify that this certificate family is rich enough.  The scalar clipping and
blockwise Eckart--Young bounds are purely geometric; they quantify how the extra \(r-r_\star\)
factor directions contribute to the certificate.  This is the only place where the rank threshold
enters the proof.

\begin{equation}
\label{eq:app-phi-ratio}
\phi_\eta(\tau;a,b_{\mathrm{geo}})
=
\bigl(1+b_{\mathrm{geo}}\tau\bigr)
\left[
\frac{\sqrt{1-a^2}\sqrt{1-\tau^2}+a\tau-b_{\mathrm{geo}}\tau}
{1+b_{\mathrm{geo}}\tau}
- \eta
\right].
\end{equation}

\begin{lemma}[Scalar clipping bound]
\label{lem:app-scalar-clipping}
Let \(p\) be positive, let \(u_1,\dots,u_p\ge s\ge0\), and let
\(d\in\mathbb R_+^p\) satisfy \(S:=\mathbf 1^\top d>0\) and
\(Q:=\|d\|_2^2>0\). Then
\[
\sum_{j=1}^p u_j^2-\sum_{j=1}^p (u_j-d_j)_+^2
\ge
\begin{cases}
\displaystyle s^2S^2/Q, & sS\le Q,\\[0.4em]
\displaystyle 2sS-Q, & sS>Q.
\end{cases}
\]
\end{lemma}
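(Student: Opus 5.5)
The plan is to reduce the claim to a one-variable inequality for each coordinate, relax it by a one-parameter family of quadratic minorants, and then optimize the parameter after summing. Write
\[
f(u,d):=u^2-(u-d)_+^2,\qquad u\ge 0,\ d\ge 0 .
\]

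\emph{Step 1: reduce to $u_j=s$.} I will show that $f(\cdot,d)$ is nondecreasing in $u$ on $[0,\infty)$. On $\{u\ge d\}$ we have $f=2ud-d^2$, with derivative $2d\ge0$. On $\{u<d\}$ we have $f=u^2$, with derivative $2u\ge0$. The two pieces agree at $u=d$, so $f(\cdot,d)$ is continuous and monotone. Since $u_j\ge s\ge0$, this gives
\[
\sum_j u_j^2-\sum_j (u_j-d_j)_+^2\ \ge\ \sum_j \varphi(d_j),\qquad \varphi(d):=f(s,d)=s^2-(s-d)_+^2 .
\]

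\emph{Step 2: a quadratic minorant family.} For every $\theta\in[0,1]$ and $d\ge0$ I claim
\[
\varphi(d)\ \ge\ 2\theta s d-\theta^2 d^2 \;=\; s^2-(s-\theta d)^2 .
\]
This splits into two cases.
\begin{itemize}
\item If $d>s$, then $\varphi(d)=s^2$, and the right-hand side is at most $s^2$.
\item If $d\le s$, then $0\le\theta d\le d\le s$. Hence $(s-\theta d)^2\ge (s-d)^2$, i.e.\ the right-hand side is at most $s^2-(s-d)^2=\varphi(d)$.
\end{itemize}
Summing over $j$ gives, for every $\theta\in[0,1]$,
\[
\sum_j\varphi(d_j)\ \ge\ 2\theta sS-\theta^2 Q .
\]

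\emph{Step 3: optimize over $\theta$.} The concave quadratic $\theta\mapsto 2\theta sS-\theta^2Q$ has unconstrained maximizer $\theta^\ast=sS/Q$.
\begin{itemize}
\item If $sS\le Q$, then $\theta^\ast\in[0,1]$ is admissible, and the bound becomes $2s^2S^2/Q-s^2S^2/Q=s^2S^2/Q$.
\item If $sS>Q$, the constrained maximum is at $\theta=1$, which gives $2sS-Q$.
\end{itemize}
These are exactly the two cases of the lemma.

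The only step that needs care is Step 2. The minorant has to hold uniformly in $d$, including the saturated regime $d>s$, and it has to be linear in $(d,d^2)$ so that it sums to an expression in $S$ and $Q$ alone. Writing the minorant as $s^2-(s-\theta d)^2$ is what makes the comparison immediate. The remaining steps are routine.
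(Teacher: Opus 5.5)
Your proof is correct and follows essentially the same route as the paper: the same monotonicity reduction to $u_j=s$, followed by the bound $(s-d)_+^2\le (s-\theta d)^2$ for $\theta\in[0,1]$. Your choice $\theta=sS/Q$ is exactly the paper's rescaling $\gamma=T/R$ in the regime $sS\le Q$, and $\theta=1$ is its coordinatewise bound $1-(1-x_j)_+^2\ge 2x_j-x_j^2$ in the regime $sS>Q$; the one-parameter packaging is slightly cleaner, since it avoids the identity $\mathbf 1^\top w=\|w\|_2^2$ and covers $s=0$ without a separate case.
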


\begin{proof}
If \(s=0\), both claimed lower bounds reduce to zero and are immediate. Assume \(s>0\), and write
\(x_j:=d_j/s\), \(T:=\mathbf 1^\top x\), and \(R:=\|x\|_2^2\).
For fixed \(d\ge0\), the map
\(\varphi_d(t):=t^2-(t-d)_+^2\) is nondecreasing on \(t\ge0\). Hence
\[
\sum_{j=1}^p u_j^2-\sum_{j=1}^p (u_j-d_j)_+^2
\ge
s^2\left[p-\sum_{j=1}^p (1-x_j)_+^2\right].
\]

First suppose \(T\le R\). Let \(\gamma:=T/R\le1\) and \(w:=\gamma x\). Then
\(\mathbf 1^\top w=\|w\|_2^2=T^2/R\). We claim
\[
p-\|w\|_2^2\ge \|(1-w)_+\|_2^2.
\]
Indeed, using \(\mathbf 1^\top w=\|w\|_2^2\),
\[
\sum_{w_j\le1}w_j(1-w_j)=\sum_{w_j>1}w_j(w_j-1)
\ge
\sum_{w_j>1}(w_j-1),
\]
which is exactly the displayed inequality after expanding both sides. Since
\(\gamma\le1\), we also have \((1-w_j)_+\ge(1-x_j)_+\) for every \(j\). Therefore
\[
p-\sum_{j=1}^p(1-x_j)_+^2
\ge
T^2/R.
\]
Multiplying by \(s^2\) gives \(s^2S^2/Q\).

Now suppose \(T>R\). For each coordinate,
\[
1-(1-x_j)_+^2\ge 2x_j-x_j^2:
\]
the two sides are equal when \(x_j\le1\), while for \(x_j>1\) this reduces to
\((x_j-1)^2\ge0\). Summing gives
\[
p-\sum_{j=1}^p(1-x_j)_+^2\ge 2T-R.
\]
Multiplying by \(s^2\) yields \(2sS-Q\).
\end{proof}

\begin{lemma}[Blockwise regularized Eckart--Young bound]
\label{lem:app-blockwise-ey}
Let \(A\succeq0\) be an \(r\times r\) matrix with eigenvalues
\(s_1\ge\cdots\ge s_r=s\), and let \(D\succeq0\) be an
\(r_\star\times r_\star\) matrix with eigenvalues \(0\le d_1\le\cdots\le d_{r_\star}\), not all
zero. For any \(Y\in\mathbb R^{r\times r_\star}\), define
\[
\mathcal E(A,Y,D)
:=
\|A-YY^\top\|_F^2
+2\langle D,Y^\top Y\rangle
+\|D\|_F^2.
\]
Set \(S:=\operatorname{tr}D=\sum_j d_j\) and \(Q:=\|D\|_F^2=\sum_j d_j^2\). Then
\[
\mathcal E(A,Y,D)
\ge
Q+\frac{r}{r_\star}\frac{(sS)^2}{Q}
\quad\text{whenever }sS\le Q,
\]
and
\[
\mathcal E(A,Y,D)
\ge
2sS-Q+\frac{r-r_\star}{r_\star}\frac{(sS)^2}{Q}
\quad\text{whenever }sS>Q.
\]
\end{lemma}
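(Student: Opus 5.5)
The plan is to reduce $\mathcal E(A,Y,D)$ to a scalar problem in the eigenvalues via von Neumann's trace inequality, and then apply Lemma~\ref{lem:app-scalar-clipping} to the part of the spectrum that $Y$ can "use". Let $\lambda_1\ge\cdots\ge\lambda_{r_\star}\ge0$ be the eigenvalues of $Y^\top Y$. These are also the (possibly) nonzero eigenvalues of $YY^\top$, which has rank at most $r_\star$.

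\textbf{Step 1: von Neumann bounds.} Expanding the square gives
\[
\mathcal E(A,Y,D)=\|A\|_F^2-2\langle A,YY^\top\rangle+\|YY^\top\|_F^2+2\langle D,Y^\top Y\rangle+\|D\|_F^2 .
\]
I would apply von Neumann's inequality in two directions:
\[
\langle A,YY^\top\rangle\le\sum_{j=1}^{r_\star}s_j\lambda_j ,
\qquad
\langle D,Y^\top Y\rangle\ge\sum_{j=1}^{r_\star}d_j\lambda_j .
\]
The first pairs the largest $s$ with the largest $\lambda$. The second is the lower von Neumann bound for PSD matrices: it pairs the ascending $d$ with the descending $\lambda$, which is exactly the ordering fixed in the statement. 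Since $\|YY^\top\|_F^2=\sum_j\lambda_j^2$, this yields
\[
\mathcal E\ge\sum_{i=1}^r s_i^2+Q+\sum_{j=1}^{r_\star}\bigl[\lambda_j^2-2(s_j-d_j)\lambda_j\bigr].
\]

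\textbf{Step 2: scalar minimization.} For each $j$, minimizing over $\lambda_j\ge0$ gives $\lambda_j^2-2(s_j-d_j)\lambda_j\ge-(s_j-d_j)_+^2$. Hence
\[
\mathcal E\ge Q+\sum_{i=r_\star+1}^{r}s_i^2+\sum_{j=1}^{r_\star}\bigl[s_j^2-(s_j-d_j)_+^2\bigr].
\]
The first sum has $r-r_\star$ terms, each at least $s^2$, so it is at least $(r-r_\star)s^2$. (I am implicitly using $r\ge r_\star$, as in the theorem.) For the second sum, apply Lemma~\ref{lem:app-scalar-clipping} with $p=r_\star$, $u_j=s_j\ge s$ and $d=(d_j)$.

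\textbf{Step 3: combining the cases.} If $sS\le Q$, we get
\[
\mathcal E\ge Q+(r-r_\star)s^2+\frac{s^2S^2}{Q}.
\]
By Cauchy--Schwarz, $S^2\le r_\star Q$, so $(r-r_\star)s^2\ge\frac{r-r_\star}{r_\star}\frac{(sS)^2}{Q}$. Adding this to $\frac{(sS)^2}{Q}$ gives the coefficient $\frac{r}{r_\star}$, which is the first claim. If $sS>Q$, we get
\[
\mathcal E\ge Q+(r-r_\star)s^2+2sS-Q .
\]
Bounding $(r-r_\star)s^2$ in the same way, and discarding the nonnegative $Q$, gives the second claim.

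The main obstacle is Step 1, specifically justifying the lower von Neumann inequality for $\langle D,Y^\top Y\rangle$ with the opposite ordering. The delicate point is that the same $\lambda$-ordering must be used in both inequalities. This works because the inequality for $A$ needs $\lambda$ descending against $s$ descending, and the inequality for $D$ needs $\lambda$ descending against $d$ ascending; both are consistent with the single descending ordering of $\lambda$. To verify the lower bound, I would diagonalize $D$ by an orthogonal change of the columns of $Y$, which leaves $YY^\top$ and the spectrum of $Y^\top Y$ unchanged. After that, $\langle D,Y^\top Y\rangle=\sum_j d_j(Y^\top Y)_{jj}$. Schur–Horn majorization of the diagonal by the eigenvalues, combined with the rearrangement inequality, then gives the minimal pairing.
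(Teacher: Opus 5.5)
Your proposal is correct and follows the paper's proof essentially step for step. It uses the same two trace inequalities, pairing ascending $d_j$ with descending eigenvalues of $Y^\top Y$; the same per-coordinate minimization; Lemma~\ref{lem:app-scalar-clipping} with $u_j=s_j$; and the same Cauchy--Schwarz bound $S^2\le r_\star Q$ to rescale $(r-r_\star)s^2$. Your Schur--Horn plus rearrangement argument for the lower trace bound fills in a step that the paper only asserts.
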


\begin{proof}
Let \(p:=r_\star\). The objective is invariant under orthogonal changes of basis, so diagonalize
\(A\) and \(D\). Let \(z_1\ge\cdots\ge z_p\ge0\) be the eigenvalues of \(Y^\top Y\). The trace
inequalities give
\[
\langle A,YY^\top\rangle\le \sum_{j=1}^p s_jz_j,
\qquad
\langle D,Y^\top Y\rangle\ge \sum_{j=1}^p d_jz_j,
\]
where the second inequality pairs the largest \(z_j\)'s with the smallest eigenvalues of \(D\).
Thus, for every \(Y\),
\[
\mathcal E(A,Y,D)
\ge
Q+\sum_{i=1}^r s_i^2
+\sum_{j=1}^p\left[z_j^2-2(s_j-d_j)z_j\right].
\]
Minimizing the right-hand side over \(z_j\ge0\) gives
\[
\mathcal E(A,Y,D)
\ge
Q+\sum_{i=p+1}^r s_i^2
+\sum_{j=1}^p\left[s_j^2-(s_j-d_j)_+^2\right].
\]
Since \(s_i\ge s\), the inactive directions contribute
\(\sum_{i=p+1}^r s_i^2\ge(r-p)s^2\), and
Lemma~\ref{lem:app-scalar-clipping} applied to \(u_j=s_j\) yields
\[
\mathcal E(A,Y,D)
\ge
\begin{cases}
\displaystyle Q+(r-p)s^2+s^2S^2/Q, & sS\le Q,\\[0.4em]
\displaystyle 2sS+(r-p)s^2, & sS>Q.
\end{cases}
\]
Finally, Cauchy's inequality gives \(S^2\le pQ\). Hence
\((r-p)s^2\ge \frac{r-p}{p}\frac{(sS)^2}{Q}\), which proves the first regime. The same
inequality also gives
\[
2sS+(r-p)s^2
\ge
2sS-Q+\frac{r-p}{p}\frac{(sS)^2}{Q},
\]
proving the second regime.
\end{proof}

\begin{lemma}[Admissible geometry for \((a,b_{\mathrm{geo}})\)]
\label{lem:app-admissible-geometry}
In the nontrivial case \(X\neq 0\) and \(Z_\perp\neq 0\), the scalars \(a\) and
\(b_{\mathrm{geo}}\) satisfy
\begin{equation}
\label{eq:app-admissible-geometry}
a^2+\frac{r}{r_\star}b_{\mathrm{geo}}^2
\le
1+\bigl[(b_{\mathrm{geo}}-a)_+\bigr]^2.
\end{equation}
\end{lemma}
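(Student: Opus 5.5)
The plan is to rewrite \(\|E\|_F^2\) in coordinates adapted to the column space of \(X\), so that it becomes exactly the quantity \(\mathcal E(A,Y,D)\) of Lemma~\ref{lem:app-blockwise-ey}. After normalising by \(\|E\|_F^2\), the two regimes of that lemma should give the two branches of \eqref{eq:app-admissible-geometry}.

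\emph{Degenerate case.} First dispose of \(\lambda_{\min}(X^\top X)=0\). Then \(b_{\mathrm{geo}}=0\), and the claim reduces to \(a\le 1\). This follows because \(E=E_T+E_N\) with \(\langle E_T,E_N\rangle=0\), so \(\|Z_\perp Z_\perp^\top\|_F=\|E_N\|_F\le\|E\|_F\).

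\emph{Setting up the coordinates.} Assume now that \(X\) has full column rank. Take a QR factorization \(X=Q_XR\), where \(Q_X\in\mathbb R^{(n_1+n_2)\times r}\) has orthonormal columns spanning \(\operatorname{range}(X)\) and \(R\) is invertible. Then \(XX^\top=Q_XAQ_X^\top\) with \(A:=RR^\top\succeq0\). The eigenvalues of \(A\) coincide with those of \(X^\top X\), so its smallest eigenvalue is \(s=\lambda_{\min}(X^\top X)\). Write \(Z=Q_XY+Z_\perp\) with \(Y:=Q_X^\top Z\in\mathbb R^{r\times r_\star}\), and note \(Q_X^\top Z_\perp=0\). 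In the orthogonal splitting \(\operatorname{range}(X)\oplus\operatorname{range}(X)^\perp\) we get
\[
\|E\|_F^2=\|A-YY^\top\|_F^2+2\|YZ_\perp^\top\|_F^2+\|Z_\perp Z_\perp^\top\|_F^2 .
\]
Set \(D:=Z_\perp^\top Z_\perp\succeq0\), which is \(r_\star\times r_\star\) and nonzero since \(Z_\perp\neq0\). Then
\[
\|YZ_\perp^\top\|_F^2=\langle D,Y^\top Y\rangle,\qquad
\|Z_\perp Z_\perp^\top\|_F^2=\|D\|_F^2=Q,\qquad
\operatorname{tr}(Z_\perp Z_\perp^\top)=\operatorname{tr}D=S .
\]
Hence \(\|E\|_F^2=\mathcal E(A,Y,D)\). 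In this notation \(a^2=Q/\|E\|_F^2\) and \(b_{\mathrm{geo}}=sS/(\|E\|_F\sqrt Q)\). The ratio is \(b_{\mathrm{geo}}/a=sS/Q\), so the regime \(sS\le Q\) is exactly \(b_{\mathrm{geo}}\le a\). Also \((sS)^2/Q=b_{\mathrm{geo}}^2\|E\|_F^2\) and \(sS=ab_{\mathrm{geo}}\|E\|_F^2\).

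\emph{Two regimes.} Apply Lemma~\ref{lem:app-blockwise-ey} and divide by \(\|E\|_F^2\).
\begin{itemize}
\item If \(b_{\mathrm{geo}}\le a\), the first bound gives \(1\ge a^2+\tfrac{r}{r_\star}b_{\mathrm{geo}}^2\). This is \eqref{eq:app-admissible-geometry}, since \((b_{\mathrm{geo}}-a)_+=0\).
\item If \(b_{\mathrm{geo}}>a\), the second bound gives
\[
1\ge 2ab_{\mathrm{geo}}-a^2+\tfrac{r-r_\star}{r_\star}b_{\mathrm{geo}}^2 .
\]
Rearranging, this is \(a^2+\tfrac{r}{r_\star}b_{\mathrm{geo}}^2\le 1+a^2-2ab_{\mathrm{geo}}+b_{\mathrm{geo}}^2=1+(b_{\mathrm{geo}}-a)^2\), which is again \eqref{eq:app-admissible-geometry}.
\end{itemize}

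\emph{Main obstacle.} The only delicate step is the coordinate identification. It requires that \(\Pi_X\) projects onto exactly the \(r\)-dimensional space in which \(A\) lives, and that the smallest eigenvalue of \(A\) equals \(\lambda_{\min}(X^\top X)\). This is why the rank-deficient case is split off first rather than handled by padding the basis.
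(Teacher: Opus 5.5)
Your reduction matches the paper's block reduction: the QR coordinates give $\|E\|_F^2=\mathcal E(A,Y,D)$ with $D=Z_\perp^\top Z_\perp$, $a^2=Q/\|E\|_F^2$ and $b_{\mathrm{geo}}=sS/(\|E\|_F\sqrt Q)$, and your first regime goes through. Splitting off $\lambda_{\min}(X^\top X)=0$ is actually cleaner than the paper, whose identification $\|d\|_2=\|Z_\perp Z_\perp^\top\|_F$ silently needs $X_1$ invertible.

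The gap is in the second regime. Rearranging $1\ge 2ab_{\mathrm{geo}}-a^2+\tfrac{r-r_\star}{r_\star}b_{\mathrm{geo}}^2$ gives only $\tfrac{r}{r_\star}b_{\mathrm{geo}}^2\le 1+(b_{\mathrm{geo}}-a)^2$. Your displayed step adds an extra $a^2$ to the left-hand side without justification. No algebra can patch this, because the second bound of Lemma~\ref{lem:app-blockwise-ey}, as stated, is weaker than what you need by exactly $Q$, i.e.\ by $a^2\|E\|_F^2$ after normalising. For example, $r=r_\star$, $a=0.9$, $b_{\mathrm{geo}}=1$ satisfies your starting inequality ($1.8-0.81\le 1$) but violates \eqref{eq:app-admissible-geometry} ($1.81>1.01$). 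The paper's written proof makes the same slip: its claimed identity $2ab_{\mathrm{geo}}-a^2+\tfrac{r-r_\star}{r_\star}b_{\mathrm{geo}}^2=a^2+\tfrac{r}{r_\star}b_{\mathrm{geo}}^2-(b_{\mathrm{geo}}-a)^2$ is off by $a^2$.

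The fix is to use, in the regime $sS>Q$, the sharper bound that the proof of Lemma~\ref{lem:app-blockwise-ey} actually establishes before the superfluous $-Q$ is inserted. Scalar clipping gives $\mathcal E\ge Q+(r-r_\star)s^2+(2sS-Q)=2sS+(r-r_\star)s^2$. Then $S^2\le r_\star Q$ upgrades this to $\mathcal E\ge 2sS+\tfrac{r-r_\star}{r_\star}\,(sS)^2/Q$. Divide by $\|E\|_F^2$, using $sS=ab_{\mathrm{geo}}\|E\|_F^2$ and $(sS)^2/Q=b_{\mathrm{geo}}^2\|E\|_F^2$. This gives $1\ge 2ab_{\mathrm{geo}}+\tfrac{r-r_\star}{r_\star}b_{\mathrm{geo}}^2$. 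Since the right-hand side equals $a^2+\tfrac{r}{r_\star}b_{\mathrm{geo}}^2-(b_{\mathrm{geo}}-a)^2$, this is exactly \eqref{eq:app-admissible-geometry} in the case $b_{\mathrm{geo}}>a$.
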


\begin{proof}
This lemma isolates the only step where the search rank \(r\) enters the SOSP recovery argument. We
reduce the geometry to a block decomposition, and then use the regularized Eckart--Young mechanism
to extract the extra \(r-r_\star\) degrees of freedom into an explicit coefficient \(r/r_\star\).

Because \(a\) and \(b_{\mathrm{geo}}\) are invariant under left orthogonal changes of coordinates,
we may assume that the lifted factors are written in block form as
\[
X=
\begin{bmatrix}
X_1\\
0
\end{bmatrix},
\qquad
Z=
\begin{bmatrix}
Z_1\\
Z_2
\end{bmatrix},
\]
where \(X_1\in\mathbb R^{r\times r}\), \(Z_1\in\mathbb R^{r\times r_\star}\), and
\(Z_2\in\mathbb R^{(n-r)\times r_\star}\) with \(n:=n_1+n_2\). Let
\[
s_i:=\lambda_i(X_1X_1^\top),\qquad i=1,\dots,r,
\]
and
\[
d_i:=\lambda_i(Z_2^\top Z_2),\qquad i=1,\dots,r_\star.
\]
Then
\[
s_r=\lambda_{\min}(X^\top X),
\qquad
\|d\|_2=\|Z_\perp Z_\perp^\top\|_F,
\qquad
1^\top d=\operatorname{tr}(Z_\perp Z_\perp^\top),
\]
so the two geometric scalars become
\begin{equation}
\label{eq:app-a-bgeo-via-sd}
a=\frac{\|d\|_2}{\|E\|_F},
\qquad
b_{\mathrm{geo}}
=
\frac{s_r\,1^\top d}{\|E\|_F\,\|d\|_2}.
\end{equation}

Now expand the lifted error blockwise:
\[
\|E\|_F^2
=
\|X_1X_1^\top-Z_1Z_1^\top\|_F^2
+
2\langle Z_2^\top Z_2,\; Z_1^\top Z_1\rangle
+
\|Z_2^\top Z_2\|_F^2.
\]
The right-hand side is exactly \(\mathcal E(A,Z_1,D)\) from
Lemma~\ref{lem:app-blockwise-ey}, with
\[
A:=X_1X_1^\top,\qquad D:=Z_2^\top Z_2,\qquad s:=s_r.
\]
Moreover, by \eqref{eq:app-a-bgeo-via-sd}, the regimes \(b_{\mathrm{geo}}\le a\) and
\(b_{\mathrm{geo}}>a\) are equivalent to \(s_r\,\mathbf 1^\top d\le\|d\|_2^2\) and
\(s_r\,\mathbf 1^\top d>\|d\|_2^2\), respectively. Applying
Lemma~\ref{lem:app-blockwise-ey} therefore yields
\begin{align}
\label{eq:app-geometry-lb1}
\|E\|_F^2
&\ge
\frac{r}{r_\star}\,
\frac{(s_r\,1^\top d)^2}{\|d\|_2^2}
+
\|d\|_2^2
\qquad\text{when } b_{\mathrm{geo}}\le a,\\
\label{eq:app-geometry-lb2}
\|E\|_F^2
&\ge
2s_r\,1^\top d-\|d\|_2^2
+
\frac{r-r_\star}{r_\star}\,
\frac{(s_r\,1^\top d)^2}{\|d\|_2^2}
\qquad\text{when } b_{\mathrm{geo}}> a.
\end{align}
The factor \(r/r_\star\) is thus a geometric consequence of the block decomposition: the
\(r-r_\star\) inactive singular directions of \(X\) contribute extra \(s_r^2\)-scale curvature,
and \((1^\top d)^2\le r_\star\|d\|_2^2\) converts this surplus into the same scale as the normal
block.

When \(b_{\mathrm{geo}}\le a\), dividing \eqref{eq:app-geometry-lb1} by \(\|E\|_F^2\) and using
\eqref{eq:app-a-bgeo-via-sd} gives
\[
1\ge a^2+\frac{r}{r_\star}b_{\mathrm{geo}}^2.
\]
When \(b_{\mathrm{geo}}>a\), dividing \eqref{eq:app-geometry-lb2} by \(\|E\|_F^2\) gives
\[
1
\ge
2ab_{\mathrm{geo}}-a^2+\frac{r-r_\star}{r_\star}b_{\mathrm{geo}}^2
=
a^2+\frac{r}{r_\star}b_{\mathrm{geo}}^2-(b_{\mathrm{geo}}-a)^2.
\]
Thus the first regime gives \eqref{eq:app-admissible-geometry} with
\((b_{\mathrm{geo}}-a)_+=0\), and the second gives
\eqref{eq:app-admissible-geometry} with \((b_{\mathrm{geo}}-a)_+=b_{\mathrm{geo}}-a\). This proves
\eqref{eq:app-admissible-geometry}.
\end{proof}

\begin{lemma}[Positivity of the one-dimensional gain]
\label{lem:app-phi-positive}
If
\[
\eta < \frac{1}{1+\sqrt{r_\star/r}},
\]
then there exists some \(\tau\in[0,a]\) such that
\[
\phi_\eta(\tau;a,b_{\mathrm{geo}})>0.
\]
\end{lemma}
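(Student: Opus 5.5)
The plan is to reduce the claim to a two-variable inequality over the region allowed by Lemma~\ref{lem:app-admissible-geometry}, and then optimize over $\tau$. Write $\rho:=\sqrt{r_\star/r}\in(0,1]$. Two facts are used throughout:
\begin{itemize}[leftmargin=1.5em]
\item Since $\|E_N\|_F\le\|E\|_F$ by the orthogonal split $E=E_T+E_N$, we have $a\in(0,1]$.
\item By \eqref{eq:app-phi-ratio}, $\phi_\eta(\tau)>0$ is equivalent to $R(\tau):=\frac{\sqrt{1-a^2}\sqrt{1-\tau^2}+(a-b_{\mathrm{geo}})\tau}{1+b_{\mathrm{geo}}\tau}>\eta$, because $1+b_{\mathrm{geo}}\tau>0$.
\end{itemize}
So it suffices to show
\[
\max_{\tau\in[0,a]}R(\tau)\ \ge\ \frac{1}{1+\rho}
\qquad\text{for every admissible pair }(a,b_{\mathrm{geo}}).
\]
The strict hypothesis $\eta<1/(1+\rho)$ then gives the strict inequality.

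\textbf{Step 1: monotonicity in $b_{\mathrm{geo}}$.} For each fixed $\tau\ge0$, $R(\tau)$ is nonincreasing in $b_{\mathrm{geo}}$: the numerator decreases and the denominator increases. Hence for fixed $a$ it is enough to treat the largest $b_{\mathrm{geo}}$ permitted by \eqref{eq:app-admissible-geometry}.
\begin{itemize}[leftmargin=1.5em]
\item In the regime $b_{\mathrm{geo}}\le a$, the constraint reads $b_{\mathrm{geo}}\le\rho\sqrt{1-a^2}$.
\item In the regime $b_{\mathrm{geo}}>a$, it reads $2ab_{\mathrm{geo}}+(\rho^{-2}-1)b_{\mathrm{geo}}^2\le1$. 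This gives an explicit quadratic upper bound $b_{\mathrm{geo}}\le\bar b(a)$.
\end{itemize}
This reduces the problem to a one-parameter family of worst-case pairs $(a,\bar b(a))$.

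\textbf{Step 2: choice of $\tau$.} I would use the parametrization $a=\sin\theta$ and $\tau=\sin\varphi$ with $0\le\varphi\le\theta\le\pi/2$. Then $\sqrt{1-a^2}\sqrt{1-\tau^2}+a\tau=\cos(\theta-\varphi)$, so
\[
R=\frac{\cos(\theta-\varphi)-b_{\mathrm{geo}}\sin\varphi}{1+b_{\mathrm{geo}}\sin\varphi}.
\]
The two endpoint certificates are:
\begin{itemize}[leftmargin=1.5em]
\item $\varphi=0$ gives $R=\cos\theta=\sqrt{1-a^2}$, which is good when $a$ is small (mostly tangent error).
\item $\varphi=\theta$ gives $R=(1-ab_{\mathrm{geo}})/(1+ab_{\mathrm{geo}})$, which is good when $ab_{\mathrm{geo}}$ is small.
\end{itemize}
Neither endpoint suffices alone. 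For instance, with $b_{\mathrm{geo}}=\rho\sqrt{1-a^2}$ and $a$ near $1/\sqrt2$, the endpoint at $\varphi=\theta$ needs $ab_{\mathrm{geo}}\le\rho/(2+\rho)$, which can fail. So I would take the interior stationary point of $R$ in $\varphi$, or a convenient explicit interpolant such as $\tan\varphi$ chosen proportional to $a/(b_{\mathrm{geo}}+\sqrt{1-a^2})$, and show the resulting value is at least $1/(1+\rho)$.

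\textbf{Step 3 (main obstacle): the minimax over $a$.} The hard step is the explicit computation
\[
\min_{a\in(0,1]}\ \max_{\tau\in[0,a]}R\bigl(\tau;a,\bar b(a)\bigr)\ \ge\ \frac{1}{1+\rho}.
\]
My first attempt would be to clear denominators. The inequality $R(\tau)\ge1/(1+\rho)$ is equivalent to
\[
(1+\rho)\bigl[\sqrt{1-a^2}\sqrt{1-\tau^2}+a\tau\bigr]\ \ge\ 1+(2+\rho)\,b_{\mathrm{geo}}\tau .
\]
I would then maximize the left side minus the right side over $\tau$ using Cauchy--Schwarz on the vector $(\sqrt{1-\tau^2},\tau)$. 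This turns the condition into a polynomial inequality in $(a,b_{\mathrm{geo}})$, which I would check against the constraint from \eqref{eq:app-admissible-geometry} in each regime. I expect the extremal configuration to sit on the regime boundary $b_{\mathrm{geo}}=a$, with equality exactly at the threshold $1/(1+\rho)$. Taking $\rho=1$ should recover the classical $1/2$-type constant, which serves as a sanity check on the computation.

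Finally, I must confirm that the maximizing $\tau$ lies in $[0,a]$; otherwise I clip it to $a$ and use the endpoint bound from Step 2. The degenerate cases, for example $\tau=a=1$, should follow by continuity.
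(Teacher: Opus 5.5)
Your reformulation is correct: $\phi_\eta>0$ iff $R(\tau)>\eta$, $R$ is nonincreasing in $b_{\mathrm{geo}}$ (most cleanly via $R+1=(1+\cos(\theta-\varphi))/(1+b_{\mathrm{geo}}\tau)$), and both forms of the constraint from Lemma~\ref{lem:app-admissible-geometry} are right. The gap is that the whole content of the lemma is the bound $\max_{\tau\in[0,a]}R\ge 1/(1+\rho)$ over the admissible region, and you leave exactly this step as a plan.

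The guess that guides your plan is also wrong. On the regime boundary $b_{\mathrm{geo}}=a$, the constraint forces $a^2\le\rho^2/(1+\rho^2)$. So $\tau=0$ already gives $R\ge 1/\sqrt{1+\rho^2}>1/(1+\rho)$, and nothing is tight there. The extremal configuration lies strictly inside the regime $b_{\mathrm{geo}}<a$, on the ellipse $a^2+b_{\mathrm{geo}}^2/\rho^2=1$, with $a/b_{\mathrm{geo}}=1+1/\rho$ and an interior maximizer $\tau^\ast=b_{\mathrm{geo}}/\rho$.

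The paper reaches this point through the closed form of $\sup_\tau R$. That supremum equals $\sqrt{1-a^2}$ when $b_{\mathrm{geo}}\ge a/(1+\sqrt{1-a^2})$, and $(1-2ab_{\mathrm{geo}}+b_{\mathrm{geo}}^2)/(1-b_{\mathrm{geo}}^2)$ otherwise. The paper then substitutes $a=\gamma b_{\mathrm{geo}}$, makes the constraint tight, and minimizes at $\gamma=1+\sqrt{r/r_\star}$.

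Your Cauchy--Schwarz idea does close the gap, and it needs no clipping. Put $c:=(1+\rho)a-(2+\rho)b_{\mathrm{geo}}$ and split into three cases.

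\textbf{Case $b_{\mathrm{geo}}>a$.} The constraint gives $(1+\rho^{-2})a^2\le1$, so $R(0)=\sqrt{1-a^2}\ge1/\sqrt{1+\rho^2}\ge1/(1+\rho)$.

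\textbf{Case $b_{\mathrm{geo}}\le a$ and $c\le0$.} Then $(1+\rho)a\le(2+\rho)\rho\sqrt{1-a^2}$, which forces $\sqrt{1-a^2}\ge1/(1+\rho)$, so again $\tau=0$ works.

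\textbf{Case $b_{\mathrm{geo}}\le a$ and $c>0$.} The maximizer $\tau^\ast=c/\sqrt{(1+\rho)^2(1-a^2)+c^2}$ lies in $[0,a]$ automatically, because $c\le(1+\rho)a$. The required inequality is
\[
(1+\rho)^2(1-a^2)+c^2-1=(2+\rho)\bigl[\rho-2(1+\rho)ab_{\mathrm{geo}}+(2+\rho)b_{\mathrm{geo}}^2\bigr]\ge0 .
\]
Bound $a\le\sqrt{1-b_{\mathrm{geo}}^2/\rho^2}$ and square. The inequality then follows from the identity
\[
\bigl(\rho+(2+\rho)b_{\mathrm{geo}}^2\bigr)^2-4(1+\rho)^2b_{\mathrm{geo}}^2\Bigl(1-\frac{b_{\mathrm{geo}}^2}{\rho^2}\Bigr)=\Bigl(\rho-\frac{(\rho^2+2\rho+2)\,b_{\mathrm{geo}}^2}{\rho}\Bigr)^2 ,
\]
whose equality case is exactly the configuration described above.

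Without this computation, or the paper's closed-form one, your Step~3 remains a conjecture. The proposal as written therefore does not prove the lemma.
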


\begin{proof}
Introduce the ratio
\[
\delta(a,b_{\mathrm{geo}})
:=
\sup_{0\le \tau\le a}
\frac{\sqrt{1-a^2}\sqrt{1-\tau^2}+a\tau-b_{\mathrm{geo}}\tau}
{1+b_{\mathrm{geo}}\tau}
\]
By \eqref{eq:app-phi-ratio}, finding \(\tau\in[0,a]\) with
\(\phi_\eta(\tau;a,b_{\mathrm{geo}})>0\) is equivalent to showing
\[
\delta(a,b_{\mathrm{geo}})>\eta.
\]

Once Lemma~\ref{lem:app-admissible-geometry} is in place, the proof becomes a one-dimensional
optimization over the admissible \((a,b_{\mathrm{geo}})\)-region. The required scalar maximization is
the same elementary two-region reduction used in sharp overparameterization analyses
~\citep[Lemma~5.4]{zhangImprovedGlobalGuarantees2025}; for completeness we spell out the two cases
below.

If
\[
b_{\mathrm{geo}}\ge \frac{a}{1+\sqrt{1-a^2}},
\]
then the closed-form optimizer shows that the smallest admissible value of
\(\delta(a,b_{\mathrm{geo}})\) on this branch is attained on the boundary
\[
b_{\mathrm{geo}}=\frac{a}{1+\sqrt{1-a^2}}.
\]
Thus it remains to analyze the boundary and the complementary low-\(b_{\mathrm{geo}}\) branch.

If instead
\[
b_{\mathrm{geo}}\le \frac{a}{1+\sqrt{1-a^2}},
\]
then in particular \(b_{\mathrm{geo}}\le a\), so we may write
\[
a=\gamma b_{\mathrm{geo}},
\qquad
\gamma\ge 1.
\]
Set
\[
\varrho:=\sqrt{r/r_\star}.
\]
In this region the plus-term in \eqref{eq:app-admissible-geometry} vanishes, and the admissible
constraint becomes
\[
(\gamma^2+\varrho^2)b_{\mathrm{geo}}^2\le 1.
\]
On this branch, the scalar maximization reduces to
\[
\delta(a,b_{\mathrm{geo}})
=
\frac{1-2ab_{\mathrm{geo}}+b_{\mathrm{geo}}^2}{1-b_{\mathrm{geo}}^2}
=
\frac{1-(2\gamma-1)b_{\mathrm{geo}}^2}{1-b_{\mathrm{geo}}^2}.
\]
For every fixed \(\gamma\ge 1\), this quantity is decreasing in \(b_{\mathrm{geo}}^2\), so its
minimum over the admissible region is attained when the constraint is tight:
\[
b_{\mathrm{geo}}^2=\frac{1}{\gamma^2+\varrho^2}.
\]
Substituting this value gives the one-variable function
\[
\delta(a,b_{\mathrm{geo}})
\ge
\frac{\gamma^2+\varrho^2-2\gamma+1}{\gamma^2+\varrho^2-1}.
\]
A direct derivative computation shows that the right-hand side is minimized at
\[
\gamma=1+\varrho.
\]
Evaluating there yields
\[
\delta(a,b_{\mathrm{geo}})
\ge
\frac{\varrho}{1+\varrho}
=
\frac{1}{1+\sqrt{r_\star/r}}.
\]
Since the high-\(b_{\mathrm{geo}}\) branch attains its worst value on the same boundary, this lower
bound also controls the first region.

Therefore the worst admissible value of \(\delta(a,b_{\mathrm{geo}})\) is
\[
\delta(a,b_{\mathrm{geo}})
\ge
\frac{1}{1+\sqrt{r_\star/r}}.
\]
Hence, if \(\eta<1/(1+\sqrt{r_\star/r})\), there exists some \(\widehat\tau\in[0,a]\) such that
\[
\frac{\sqrt{1-a^2}\sqrt{1-\widehat\tau^2}+a\widehat\tau-b_{\mathrm{geo}}\widehat\tau}
{1+b_{\mathrm{geo}}\widehat\tau}
>
\eta.
\]
Multiplying by \(1+b_{\mathrm{geo}}\widehat\tau>0\) and using \eqref{eq:app-phi-ratio} yields
\[
\phi_\eta(\widehat\tau;a,b_{\mathrm{geo}})>0.
\]
\end{proof}

\begin{lemma}[Trivial certificate cases]
\label{lem:app-trivial-cases}
Assume \(E\neq 0\).
\begin{enumerate}[leftmargin=1.5em]
    \item If \(X\neq 0\) and \(Z_\perp=0\), then
    \[
    \psi_\lambda(1,0)\le -m_\mu\|E\|_F^2 < 0.
    \]
    \item If \(X=0\), then
    \[
    \psi_\lambda(0,1)\le -m_\mu\|E\|_F^2 < 0.
    \]
\end{enumerate}
\end{lemma}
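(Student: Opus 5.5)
Both cases follow the same pattern. First, the relevant probe directions reduce to the full lifted error $E$. Then Lemma~\ref{lem:app-exact-decomposition} rewrites the first-order coefficient matrix
\[
C:=2\widetilde G_M+2\mu JXX^\top J+2\lambda I
\]
as $\mathcal H^{\mathrm{avg}}_{\mu,\Delta}[E]+N_{\mu,\lambda}$. Finally, the slack $N_{\mu,\lambda}$ is discarded using $N_{\mu,\lambda}\succeq0$ and $N_{\mu,\lambda}Z=0$, and the averaged curvature bound \eqref{eq:app-lifted-curvature} finishes the argument. Strictness comes from $E\neq0$ together with $m_\mu=2\min\{\mu,\alpha_k-\mu\}>0$, which holds because $0<\mu<\alpha_k$. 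The rank condition needed for \eqref{eq:app-lifted-curvature} holds because $E$ has column space in $\operatorname{span}(X,Z)$, so its $(1,2)$-block has rank at most $r+r_\star\le k$.

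\emph{Case 1} ($X\neq0$, $Z_\perp=0$). Here $E_N=-Z_\perp Z_\perp^\top=0$, so $E=E_T$. By the gradient identity \eqref{eq:app-gradient-identity} and \eqref{eq:app-D0-identity},
\[
\psi_\lambda(1,0)=\langle C,\,D_0X^\top+XD_0^\top\rangle=-\langle C,E\rangle.
\]
Substituting the decomposition \eqref{eq:app-exact-decomposition} gives
\[
\psi_\lambda(1,0)=-\langle\mathcal H^{\mathrm{avg}}_{\mu,\Delta}[E],E\rangle-\langle N_{\mu,\lambda},E\rangle.
\]
Since $N_{\mu,\lambda}Z=0$, we have $\langle N_{\mu,\lambda},ZZ^\top\rangle=0$. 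Hence $\langle N_{\mu,\lambda},E\rangle=\langle N_{\mu,\lambda},XX^\top\rangle\ge0$ by positive semidefiniteness. The lower bound in \eqref{eq:app-lifted-curvature} then yields $\psi_\lambda(1,0)\le-m_\mu\|E\|_F^2<0$.

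\emph{Case 2} ($X=0$). Now $\Pi_X=0$, so $Z_\perp=Z$ and $E=-ZZ^\top=E_N$. The vector $u$ is an arbitrary unit vector, since $X^\top X=0$. In the Hessian identity \eqref{eq:app-hessian-identity}, each $D_iX^\top+XD_i^\top$ vanishes, so the pointwise curvature term drops out. Therefore
\[
\psi_\lambda(0,1)=\sum_{i=1}^{r_\star}\langle C,D_iD_i^\top\rangle=\langle C,ZZ^\top\rangle,
\]
using \eqref{eq:app-Di-identity}. Applying \eqref{eq:app-exact-decomposition} again, the slack term vanishes because $N_{\mu,\lambda}Z=0$. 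What remains is $\langle\mathcal H^{\mathrm{avg}}_{\mu,\Delta}[E],-E\rangle\le-m_\mu\|E\|_F^2<0$.

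There is no real obstacle in this argument. The only points needing care are the sign bookkeeping in \eqref{eq:app-D0-identity} and \eqref{eq:app-Di-identity}, and checking that the probe matrices ($E$ itself) satisfy the rank hypothesis of \eqref{eq:app-lifted-curvature}.
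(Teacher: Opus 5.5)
Your proposal is correct and follows the paper's proof essentially step for step. In both cases you reduce the probe to $\mp E$ via \eqref{eq:app-D0-identity} or \eqref{eq:app-Di-identity}, substitute the decomposition of Lemma~\ref{lem:app-exact-decomposition}, discard the PSD slack using $N_{\mu,\lambda}Z=0$, and conclude with the lower bound in \eqref{eq:app-lifted-curvature}. The only cosmetic difference is in Case~2: you observe directly that $\langle N_{\mu,\lambda},ZZ^\top\rangle=0$, whereas the paper reuses the Case~1 sign argument, which gives the same conclusion here since $XX^\top=0$.
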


\begin{proof}
These two cases do not need the full rank-threshold geometry.  If the error is entirely tangent to
the current factor span, \(D_0\) already detects it.  If \(X=0\), the second-order probes \(D_i\)
detect the whole error.  In both cases the PSD slack can be discarded in the favorable direction.

If \(X\neq 0\) and \(Z_\perp=0\), then \(E=E_T\). Using
\eqref{eq:app-gradient-identity}, \eqref{eq:app-exact-decomposition}, and
\eqref{eq:app-D0-identity},
\[
\psi_\lambda(1,0)
=
\bigl\langle \nabla g_{\mu,\lambda}(X),D_0\bigr\rangle
=
-\bigl\langle \mathcal H^{\mathrm{avg}}_{\mu,\Delta}[E]+N_{\mu,\lambda},E\bigr\rangle.
\]
Because \(N_{\mu,\lambda}\succeq 0\) and \(N_{\mu,\lambda}Z=0\),
\[
\langle N_{\mu,\lambda},E\rangle
=
\langle N_{\mu,\lambda},XX^\top-ZZ^\top\rangle
=
\langle N_{\mu,\lambda},XX^\top\rangle
\ge 0.
\]
Together with \eqref{eq:app-lifted-curvature}, this gives
\[
\psi_\lambda(1,0)
\le
-\bigl\langle \mathcal H^{\mathrm{avg}}_{\mu,\Delta}[E],E\bigr\rangle
\le
-m_\mu\|E\|_F^2.
\]

If \(X=0\), then \(E=E_N\). Using \eqref{eq:app-hessian-identity},
\eqref{eq:app-exact-decomposition}, and \eqref{eq:app-Di-identity},
\[
\psi_\lambda(0,1)
=
\left\langle
\mathcal H^{\mathrm{avg}}_{\mu,\Delta}[E]+N_{\mu,\lambda},
\sum_{i=1}^{r_\star}D_iD_i^\top
\right\rangle
=
-\bigl\langle \mathcal H^{\mathrm{avg}}_{\mu,\Delta}[E]+N_{\mu,\lambda},E\bigr\rangle,
\]
because the pointwise Hessian term vanishes when \(X=0\). The same argument yields
\[
\psi_\lambda(0,1)\le -m_\mu\|E\|_F^2.
\]
\end{proof}

\begin{lemma}[Nontrivial certificate upper bound]
\label{lem:app-nontrivial-upper-bound}
Assume \(E\neq 0\), \(X\neq 0\), and \(Z_\perp\neq 0\). Then for every \(\tau\in[0,a]\),
\begin{equation}
\label{eq:app-nontrivial-upper-bound}
\psi_\lambda\bigl(t_1(\tau),t_2(\tau)\bigr)
\le
-s_\mu\,\phi_{\vartheta_\mu}(\tau;a,b_{\mathrm{geo}})\,\|E\|_F^2,
\end{equation}
where
\(
\vartheta_\mu=(\kappa_\mu-1)/(\kappa_\mu+1)=d_\mu/s_\mu
\).
\end{lemma}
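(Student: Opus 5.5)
The plan is to expand $\psi_\lambda$ at $(t_1,t_2)=(t_1(\tau),t_2(\tau))$ using the differential identities \eqref{eq:app-gradient-identity}–\eqref{eq:app-hessian-identity}. I will then substitute the exact decomposition of Lemma~\ref{lem:app-exact-decomposition}, writing $\mathcal G:=\mathcal H^{\mathrm{avg}}_{\mu,\Delta}[E]+N_{\mu,\lambda}$ for the common first-order matrix. Finally I will bound the three resulting pieces separately: a slack term, an averaged-curvature cross term, and a pointwise-curvature term.

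\emph{Reduction.} By \eqref{eq:app-D0-identity} the gradient part equals $-t_1\langle\mathcal G,E_T\rangle$. By \eqref{eq:app-hessian-identity} and \eqref{eq:app-Di-identity}, the Hessian part equals
\[
-t_2\langle\mathcal G,E_N\rangle+\frac{t_2}{2}\sum_i\langle\mathcal H^{\mathrm{pt}}_{\mu,X}[\dot E_i],\dot E_i\rangle,
\qquad \dot E_i=D_iX^\top+XD_i^\top .
\]
Hence
\[
\psi_\lambda=-\langle \mathcal H^{\mathrm{avg}}_{\mu,\Delta}[E],F_\tau\rangle-\langle N_{\mu,\lambda},F_\tau\rangle+\frac{t_2}{2}\sum_i\langle\mathcal H^{\mathrm{pt}}_{\mu,X}[\dot E_i],\dot E_i\rangle .
\]

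\emph{Slack term.} Write $F_\tau=t_1E+(t_2-t_1)E_N=t_1E+(t_1-t_2)Z_\perp Z_\perp^\top$. Since $N_{\mu,\lambda}Z=0$, we get $\langle N_{\mu,\lambda},E\rangle=\langle N_{\mu,\lambda},XX^\top\rangle\ge0$, and $\langle N_{\mu,\lambda},Z_\perp Z_\perp^\top\rangle\ge0$ by positive semidefiniteness. For $\tau\in[0,a]$ we have $t_1(\tau)\ge1\ge\tau/a=t_2(\tau)$. So $\langle N_{\mu,\lambda},F_\tau\rangle\ge0$ and this term can be dropped.

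\emph{Averaged cross term.} This is the step needing care, since \eqref{eq:app-lifted-curvature} only controls quadratic forms on lifted matrices whose off-diagonal blocks have rank at most $k$. I will polarize: $\langle\mathcal H E,F\rangle=\frac14\bigl(\langle\mathcal H(E+F),E+F\rangle-\langle\mathcal H(E-F),E-F\rangle\bigr)$. Here $E\pm F_\tau$ has column space in $\operatorname{span}(X,Z)$, so it is admissible. Then $\mathcal H$ may be replaced by $s_\mu I$ up to an error of $d_\mu\cdot\frac14(\|E+F\|^2+\|E-F\|^2)=d_\mu\|E\|_F^2$, using $\|F_\tau\|_F=\|E\|_F$. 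With \eqref{eq:app-Ftau-identities} this gives
\[
\langle\mathcal H^{\mathrm{avg}}_{\mu,\Delta}[E],F_\tau\rangle\ge s_\mu\bigl(\sqrt{1-a^2}\sqrt{1-\tau^2}+a\tau\bigr)\|E\|_F^2-d_\mu\|E\|_F^2 .
\]

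\emph{Pointwise term.} Each $\dot E_i$ has rank at most two, so \eqref{eq:app-pointwise-curvature} gives $\langle\mathcal H^{\mathrm{pt}}_{\mu,X}[\dot E_i],\dot E_i\rangle\le L_\mu\|\dot E_i\|_F^2$. Because $X^\top Z_\perp=0$, the cross term vanishes and $\|\dot E_i\|_F^2=2\|Z_\perp e_i\|^2\|Xu\|^2=2\lambda_{\min}(X^\top X)\|Z_\perp e_i\|^2$. Summing over $i$ and using the definitions of $a$ and $b_{\mathrm{geo}}$ yields
\[
\frac{t_2}{2}\sum_i\langle\mathcal H^{\mathrm{pt}}_{\mu,X}[\dot E_i],\dot E_i\rangle\le\frac{\tau}{a}L_\mu\lambda_{\min}(X^\top X)\operatorname{tr}(Z_\perp Z_\perp^\top)=\tau L_\mu b_{\mathrm{geo}}\|E\|_F^2 .
\]

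\emph{Combining.} Adding the three bounds and writing $L_\mu=s_\mu(1+\vartheta_\mu)$ and $d_\mu=s_\mu\vartheta_\mu$ gives
\[
\psi_\lambda\le-s_\mu\bigl[\sqrt{1-a^2}\sqrt{1-\tau^2}+a\tau-(1+\vartheta_\mu)b_{\mathrm{geo}}\tau-\vartheta_\mu\bigr]\|E\|_F^2,
\]
which is exactly \eqref{eq:app-nontrivial-upper-bound}.

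The main obstacle I anticipate is the polarization step. I need to check that $E\pm F_\tau$ stays in the rank-admissible class, and that the bound is taken with respect to $\|E\|_F\|F_\tau\|_F$ rather than a cruder norm; the equal-norm identity $\|F_\tau\|_F=\|E\|_F$ is what makes the constant come out as exactly $d_\mu$.
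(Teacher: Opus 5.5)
Your proposal is correct and follows the paper's proof essentially step for step: the same reduction to $-\langle \mathcal H^{\mathrm{avg}}_{\mu,\Delta}[E]+N_{\mu,\lambda},F_\tau\rangle$ plus the pointwise Hessian terms, the same polarization bound $s_\mu\langle E,F_\tau\rangle-d_\mu\|E\|_F^2$ using $\|F_\tau\|_F=\|E\|_F$, and the same identity $\frac{t_2}{2}\sum_i\|\dot E_i\|_F^2=\tau\, b_{\mathrm{geo}}\|E\|_F^2$, which you verify explicitly via $X^\top Z_\perp=0$. Your slack step is more careful than the paper's one-line appeal to ``the same PSD-slack argument''; it is also necessary, because $\langle N_{\mu,\lambda},E_N\rangle\le 0$ has the wrong sign on its own, and the rewriting $F_\tau=t_1E+(t_1-t_2)Z_\perp Z_\perp^\top$ with $t_1(\tau)\ge 1\ge t_2(\tau)$ on $[0,a]$ supplies the missing justification.
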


\begin{proof}
This is the nontrivial case where neither the first-order nor the second-order probe alone is
guaranteed to be enough.  We combine them through \(F_\tau\).  The lower curvature bound controls
the averaged term in the direction of \(F_\tau\), the upper curvature bound controls the pointwise
Hessian terms generated by \(D_i\), and the PSD slack again has the right sign.  The function
\(\phi_{\vartheta_\mu}\) records the net gain after these contributions are combined.

Let \(\dot E_i:=D_iX^\top+XD_i^\top\). Using
\eqref{eq:app-gradient-identity}, \eqref{eq:app-hessian-identity},
\eqref{eq:app-D0-identity}, \eqref{eq:app-Di-identity}, and
\eqref{eq:app-exact-decomposition}, we obtain
\[
\psi_\lambda\bigl(t_1(\tau),t_2(\tau)\bigr)
=
-\Bigl\langle \mathcal H^{\mathrm{avg}}_{\mu,\Delta}[E]+N_{\mu,\lambda},F_\tau\Bigr\rangle
+
\frac{t_2(\tau)}{2}\sum_{i=1}^{r_\star}
\bigl\langle \mathcal H^{\mathrm{pt}}_{\mu,X}[\dot E_i],\dot E_i\bigr\rangle.
\]
Because \(N_{\mu,\lambda}\succeq 0\) and \(N_{\mu,\lambda}Z=0\), the same PSD-slack argument as
above gives
\[
-\bigl\langle N_{\mu,\lambda},F_\tau\bigr\rangle\le 0.
\]
Also, as noted in the certificate construction, each \(\dot E_i\) has admissible off-diagonal rank,
so \eqref{eq:app-pointwise-curvature} yields
\[
\bigl\langle \mathcal H^{\mathrm{pt}}_{\mu,X}[\dot E_i],\dot E_i\bigr\rangle
\le
L_\mu\|\dot E_i\|_F^2.
\]

By polarization and \eqref{eq:app-lifted-curvature},
\[
\bigl\langle \mathcal H^{\mathrm{avg}}_{\mu,\Delta}[E],F_\tau\bigr\rangle
\ge
s_\mu\langle E,F_\tau\rangle - d_\mu\|E\|_F^2.
\]
Substituting this bound and \eqref{eq:app-Ftau-identities}, and using the geometric identity
\[
\frac{t_2(\tau)}{2}\sum_{i=1}^{r_\star}\|\dot E_i\|_F^2
=
\tau\,b_{\mathrm{geo}}\|E\|_F^2,
\]
we get
\begin{align*}
\psi_\lambda\bigl(t_1(\tau),t_2(\tau)\bigr)
&\le
-\Bigl[s_\mu\bigl(\sqrt{1-a^2}\sqrt{1-\tau^2}+a\tau\bigr)-d_\mu\Bigr]\|E\|_F^2
+
L_\mu\tau b_{\mathrm{geo}}\|E\|_F^2\\
&=
-s_\mu\,\phi_{\vartheta_\mu}(\tau;a,b_{\mathrm{geo}})\,\|E\|_F^2.
\end{align*}
\end{proof}

\begin{proof}[Proof of Theorem~\ref{thm:main-exact-sosp}]
Assume toward contradiction that \(E=XX^\top-ZZ^\top\neq 0\).

Second-order stationarity implies that every nonnegative combination of the certificates above must
be nonnegative.  The lemmas show that, under the rank condition, every possible nonzero shape of
\(E\) admits one certificate that is strictly negative.  This contradiction is the recovery
argument.

If \(X=0\), then Lemma~\ref{lem:app-trivial-cases}(2) gives
\[
\psi_\lambda(0,1)<0,
\]
contradicting \eqref{eq:app-certificate-nonnegative}.

If \(X\neq 0\) but \(Z_\perp=0\), then Lemma~\ref{lem:app-trivial-cases}(1) gives
\[
\psi_\lambda(1,0)<0,
\]
again contradicting \eqref{eq:app-certificate-nonnegative}.

It remains to consider the nontrivial case \(X\neq 0\) and \(Z_\perp\neq 0\). Since the main text
fixes \(\mu=\alpha_k/2\), the lifted curvature ratio satisfies
\[
\vartheta_\mu
=
\frac{\kappa_\mu-1}{\kappa_\mu+1}
=
\frac{\beta_k-\alpha_k}{\beta_k}.
\]
The rank condition in Theorem~\ref{thm:main-exact-sosp} is equivalent to
\[
\vartheta_\mu < \frac{1}{1+\sqrt{r_\star/r}},
\]
so Lemma~\ref{lem:app-phi-positive} gives some \(\tau\in[0,a]\) such that
\[
\phi_{\vartheta_\mu}(\tau;a,b_{\mathrm{geo}})>0.
\]
Applying Lemma~\ref{lem:app-nontrivial-upper-bound} at this \(\tau\) yields
\[
\psi_\lambda\bigl(t_1(\tau),t_2(\tau)\bigr)
\le
-s_\mu\,\phi_{\vartheta_\mu}(\tau;a,b_{\mathrm{geo}})\,\|E\|_F^2
<0,
\]
contradicting \eqref{eq:app-certificate-nonnegative}. Therefore \(E=0\).

Hence
\[
XX^\top=ZZ^\top.
\]
Taking the \((1,2)\)-block gives
\[
UV^\top = U_\star V_\star^\top = M_\star^\lambda.
\]
Moreover, \(XX^\top=ZZ^\top\) implies that \(X=ZR\) for some matrix
\(R\in\mathbb R^{r_\star\times r}\) with \(RR^\top=I_{r_\star}\). Hence
\(U=U_\star R\) and \(V=V_\star R\), so \(U^\top U=V^\top V\) and
\[
\frac12\bigl(\|U\|_F^2+\|V\|_F^2\bigr)
=
\|M_\star^\lambda\|_*.
\]
Thus the factor pair attains
\[
h_{\mu,\lambda}(U,V)
=
\widehat L(M_\star^\lambda)+\lambda\|M_\star^\lambda\|_*
=
F_\lambda(M_\star^\lambda),
\]
which is the global infimum for $F_\lambda(M)$ in Assumption~\ref{asm:main-regularized-target}. Therefore
\((U,V)\) is globally optimal.
\end{proof}

\subsubsection{General balancing weights}
\label{sec:app-general-mu}

The preceding certificate argument also yields an explicit rank condition for any balancing
weight in \(0<\mu<\alpha_k\). We record this extension using the lifted curvature constants
introduced above.

\begin{corollary}[SOSP recovery for general balancing weights]
\label{cor:app-general-mu}
Suppose Assumption~\ref{asm:main-regularized-target} holds with
\(1\le r_\star\le r\) and \(\lambda>0\). Let
\(0<\alpha_k<\beta_k<\infty\), fix \(0<\mu<\alpha_k\), and set
\[
m_\mu:=2\min\{\mu,\alpha_k-\mu\},\qquad
L_\mu:=2\max\{\mu,\beta_k-\mu\},\qquad
\kappa_\mu:=\frac{L_\mu}{m_\mu}.
\]
Every SOSP \((U,V)\) of \(h_{\mu,\lambda}\) whose product \(M=UV^\top\) satisfies
Assumption~\ref{asm:main-rsc-rss} with these constants and \(k\ge r+r_\star\)
is globally optimal and satisfies \(UV^\top=M_\star^\lambda\) whenever
\begin{equation}
\label{eq:app-general-mu-rank-condition}
\frac{r}{r_\star}>\left(\frac{\kappa_\mu-1}{2}\right)^2.
\end{equation}
Moreover, the corresponding stacked factors satisfy \(XX^\top=ZZ^\top\).
\end{corollary}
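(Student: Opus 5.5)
The plan is to rerun the proof of Theorem~\ref{thm:main-exact-sosp} verbatim, without fixing $\mu=\alpha_k/2$. None of the lemmas used there depends on that choice. Proposition~\ref{prop:app-kkt-slack} and Lemma~\ref{lem:app-exact-decomposition} are exact identities valid for any $\mu>0$. The lifted curvature bounds \eqref{eq:app-lifted-curvature} and \eqref{eq:app-pointwise-curvature} were derived for general $\mu$ with constants $m_\mu,L_\mu$; they require $m_\mu>0$, which is exactly $0<\mu<\alpha_k$. The geometric Lemmas~\ref{lem:app-scalar-clipping}, \ref{lem:app-blockwise-ey}, \ref{lem:app-admissible-geometry} and \ref{lem:app-phi-positive} do not involve $\mu$ at all.

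Concretely, assume an SOSP with $E\neq0$ and split into the same three cases.
\begin{itemize}
\item If $X=0$, Lemma~\ref{lem:app-trivial-cases}(2) gives $\psi_\lambda(0,1)\le -m_\mu\|E\|_F^2<0$.
\item If $X\neq0$ and $Z_\perp=0$, Lemma~\ref{lem:app-trivial-cases}(1) gives $\psi_\lambda(1,0)<0$.
\item If $X\neq0$ and $Z_\perp\neq0$, Lemma~\ref{lem:app-nontrivial-upper-bound} gives $\psi_\lambda(t_1(\tau),t_2(\tau))\le -s_\mu\phi_{\vartheta_\mu}(\tau;a,b_{\mathrm{geo}})\|E\|_F^2$ with $\vartheta_\mu=(\kappa_\mu-1)/(\kappa_\mu+1)$, and $s_\mu>0$.
\end{itemize}
Each strict negativity contradicts \eqref{eq:app-certificate-nonnegative}. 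For the third case, Lemma~\ref{lem:app-phi-positive} supplies a $\tau$ with $\phi_{\vartheta_\mu}>0$ once $\vartheta_\mu<1/(1+\sqrt{r_\star/r})$.

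The remaining step is the algebra converting this into \eqref{eq:app-general-mu-rank-condition}. Set $\varrho=\sqrt{r/r_\star}$. Then $(\kappa-1)/(\kappa+1)<\varrho/(1+\varrho)$ is equivalent to $(\kappa-1)(1+\varrho)<\varrho(\kappa+1)$, i.e. $\kappa-1<2\varrho$, i.e. $\varrho>(\kappa_\mu-1)/2$. Squaring is legitimate because both sides are nonnegative ($\kappa_\mu\ge1$), and this gives $r/r_\star>((\kappa_\mu-1)/2)^2$. As a sanity check, with $\mu=\alpha_k/2$ we get $m_\mu=\alpha_k$ and $L_\mu=2\beta_k-\alpha_k$ (since $\beta_k-\mu\ge\mu$). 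Then $(\kappa_\mu-1)/2=(\beta_k-\alpha_k)/\alpha_k=2\delta/(1-\delta)$, which recovers \eqref{eq:main-rank-threshold}.

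After the contradiction, $XX^\top=ZZ^\top$. Global optimality follows exactly as at the end of the proof of Theorem~\ref{thm:main-exact-sosp}: $X=ZR$ with $RR^\top=I$, the factors are balanced, and $h_{\mu,\lambda}(U,V)=F_\lambda(M_\star^\lambda)$.

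The only point needing care is to confirm that Lemma~\ref{lem:app-nontrivial-upper-bound} used nothing special about $\mu$. The polarization step $\langle\mathcal H[E],F\rangle\ge s_\mu\langle E,F\rangle-d_\mu\|E\|_F^2$ requires just the two-sided bound with $m_\mu>0$ together with $\|F_\tau\|_F=\|E\|_F$. The pointwise upper bound uses $L_\mu$. Both hold for any $\mu\in(0,\alpha_k)$, so I expect no real obstacle beyond this bookkeeping.
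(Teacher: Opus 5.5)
Your proposal is correct and matches the paper's proof essentially step for step. The paper likewise notes that $0<\mu<\alpha_k$ is exactly what keeps $m_\mu>0$ (with the KKT slack still positive semidefinite), rewrites the rank condition as $\vartheta_\mu<1/(1+\sqrt{r_\star/r})$ via the same reduction to $\kappa_\mu-1<2\sqrt{r/r_\star}$, runs the identical three-case certificate contradiction using Lemmas~\ref{lem:app-trivial-cases}, \ref{lem:app-phi-positive} and \ref{lem:app-nontrivial-upper-bound}, and concludes global optimality as at the end of the proof of Theorem~\ref{thm:main-exact-sosp}.
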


\begin{proof}
For \(0<\mu<\alpha_k\), the lifted curvature bounds
\eqref{eq:app-lifted-curvature}--\eqref{eq:app-pointwise-curvature} hold with
\(m_\mu>0\), and the KKT slack in Lemma~\ref{lem:app-exact-decomposition} remains
positive semidefinite. Put \(t:=\sqrt{r_\star/r}>0\). Since \(\kappa_\mu\ge1\),
\eqref{eq:app-general-mu-rank-condition} is equivalent to
\[
(\kappa_\mu-1)t<2
\quad\Longleftrightarrow\quad
\vartheta_\mu:=\frac{\kappa_\mu-1}{\kappa_\mu+1}
<\frac{1}{1+t}.
\]
If an SOSP had \(E=XX^\top-ZZ^\top\ne0\),
Lemma~\ref{lem:app-trivial-cases} would give a strictly negative certificate when
\(X=0\) or \(Z_\perp=0\). In the remaining case,
Lemma~\ref{lem:app-phi-positive} gives a \(\tau\in[0,a]\) with
\(\phi_{\vartheta_\mu}(\tau;a,b_{\mathrm{geo}})>0\), and
Lemma~\ref{lem:app-nontrivial-upper-bound} yields
\[
\psi_\lambda\bigl(t_1(\tau),t_2(\tau)\bigr)
\le -s_\mu\phi_{\vartheta_\mu}(\tau;a,b_{\mathrm{geo}})\|E\|_F^2<0,
\qquad s_\mu:=\frac{L_\mu+m_\mu}{2}>0.
\]
Both cases contradict \eqref{eq:app-certificate-nonnegative}. Hence \(E=0\), and
balanced recovery and global optimality follow exactly as in the concluding step of the
preceding theorem proof.
\end{proof}

For the main-text choice \(\mu=\alpha_k/2\), these constants reduce to
\[
m_\mu=\alpha_k,\qquad L_\mu=2\beta_k-\alpha_k,\qquad
\frac{\kappa_\mu-1}{2}
=\frac{\beta_k-\alpha_k}{\alpha_k}
=\frac{2\delta_k^{\mathrm{LoRA}}}{1-\delta_k^{\mathrm{LoRA}}}.
\]
Thus \eqref{eq:app-general-mu-rank-condition} recovers the main-text rank condition
\eqref{eq:main-rank-threshold}. Proposition~\ref{prop:main-decay-only-balance} and the
transfer argument below also apply to this general-\(\mu\) guarantee.

\subsection{Proofs for the Transfer to Standard LoRA}
\label{sec:app-proof-standard-lora-transfer}

\begin{proof}[Proof of Proposition~\ref{prop:main-decay-only-balance}]
Let \(M:=UV^\top\) and \(G:=\nabla \widehat L(M)\). First-order stationarity of
\(\widetilde h_\lambda\) gives
\[
GV+\lambda U=0,
\qquad
G^\top U+\lambda V=0.
\]
Left-multiplying these equations by \(U^\top\) and \(V^\top\), respectively, and subtracting yields
\[
U^\top G V - V^\top G^\top U + \lambda(U^\top U - V^\top V)=0.
\]
If \(A:=U^\top G V\), then the first term is \(A-A^\top\), which is skew-symmetric, whereas
\(U^\top U - V^\top V\) is symmetric. Their sum is zero, so both terms must vanish. Hence
\[
U^\top U = V^\top V.
\]

Now define
\[
R(U,V):=\frac14\|U^\top U-V^\top V\|_F^2,
\qquad
h_{\mu,\lambda}=\widetilde h_\lambda+\mu R.
\]
At every balanced point, \(R(U,V)=0\), so \((U,V)\) is a global minimizer of the nonnegative
\(C^2\) function \(R\). Therefore
\[
\nabla R(U,V)=0,
\qquad
\nabla^2 R(U,V)\succeq 0.
\]
If \((U,V)\) is an SOSP of \(\widetilde h_\lambda\), then adding \(\mu R\) preserves both
first-order stationarity and second-order nonnegativity. Thus \((U,V)\) is also an SOSP of
\(h_{\mu,\lambda}\) for every \(\mu\ge 0\).
\end{proof}

\begin{proof}[Proof of Corollary~\ref{cor:main-decay-only-transfer}]
Fix the same parameter \(\mu\in(0,\alpha_k)\) as in Theorem~\ref{thm:main-exact-sosp}. By
Proposition~\ref{prop:main-decay-only-balance}, every SOSP of the standard LoRA objective
\eqref{eq:main-decay-only-objective} is also an SOSP of the balanced objective
\eqref{eq:main-balanced-lora-objective}. Theorem~\ref{thm:main-exact-sosp} therefore gives
\[
UV^\top=M_\star^\lambda.
\]
At this point, the balancing penalty vanishes, so
\[
\widetilde h_\lambda(U,V)
=
\widehat L(M_\star^\lambda)+\frac{\lambda}{2}\bigl(\|U\|_F^2+\|V\|_F^2\bigr)
=
\widehat L(M_\star^\lambda)+\lambda\|M_\star^\lambda\|_*,
\]
where the last equality follows from the balanced square-root characterization of $M_\star^\lambda$. Hence the standard LoRA factor pair attains the
same value as the optimal rank-constrained matrix-space problem, so it is globally optimal for the
standard LoRA objective as well.
\end{proof}

\section{Additional Proofs for Section~\ref{sec:data-dependent-curvature-lora}}
\label{sec:app-section5-proofs}

This appendix section follows the notation of Section~\ref{sec:data-dependent-curvature-lora}.
The empirical distribution \(\nu\) over supervised prediction instances defines
\(\widehat L_\nu\). We first specify the certificate region and admissible family underlying
the data-side bounds, then derive the cross-entropy identities and curvature certificate.
Recall that \(x\) denotes a supervised prediction instance induced by the original examples,
\(z_x(M)\) is the model's pre-softmax logit vector for this instance,
\(p_x(M)=\operatorname{softmax}(z_x(M))\) is the predicted probability vector, and \(q_x\) is the
target probability vector. We also write
\[
Q_\nu(M,H):=\mathbb E_\nu\|P_\perp A_{x,M}[H]\|_2^2
\]
for the projected, unweighted coverage form.

\subsection{Certificate Region and Admissible Family}
\label{sec:app-data-admissible-family}

Fix the empirical distribution \(\nu\), search rank \(r\), and a regularized target
\(M_{\nu,\star}^{\lambda}\) from Assumption~\ref{asm:main-regularized-target}, with
\(r_{\nu,\star}=\operatorname{rank}(M_{\nu,\star}^{\lambda})\le r\).
Let
\[
\mathcal M_r
:=\{M\in\mathbb R^{n_1\times n_2}:\operatorname{rank}(M)\le r\}.
\]
The certificate region collects the target--candidate segments in
Assumption~\ref{asm:main-rsc-rss}:
\begin{equation}
\label{eq:app-data-certificate-region}
\mathcal M_{\mathrm{cert}}(\nu,r)
:=\bigl\{(1-t)M_{\nu,\star}^{\lambda}+tM:
M\in\mathcal M_r,\ t\in[0,1]\bigr\}.
\end{equation}
For \(k\ge r+r_{\nu,\star}\), define the admissible family
\begin{equation}
\label{eq:app-data-admissible-family}
\mathcal T_k(\nu,r)
:=\left\{\begin{aligned}
(\widetilde M,H):\ &\widetilde M\in\mathcal M_{\mathrm{cert}}(\nu,r),\\
&H\in\mathbb R^{n_1\times n_2},\quad
\operatorname{rank}(H)\le k,\quad \|H\|_F=1
\end{aligned}\right\}.
\end{equation}
Thus the matrix points include both endpoints and the full intervening segments, while
\(H\) ranges over all unit rank-at-most-\(k\) directions. The segment points can have rank
up to \(r+r_{\nu,\star}\), even though each candidate update has rank at most \(r\).
For a candidate-specific application, the same definitions may be restricted to the single
segment joining that candidate to the target; the resulting bounds then apply to that
candidate. Unless stated otherwise, the quantities below use the full family
\(\mathcal T_k(\nu,r)\) defined above.

The coverage bounds and nonlinear residual bound are
\begin{align}
\label{eq:app-data-coverage-extrema}
c_{\nu,k}&:=\inf_{(M,H)\in\mathcal T_k(\nu,r)}Q_\nu(M,H),
\qquad C_{\nu,k}:=\sup_{(M,H)\in\mathcal T_k(\nu,r)}Q_\nu(M,H),\\
\label{eq:app-data-residual-extremum}
\rho_{\nu,k}&:=\sup_{(M,H)\in\mathcal T_k(\nu,r)}|R_\nu(M,H)|.
\end{align}
To define the softmax-weighting bounds, set
\[
\mathcal T_k^+(\nu,r)
:=\{(M,H)\in\mathcal T_k(\nu,r):Q_\nu(M,H)>0\}.
\]
Whenever this set is nonempty, let
\begin{equation}
\label{eq:app-data-softmax-extrema}
\omega_{\nu,k}^-:=\inf_{(M,H)\in\mathcal T_k^+(\nu,r)}
\frac{G_\nu(M,H)}{Q_\nu(M,H)},
\qquad
\omega_{\nu,k}^+:=\sup_{(M,H)\in\mathcal T_k^+(\nu,r)}
\frac{G_\nu(M,H)}{Q_\nu(M,H)}.
\end{equation}
All extrema are understood as infima or suprema; attainment is not required. Their dependence
on \(r\) through the admissible family is suppressed in the subscripts, as in the main text.
The data-side certificate is invoked when
\[
0<c_{\nu,k}\le C_{\nu,k}<\infty,\qquad
0<\omega_{\nu,k}^-\le\omega_{\nu,k}^+<\infty,\qquad
\rho_{\nu,k}<\omega_{\nu,k}^-c_{\nu,k}.
\]
In particular, \(c_{\nu,k}>0\) implies
\(\mathcal T_k^+(\nu,r)=\mathcal T_k(\nu,r)\), so all directional ratios used in the
certificate are well-defined. The proof in
Appendix~\ref{sec:app-proof-data-side-curvature-certificate} then supplies uniform restricted
curvature bounds on these same target--candidate segments.

\subsection{Cross-Entropy Hessian Decomposition}
\label{sec:app-data-ce-hessian}

\begin{proposition}[Cross-entropy Hessian]
\label{prop:app-data-ce-hessian}
Let \(z_x(M)\in\mathbb R^K\) be the pre-softmax logits for supervised instance \(x\), let
\(p_x(M):=\operatorname{softmax}(z_x(M))\) be the predicted probabilities, and let
\(q_x\in\Delta^{K-1}\) be the target probabilities. Define
\[
\ell_x(M):=-\langle q_x,\log p_x(M)\rangle.
\]
For any matrix direction \(H\), define the first- and second-order logit responses
\[
A_{x,M}[H]:=Dz_x(M)[H],
\qquad
\mathcal B_{x,M}[H,H]:=D^2z_x(M)[H,H],
\]
and define the softmax covariance matrix
\(\Omega_x(M):=\operatorname{Diag}(p_x(M))-p_x(M)p_x(M)^\top\). Then
\[
D^2\ell_x(M)[H,H]
=
\langle A_{x,M}[H],\Omega_x(M)A_{x,M}[H]\rangle
+
\langle p_x(M)-q_x,\mathcal B_{x,M}[H,H]\rangle.
\]
Consequently,
\[
\langle\nabla^2\widehat L_\nu(M)[H],H\rangle
=
G_\nu(M,H)+R_\nu(M,H),
\]
with \(G_\nu\) and \(R_\nu\) as defined in Section~\ref{sec:data-dependent-curvature-lora}.
\end{proposition}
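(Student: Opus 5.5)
The plan is a direct chain-rule computation, done first for a single instance \(x\) and then averaged over \(\nu\). Write \(\ell_x(M)=\Phi_x(z_x(M))\), where \(\Phi_x(z):=-\langle q_x,\log\operatorname{softmax}(z)\rangle\). Since \(\sum_j q_{x,j}=1\), this simplifies to
\[
\Phi_x(z)=\operatorname{lse}(z)-\langle q_x,z\rangle,
\qquad
\operatorname{lse}(z):=\log\sum_j e^{z_j}.
\]

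The first step computes the derivatives of \(\Phi_x\) with respect to the logits. These are standard:
\[
\nabla\operatorname{lse}(z)=\operatorname{softmax}(z)=p,
\qquad
\nabla^2\operatorname{lse}(z)=\operatorname{Diag}(p)-pp^\top=\Omega,
\]
where the Hessian comes from differentiating \(p_j=e^{z_j}/\sum_l e^{z_l}\), giving \(\partial p_j/\partial z_l=p_j(\mathbb 1_{j=l}-p_l)\). It follows that \(\nabla\Phi_x(z)=p-q_x\) and \(\nabla^2\Phi_x(z)=\Omega\).

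The second step is the second-order chain rule for the composition with the \(C^2\) logit map \(M\mapsto z_x(M)\). Fix \(H\) and set \(\gamma(s):=z_x(M+sH)\), so that \(\gamma'(0)=A_{x,M}[H]\) and \(\gamma''(0)=\mathcal B_{x,M}[H,H]\). Then
\[
\frac{d^2}{ds^2}\Phi_x(\gamma(s))\Big|_{s=0}
=\langle\gamma'(0),\nabla^2\Phi_x\,\gamma'(0)\rangle+\langle\nabla\Phi_x,\gamma''(0)\rangle,
\]
which is exactly the claimed per-instance identity
\[
D^2\ell_x(M)[H,H]
=\langle A_{x,M}[H],\Omega_x(M)A_{x,M}[H]\rangle
+\langle p_x(M)-q_x,\mathcal B_{x,M}[H,H]\rangle .
\]

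The final step takes expectations over \(x\sim\nu\). The empirical measure is a finite average, so differentiation and expectation commute. Identifying
\[
\langle\nabla^2\widehat L_\nu(M)[H],H\rangle=\frac{d^2}{ds^2}\widehat L_\nu(M+sH)\Big|_{s=0}
\]
then gives \(G_\nu+R_\nu\) by the definitions in Section~\ref{sec:data-dependent-curvature-lora}.

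There is no real obstacle here. The only care needed is a regularity convention: we assume \(z_x\) is twice differentiable in \(M\), which is implicit in the definition of \(\mathcal B_{x,M}\). We also use \(\mathbf 1^\top q_x=1\) to obtain the clean form \(p-q_x\) for the gradient.
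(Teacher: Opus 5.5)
Your proposal is correct and follows the same route as the paper. Both compute the logit-space gradient $p-q_x$ and Hessian $\operatorname{Diag}(p)-pp^\top$ of the cross-entropy, apply the second-order chain rule along $s\mapsto z_x(M+sH)$, and average over the finite empirical measure $\nu$. Your log-sum-exp rewriting via $\mathbf 1^\top q_x=1$ simply makes explicit the gradient formula that the paper states without derivation.
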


\begin{proof}
For \(\phi(z,q):=-\langle q,\log\operatorname{softmax}(z)\rangle\), one has
\[
\nabla_z\phi(z,q)=p-q,
\qquad
\nabla_z^2\phi(z,q)=\operatorname{Diag}(p)-pp^\top.
\]
Applying the second-order chain rule to \(\ell_x(M)=\phi(z_x(M),q_x)\) gives the displayed
identity. Since \(\Omega_x(M)\mathbf 1=0\), common shifts of all logits are ignored by the
Gauss--Newton term. Averaging over \(x\sim\nu\) gives the identity for \(\widehat L_\nu\).
\end{proof}

\subsection{Proof of the Data-Side Curvature Certificate}
\label{sec:app-proof-data-side-curvature-certificate}

We derive the conservative LoRA-RIP certificate used in
\eqref{eq:main-ce-lora-rip-certificate}. Fix an admissible pair
\((M,H)\in\mathcal T_k(\nu,r)\). Since \(\|H\|_F=1\), the definitions in
Appendix~\ref{sec:app-data-admissible-family} give
\[
c_{\nu,k}\le Q_\nu(M,H)\le C_{\nu,k}.
\]
Moreover, the data-side restricted curvature condition in
Section~\ref{subsec:data-lora-rip-metric} gives
\(c_{\nu,k}>0\), so every admissible pair has \(Q_\nu(M,H)\ge c_{\nu,k}>0\). The ratios defining
\(\omega_{\nu,k}^{\pm}\) are therefore well-defined on this family, and
\[
\omega_{\nu,k}^- Q_\nu(M,H)
\le
G_\nu(M,H)
\le
\omega_{\nu,k}^+ Q_\nu(M,H).
\]
Hence
\[
\omega_{\nu,k}^-c_{\nu,k}
\le
G_\nu(M,H)
\le
\omega_{\nu,k}^+C_{\nu,k}.
\]
Together with \(|R_\nu(M,H)|\le\rho_{\nu,k}\), the Hessian decomposition in
Proposition~\ref{prop:app-data-ce-hessian} implies
\[
\omega_{\nu,k}^-c_{\nu,k}-\rho_{\nu,k}
\le
\langle\nabla^2\widehat L_\nu(M)[H],H\rangle
\le
\omega_{\nu,k}^+C_{\nu,k}+\rho_{\nu,k}.
\]
By homogeneity, the same bounds hold for non-unit directions after multiplying by
\(\|H\|_F^2\). Therefore the data induce valid restricted curvature constants
\[
\alpha_{\nu,k}^{\mathrm{CE}}
=
\omega_{\nu,k}^-c_{\nu,k}-\rho_{\nu,k},
\qquad
\beta_{\nu,k}^{\mathrm{CE}}
=
\omega_{\nu,k}^+C_{\nu,k}+\rho_{\nu,k}.
\]
Substituting these constants into the symmetric LoRA-RIP normalization
\((\beta-\alpha)/(\beta+\alpha)\) gives
\[
\delta_{\nu,k}^{\mathrm{LoRA}}
\le
\overline\delta_{\nu,k}^{\mathrm{LoRA}}
=
\frac{
\omega_{\nu,k}^+C_{\nu,k}
-
\omega_{\nu,k}^-c_{\nu,k}
+
2\rho_{\nu,k}
}{
\omega_{\nu,k}^+C_{\nu,k}
+
\omega_{\nu,k}^-c_{\nu,k}
},
\]
which is \eqref{eq:main-ce-lora-rip-certificate}. Plugging this conservative value into
Theorem~\ref{thm:main-exact-sosp} yields the rank budget stated in the main text.

\subsection{Mode Imbalance and Duplicate Sampling}
\label{sec:app-data-descriptor-identities}

\begin{proposition}[Group mass controls coverage conditioning]
\label{prop:app-longtail-effect}
A group mixture means the following two-stage sampling distribution: first choose a group
\(g\in\{1,\ldots,G\}\) with probability \(\pi_g\), and then draw \(x\sim\nu_g\). Equivalently,
\(\nu_\pi=\sum_{g=1}^G\pi_g\nu_g\). Let \(\mathcal T\) be the admissible comparison family for
the mixture, for example \(\mathcal T_k(\nu_\pi,r)\). For every fixed \((M,H)\in\mathcal T\),
\[
Q_{\nu_\pi}(M,H)
=
\sum_{g=1}^G\pi_g Q_{\nu_g}(M,H),
\qquad
G_{\nu_\pi}(M,H)
=
\sum_{g=1}^G\pi_g G_{\nu_g}(M,H).
\]
Let \((M_+,H_+)\in\mathcal T\) be a direction strongly excited by group \(g_+\), and define
\[
U_+:=Q_{\nu_{g_+}}(M_+,H_+).
\]
Let \((M_-,H_-)\in\mathcal T\) be a direction that is weakly covered under the mixture. For a
possibly different group \(g_-\), define its weak-direction contribution and the leakage from the
other groups by
\[
L_-:=Q_{\nu_{g_-}}(M_-,H_-),
\qquad
\varepsilon_-(\pi):=\sum_{j\ne g_-}\pi_jQ_{\nu_j}(M_-,H_-).
\]
The leakage term \(\varepsilon_-(\pi)\) is small when the groups are well separated in their
projected logit-response directions.
Then
\[
\sup_{(M,H)\in\mathcal T}Q_{\nu_\pi}(M,H)
\ge
\pi_{g_+}U_+,
\qquad
\inf_{(M,H)\in\mathcal T}Q_{\nu_\pi}(M,H)
\le
Q_{\nu_\pi}(M_-,H_-)=
\pi_{g_-}L_-+\varepsilon_-(\pi).
\]
Consequently, if the denominator below is positive,
\[
\frac{\sup_{(M,H)\in\mathcal T}Q_{\nu_\pi}(M,H)}
     {\inf_{(M,H)\in\mathcal T}Q_{\nu_\pi}(M,H)}
\ge
\frac{\pi_{g_+}U_+}{\pi_{g_-}L_-+\varepsilon_-(\pi)} .
\]
In words, a dataset can have poor coverage conditioning when too much mass is concentrated on a
mode that already gives large coverage response, while another mode receives too little mass and is
not covered by the remaining groups. Duplicate samples are the limiting case in which one mode is a
single atom: they reinforce the projected logit-response directions excited by that atom rather than
increasing coverage uniformly.
\end{proposition}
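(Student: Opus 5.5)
The argument uses only linearity of the expectation in the sampling measure and the definition of infimum and supremum. I would prove the statement in three steps.

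First, the two mixture identities. By the two-stage description, for any integrable function $f$ of the instance $x$ we have
\[
\mathbb E_{x\sim\nu_\pi}f(x)=\sum_{g=1}^G\pi_g\,\mathbb E_{x\sim\nu_g}f(x).
\]
For fixed $(M,H)$, the maps $x\mapsto\|P_\perp A_{x,M}[H]\|_2^2$ and $x\mapsto\langle A_{x,M}[H],\Omega_x(M)A_{x,M}[H]\rangle$ depend only on $x$. They do not depend on the sampling measure, because $z_x$, $p_x$ and $\Omega_x$ are per-instance quantities. Applying the displayed identity to these two maps gives the expressions for $Q_{\nu_\pi}$ and $G_{\nu_\pi}$.

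Second, the sup and inf bounds. Since $(M_+,H_+)\in\mathcal T$, the supremum is at least $Q_{\nu_\pi}(M_+,H_+)=\sum_g\pi_gQ_{\nu_g}(M_+,H_+)$. Every summand is nonnegative because $Q_{\nu_g}\ge 0$, so I keep only the $g_+$ term and obtain $\ge\pi_{g_+}U_+$. Likewise, the infimum is at most $Q_{\nu_\pi}(M_-,H_-)$. Splitting the sum into $g=g_-$ and $g\ne g_-$ gives exactly $\pi_{g_-}L_-+\varepsilon_-(\pi)$.

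Third, the ratio bound. If the denominator $\pi_{g_-}L_-+\varepsilon_-(\pi)$ is positive, then the infimum of $Q_{\nu_\pi}$ is positive, or at least bounded above by a positive number. I would treat one subcase separately: if the infimum is $0$, the ratio is $+\infty$ and the claim holds trivially. Otherwise, dividing the larger numerator by the smaller positive denominator gives the bound.

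None of these steps is a genuine obstacle. The only care needed is the nonnegativity of each $Q_{\nu_g}$, which licenses dropping terms, and the edge case of a zero infimum. The closing remarks about duplicates are interpretive, with a single atom $\nu_{g}=\delta_{x_0}$, and need no proof.
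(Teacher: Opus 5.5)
Your proposal is correct and matches the paper's proof: linearity of expectation over the mixture gives both identities, nonnegativity of each $Q_{\nu_g}$ lets you keep only the $g_+$ term at $(M_+,H_+)$, evaluating at $(M_-,H_-)$ bounds the infimum, and the ratio bound follows by combining the two inequalities. Your separate treatment of a zero infimum is extra care that the paper leaves implicit. In that case the supremum is at least $Q_{\nu_\pi}(M_-,H_-)=\pi_{g_-}L_-+\varepsilon_-(\pi)>0$, so the ratio is genuinely $+\infty$ rather than $0/0$.
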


\begin{proof}
Linearity of expectation gives
\[
Q_{\nu_\pi}(M,H)
=
\sum_{j=1}^G\pi_jQ_{\nu_j}(M,H),
\qquad
G_{\nu_\pi}(M,H)
=
\sum_{j=1}^G\pi_jG_{\nu_j}(M,H).
\]
Evaluating the \(Q\)-identity at \((M_+,H_+)\) and using nonnegativity gives the upper-side bound.
Evaluating it at
\((M_-,H_-)\) gives
\(\inf_{(M,H)\in\mathcal T}Q_{\nu_\pi}(M,H)\le
Q_{\nu_\pi}(M_-,H_-)=\pi_{g_-}L_-+\varepsilon_-(\pi)\).
The ratio bound follows by combining the two inequalities.
\end{proof}

\subsection{Gradient-Sketch Proxy for Experiments}
\label{sec:app-data-gradient-sketch-proxy}

At a fixed operating point \(M_0\), the Gauss--Newton term can be viewed as a covariance on
vectorized LoRA-update directions after they pass through the logit Jacobian. If
\(h=\operatorname{vec}(H)\) and \(J_{x,M_0}\) is the matrix of the
linear map \(H\mapsto \Omega_x(M_0)^{1/2}A_{x,M_0}[H]\), then
\[
G_\nu(M_0,H)
=
h^\top
\left(\mathbb E_{x\sim\nu}J_{x,M_0}^\top J_{x,M_0}\right)
h .
\]
Thus any low-dimensional sketch that preserves the main modes of the logit Jacobian gives
an empirical proxy for the weighted coverage spectrum. The gradient groups used in
Section~\ref{sec:experiments} should be read in this sense: they cluster examples by a
margin-Jacobian sketch at a shared reference point. These sketches are not estimates of the exact
constants \(c_{\nu,k}\), \(C_{\nu,k}\), or \(\omega_{\nu,k}^{\pm}\), but they are aligned with the
directions that enter the data-side certificate.

\subsection{Directional Softmax Saturation and Curvature Leverage}
\label{sec:app-data-softmax-saturation}

\begin{proposition}[Softmax covariance as variance]
\label{prop:app-data-softmax-variance}
For \(p\in\Delta^{K-1}\), \(\Omega(p):=\operatorname{Diag}(p)-pp^\top\), and any
\(u\in\mathbb R^K\),
\[
u^\top \Omega(p)u
=
\sum_{c=1}^K p_cu_c^2-\left(\sum_{c=1}^K p_cu_c\right)^2
=
\min_{a\in\mathbb R}\sum_{c=1}^K p_c(u_c-a)^2.
\]
In particular, \(\Omega(p)\mathbf 1=0\), so only centered logit perturbations contribute to
Gauss--Newton curvature.
\end{proposition}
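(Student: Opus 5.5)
The plan is to expand the quadratic form directly and then identify the result as a weighted variance. Write $u^\top \Omega(p)u = u^\top \operatorname{Diag}(p)u - (p^\top u)^2$. The first term is $\sum_c p_c u_c^2$ and the second is $(\sum_c p_c u_c)^2$, which gives the first equality at once.

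For the variational form, fix $u$ and consider the scalar function $f(a):=\sum_c p_c(u_c-a)^2$. Because $p\in\Delta^{K-1}$, we have $\sum_c p_c=1$, so
\[
f(a)=\sum_c p_cu_c^2-2a\sum_c p_cu_c+a^2 .
\]
This is a convex quadratic in $a$ with leading coefficient $1$. It is minimized at $a^\star=\sum_c p_cu_c$, the mean of $u$ under $p$. Substituting $a^\star$ gives $f(a^\star)=\sum_c p_cu_c^2-(\sum_c p_cu_c)^2$, which matches the middle expression. The only point needing care is that the normalization $\sum_c p_c=1$ is what makes the $a^2$ coefficient equal to one. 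Without it the minimum value would carry a factor $1/\sum_c p_c$, so I would state explicitly where the simplex assumption is used.

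For the final claim, compute $\Omega(p)\mathbf 1=\operatorname{Diag}(p)\mathbf 1-p(p^\top\mathbf 1)=p-p=0$, again using $p^\top\mathbf 1=1$. The variational form gives an equivalent view: replacing $u$ by $u+c\mathbf 1$ shifts the minimizer $a$ by $c$ and leaves the minimum unchanged. Either way, only the centered component $P_\perp u$ can contribute to $u^\top\Omega(p)u$. Since $\Omega(p)$ is symmetric and annihilates $\mathbf 1$, we have $u^\top\Omega(p)u=(P_\perp u)^\top\Omega(p)(P_\perp u)$. This links the statement back to the Gauss--Newton term $G_\nu$ of Proposition~\ref{prop:app-data-ce-hessian} and to the use of $P_\perp$ in the coverage quantity $Q_\nu$.

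None of these steps is a real obstacle, since everything is elementary algebra.
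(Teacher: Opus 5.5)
Your proof is correct and follows essentially the same route as the paper: expand the quadratic form to get the weighted-variance identity, identify $a=\sum_c p_cu_c$ as the minimizer of the convex quadratic, and read off $\Omega(p)\mathbf 1=p-p=0$ from the definition using $p^\top\mathbf 1=1$. Your version spells out steps the paper states in one line each, namely where $\sum_c p_c=1$ is used and the consequence $u^\top\Omega(p)u=(P_\perp u)^\top\Omega(p)(P_\perp u)$.
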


\begin{proof}
The first equality is the usual variance identity under a categorical distribution with probabilities
\(p\). The minimizing constant is \(a=\sum_c p_cu_c\). The identity
\(\Omega(p)\mathbf 1=0\) follows immediately from the definition of \(\Omega(p)\).
\end{proof}

\begin{proposition}[Easy examples have limited Gauss--Newton leverage]
\label{prop:app-data-easy-gn-leverage}
For \(p\in\Delta^{K-1}\), let \(p_{\max}=1-\delta\) with \(\delta\in[0,1]\). Then
\[
\lambda_{\max}(\Omega(p))
\le
\operatorname{tr}(\Omega(p))
=
1-\|p\|_2^2
\le
2\delta-\delta^2 .
\]
Consequently, for the single-example Gauss--Newton contribution
\[
G_x(M,H):=
\langle A_{x,M}[H],\Omega_x(M)A_{x,M}[H]\rangle ,
\]
we have
\[
G_x(M,H)
\le
(2\delta_x-\delta_x^2)\|A_{x,M}[H]\|_2^2,
\qquad
\delta_x:=1-\max_c p_{x,c}(M).
\]
Thus a highly confident example cannot contribute much softmax-side Gauss--Newton curvature unless
its logit-Jacobian response along the direction is large.
\end{proposition}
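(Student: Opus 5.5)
The plan is to prove the three displayed facts in order: the trace identity, the bound $1-\|p\|_2^2\le 2\delta-\delta^2$, and the inequality $\lambda_{\max}\le\operatorname{tr}$, and then pass to the single-example Gauss--Newton bound.

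First, by Proposition~\ref{prop:app-data-softmax-variance}, $\Omega(p)$ is a covariance matrix, since $u^\top\Omega(p)u$ is a variance. Hence $\Omega(p)\succeq0$. For a positive semidefinite matrix all eigenvalues are nonnegative, so the largest eigenvalue is at most their sum: $\lambda_{\max}(\Omega(p))\le\operatorname{tr}(\Omega(p))$.

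Second, the trace is computed directly:
\[
\operatorname{tr}(\Omega(p))=\sum_c p_c-\sum_c p_c^2=1-\|p\|_2^2 .
\]

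Third, the only step requiring care is the bound $\|p\|_2^2\ge(1-\delta)^2$. It holds trivially, because $\|p\|_2^2\ge p_{\max}^2=(1-\delta)^2$. Therefore
\[
1-\|p\|_2^2\le 1-(1-\delta)^2=2\delta-\delta^2 .
\]
A sharper bound is available: one could also use $\sum_{c\neq\max}p_c^2\ge\delta^2/(K-1)$. It is unnecessary for the stated inequality, so I will not pursue it.

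Finally, for the per-example statement, set $s=A_{x,M}[H]$ and $p=p_x(M)$. The Rayleigh quotient bound gives
\[
G_x(M,H)=\langle s,\Omega_x(M)s\rangle\le\lambda_{\max}(\Omega_x(M))\,\|s\|_2^2 .
\]
Applying the previous chain with $\delta=\delta_x=1-\max_c p_{x,c}(M)$ yields $G_x(M,H)\le(2\delta_x-\delta_x^2)\|A_{x,M}[H]\|_2^2$.

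None of these steps is a real obstacle. The only thing to check is that $\delta\in[0,1]$, so that $1-\delta\ge0$ and squaring the inequality $\|p\|_2\ge p_{\max}$ is legitimate.
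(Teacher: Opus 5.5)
Your proposal is correct and follows essentially the same argument as the paper. Positive semidefiniteness gives $\lambda_{\max}(\Omega(p))\le\operatorname{tr}(\Omega(p))=1-\|p\|_2^2$, the bound $\|p\|_2^2\ge p_{\max}^2=(1-\delta)^2$ yields $2\delta-\delta^2$, and the Rayleigh-quotient bound at $A_{x,M}[H]$ gives the per-example claim; your use of the variance identity to justify $\Omega(p)\succeq 0$ just makes explicit the fact the paper asserts.
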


\begin{proof}
Since \(\Omega(p)\) is positive semidefinite,
\(\lambda_{\max}(\Omega(p))\le\operatorname{tr}(\Omega(p))\). The trace identity follows from
\(\operatorname{tr}(\operatorname{Diag}(p)-pp^\top)=\sum_c p_c-\sum_c p_c^2=1-\|p\|_2^2\).
If \(p_{\max}=1-\delta\), then \(\|p\|_2^2\ge(1-\delta)^2\), which gives the final inequality.
Applying the operator-norm bound to \(u=A_{x,M}[H]\) proves the single-example claim.
\end{proof}

\begin{proposition}[Directional saturation decomposition]
\label{prop:app-data-directional-saturation}
For any admissible \((M,H)\) with \(Q_\nu(M,H)>0\), define
\[
\sigma_\nu(M,H):=G_\nu(M,H)/Q_\nu(M,H).
\]
Then \(G_\nu(M,H)=\sigma_\nu(M,H)Q_\nu(M,H)\), and hence
\begin{align*}
\inf_{\mathcal T_k(\nu,r)}G_\nu(M,H)
&=
\inf_{\mathcal T_k(\nu,r)}\sigma_\nu(M,H)Q_\nu(M,H),\\
\sup_{\mathcal T_k(\nu,r)}G_\nu(M,H)
&=
\sup_{\mathcal T_k(\nu,r)}\sigma_\nu(M,H)Q_\nu(M,H).
\end{align*}
In a binary-margin parameterization, let
\(m_x(M):=z_{x,1}(M)-z_{x,2}(M)\), let \(\phi_{x,M}\) be the Frobenius-Riesz representative
of the linear map \(H\mapsto Dm_x(M)[H]\), and set
\(s_x(M):=p_{x,1}(M)p_{x,2}(M)\). Then this becomes the scalar-weighted covariance form
\[
G_\nu(M,H)
=
\mathbb E_\nu\bigl[s_x(M)\langle \phi_{x,M},H\rangle^2\bigr],
\qquad
Q_\nu(M,H)
=
\frac12\mathbb E_\nu\langle \phi_{x,M},H\rangle^2,
\]
for a softmax saturation weight \(s_x(M)\ge0\). If \(\sigma_\nu(M,H)\) is approximately constant
over theorem-relevant directions, it mostly rescales both Gauss--Newton curvature extremes and does
not improve the pure Gauss--Newton ratio. Directional saturation
improves this curvature ratio only when the weighted mass increases the weak-direction floor
more than it increases the ceiling.
\end{proposition}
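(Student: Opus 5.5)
The plan is to verify each claim of Proposition~\ref{prop:app-data-directional-saturation} directly from the definitions in Appendix~\ref{sec:app-data-admissible-family}. The general decomposition, the binary computation and the qualitative remark are treated separately, and no earlier result is needed beyond the definitions of \(G_\nu\), \(Q_\nu\) and \(\Omega_x\).

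\emph{General decomposition.} For admissible \((M,H)\) with \(Q_\nu(M,H)>0\), the identity \(G_\nu=\sigma_\nu Q_\nu\) is just the definition of \(\sigma_\nu\). Under the data-side condition \(c_{\nu,k}>0\) we have \(\mathcal T_k^+(\nu,r)=\mathcal T_k(\nu,r)\), as noted in Appendix~\ref{sec:app-data-admissible-family}. So \(\sigma_\nu\) is defined on the whole family. Taking the infimum and supremum of both sides of a pointwise identity gives the two displayed extremum equalities. Without that condition, we read the infimum and supremum over \(\mathcal T_k^+(\nu,r)\). There the claim is equally immediate, and pairs with \(Q_\nu=0\) contribute \(G_\nu=0\) by the bound \(G_\nu\le\omega^+_{\nu,k}Q_\nu\). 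I would state this caveat explicitly rather than hide it.

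\emph{Binary-margin formulas.} With \(K=2\), write \(A_{x,M}[H]=(a_1,a_2)\), where \(a_c:=Dz_{x,c}(M)[H]\). Then \(Dm_x(M)[H]=a_1-a_2=\langle\phi_{x,M},H\rangle\) by the Riesz representation.

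For the softmax covariance, \(\Omega_x(M)=p_1p_2\begin{bmatrix}1&-1\\-1&1\end{bmatrix}\). Hence \(\langle A,\Omega_x A\rangle=s_x(M)(a_1-a_2)^2\).

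For the projection, \(P_\perp A=\bigl(\tfrac{a_1-a_2}{2},-\tfrac{a_1-a_2}{2}\bigr)\), which gives \(\|P_\perp A\|_2^2=\tfrac12(a_1-a_2)^2\).

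Averaging over \(x\sim\nu\) yields both displayed formulas. The weight satisfies \(s_x=p_1p_2\ge0\).

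\emph{Qualitative remarks.} If \(\sigma_\nu\equiv\sigma_0\) on the theorem-relevant directions, then \(\inf G_\nu=\sigma_0 c_{\nu,k}\) and \(\sup G_\nu=\sigma_0 C_{\nu,k}\). The pure Gauss--Newton ratio \(\sup G_\nu/\inf G_\nu\) therefore equals \(C_{\nu,k}/c_{\nu,k}\), so saturation does not change it.

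More generally, comparing \(\sup(\sigma Q)/\inf(\sigma Q)\) with \(C_{\nu,k}/c_{\nu,k}\) shows the ratio improves only if the weighting raises the infimum by a larger relative factor than the supremum. That is precisely the stated interpretation.

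The only mildly delicate point is the handling of the \(Q_\nu=0\) pairs and the sign and scale bookkeeping of \(P_\perp\) in the binary case (the factor \(\tfrac12\)). Everything else is definitional.
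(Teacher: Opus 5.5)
Your proposal is correct and follows essentially the same route as the paper. The extremum identities are definitional (with \(c_{\nu,k}>0\) giving \(\mathcal T_k^+(\nu,r)=\mathcal T_k(\nu,r)\)), the binary case is the direct \(K=2\) computation of \(\Omega_x\) and \(P_\perp\) (the paper phrases the \(\Omega_x\) step via the variance identity of Proposition~\ref{prop:app-data-softmax-variance}), and the ratio remark is the cancellation of a constant \(\sigma_\nu\).

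One small repair in your optional \(Q_\nu=0\) caveat: you cannot invoke \(G_\nu\le\omega^+_{\nu,k}Q_\nu\) there, since \(\omega^+_{\nu,k}\) is a supremum over \(\mathcal T_k^+(\nu,r)\) only. Instead use \(\Omega_x\mathbf 1=0\), which gives \(\langle A,\Omega_xA\rangle=\langle P_\perp A,\Omega_xP_\perp A\rangle\le\|P_\perp A\|_2^2\), hence \(G_\nu\le Q_\nu\), so \(Q_\nu=0\) forces \(G_\nu=0\).
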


\begin{proof}
The first two identities are definitions. The binary-margin identities follow from
Proposition~\ref{prop:app-data-softmax-variance}. For \(K=2\),
\[
P_\perp A_{x,M}[H]
=\frac12Dm_x(M)[H]\begin{bmatrix}1\\-1\end{bmatrix},
\qquad
\|P_\perp A_{x,M}[H]\|_2^2
=\frac12\bigl(Dm_x(M)[H]\bigr)^2,
\]
which explains the factor \(1/2\) in \(Q_\nu\) under the centered-logit normalization used
throughout the paper. The Gauss--Newton form retains the weight
\(s_x=p_{x,1}p_{x,2}\). The final statement follows by comparing the pure
Gauss--Newton ratio: a nearly constant \(\sigma_\nu\) cancels in this
ratio, whereas direction-dependent \(\sigma_\nu\) changes the ratio according to where it places
weighted curvature mass.
\end{proof}

%% file: appendix_experiments.tex
\section{Additional Experimental Details}
\label{sec:app-experiments}

This appendix records the protocols and full result tables for the rank sweeps and MNLI
data interventions in Section~\ref{sec:experiments}. These tables report controlled single-seed
runs. Appendix~\ref{sec:app-supplementary-experiments} presents multi-seed evaluations,
landscape diagnostics, and robustness checks, with the settings specified for each study.

\subsection{Shared Protocol}

The MNLI/MPNet and CIFAR-100/ViT rank sweeps and MNLI data interventions use a controlled probe-then-LoRA protocol. We first train a linear classification
head on the frozen backbone, and then keep this head fixed while fine-tuning only the pre-specified
LoRA module. This is related to linear-probing-then-fine-tuning protocols, but differs from variants
that continue updating the classification head during the fine-tuning stage
~\citep{tomihariUnderstandingLinearProbing2024a}. We keep the head fixed to isolate the effect of
the LoRA update itself and keep the experiment close to the single-trainable-block setting analyzed
in the main text. For the experiments in this appendix, unless stated otherwise, we use weight decay
\(\lambda_A=\lambda_B=10^{-4}\), balancing coefficient \(\mu=10^{-4}\), learning rate \(10^{-4}\),
and the initialization scheme in which the \(A\)-factor is Kaiming initialized with scale \(0.01\)
and the \(B\)-factor is initialized to zero. The train/validation split uses seed \(17\), and the
tables in this appendix use optimization seed \(0\). We use Adam with explicit weight decay for these runs.
The diagnostic and robustness studies specify their seeds and protocol differences in
Appendix~\ref{sec:app-supplementary-experiments}.

\begin{table}[H]
\centering
\small
\caption{Protocols for the rank sweeps and MNLI data interventions. Diagnostic and robustness studies are reported in
Appendix~\ref{sec:app-supplementary-experiments}.}
\label{tab:app-exp-protocols}
\resizebox{\textwidth}{!}{%
\begin{tabular}{llll}
\toprule
Experiment & Dataset / model & LoRA target & Main variable \\
\midrule
NLP rank sweep
& MNLI / MPNet-base
& layer 10 attention query
& rank \(r\in\{1,4,16,32,64,128,256,512\}\) \\
CV rank sweep
& CIFAR-100 / ViT-B/16
& query projection in block 10 attention
& rank \(r\in\{1,4,16,32,64,128,256,512\}\) \\
NLP coverage
& MNLI / MPNet-base
& layer 10 attention query
& gradient-group sampling mass and rank \\
NLP saturation
& MNLI / MPNet-base
& layer 10 attention query
& saturation band under fixed gradient-group quotas and rank \\
\bottomrule
\end{tabular}%
}
\end{table}

For the MNLI/MPNet experiments in this appendix, the train/validation/test sizes are \(353{,}432/39{,}270/9{,}815\), the batch size is
\(32\), and the maximum sequence length is \(128\). For CIFAR-100, the train/validation/test sizes
are \(45{,}000/5{,}000/10{,}000\), and the batch size is \(64\). The MNLI rank sweep trains LoRA for
10 epochs; the CIFAR-100 rank sweep trains LoRA for 30 epochs; and the MNLI data-intervention runs
train LoRA for 50 epochs on no-replacement subsets of \(17{,}526\) examples.

\subsection{Compute Resources}
\label{sec:app-compute-resources}

The experiment groups in Table~\ref{tab:app-compute-resources} were run with single-GPU
training workers on NVIDIA RTX 5880 Ada Generation GPUs with \(48\) GB memory. Each rank or
data condition was launched as an independent Python process on one GPU, without multi-node or
distributed training. Up to four GPUs were used in parallel. For these runs, the observed
GPU memory usage per LoRA or linear-probe worker was below \(4\) GB in \texttt{nvidia-smi}, and
data loading used up to four CPU workers per process.

\begin{table}[H]
\centering
\small
\caption{Hardware and training configurations for the rank sweeps and MNLI data interventions.}
\label{tab:app-compute-resources}
\resizebox{\textwidth}{!}{%
\begin{tabular}{lll}
\toprule
Experiment group & Runs & Batch size / epochs \\
\midrule
NLP rank sweep & 1 linear probe + 8 LoRA ranks & LP: 32 / 10; LoRA: 32 / 10 \\
CV rank sweep & 1 linear probe + 8 LoRA ranks & LP: 64 / 20; LoRA: 64 / 30 \\
NLP gradient coverage & 4 data conditions $\times$ 4 ranks & LoRA: 32 / 50; subset size $17{,}526$ \\
NLP saturation bands & 3 bands $\times$ 4 ranks & LoRA: 32 / 50; subset size $17{,}526$ \\
\bottomrule
\end{tabular}%
}
\end{table}

The hardware and memory measurements above apply to the experiment groups in
Table~\ref{tab:app-compute-resources}. Diagnostic and robustness protocols are described in
Appendix~\ref{sec:app-supplementary-experiments}.

\subsection{Rank Sweep Results}

\begin{table}[H]
\centering
\small
\caption{MNLI rank sweep. The test accuracy is reported at the best validation checkpoint.}
\label{tab:app-nlp-rank-sweep}
\resizebox{\textwidth}{!}{%
\begin{tabular}{rrrrrrr}
\toprule
Rank & Best val acc & Test acc & Best val CE & Best train CE & Final train CE & Best epoch \\
\midrule
1   & 0.6047 & 0.6036 & 0.8737 & 0.9506 & 0.9506 & 9  \\
4   & 0.6199 & 0.6167 & 0.8501 & 0.9325 & 0.9325 & 10 \\
16  & 0.6360 & 0.6324 & 0.8318 & 0.9184 & 0.9184 & 10 \\
32  & 0.6476 & 0.6456 & 0.8130 & 0.9032 & 0.9032 & 9  \\
64  & 0.6547 & 0.6506 & 0.8018 & 0.8939 & 0.8939 & 9  \\
128 & 0.6630 & 0.6570 & 0.7883 & 0.8829 & 0.8829 & 7  \\
256 & 0.6696 & 0.6634 & 0.7752 & 0.8719 & 0.8719 & 10 \\
512 & 0.6728 & 0.6686 & 0.7677 & 0.8650 & 0.8650 & 9  \\
\bottomrule
\end{tabular}%
}
\end{table}

\begin{table}[H]
\centering
\small
\caption{CIFAR-100 rank sweep. The train loss improves with rank, while held-out accuracy
is much flatter than in MNLI.}
\label{tab:app-cv-rank-sweep}
\resizebox{\textwidth}{!}{%
\begin{tabular}{rrrrrrr}
\toprule
Rank & Best val acc & Test acc & Best val CE & Best train CE & Final train CE & Best epoch \\
\midrule
1   & 0.7384 & 0.7243 & 1.2393 & 0.1137 & 0.1137 & 2  \\
4   & 0.7392 & 0.7267 & 1.2299 & 0.1101 & 0.1101 & 28 \\
16  & 0.7400 & 0.7251 & 1.2291 & 0.1057 & 0.1057 & 8  \\
32  & 0.7398 & 0.7265 & 1.2323 & 0.1013 & 0.1013 & 4  \\
64  & 0.7414 & 0.7279 & 1.2254 & 0.0940 & 0.0940 & 16 \\
128 & 0.7414 & 0.7296 & 1.2236 & 0.0840 & 0.0840 & 12 \\
256 & 0.7422 & 0.7318 & 1.2244 & 0.0714 & 0.0714 & 14 \\
512 & 0.7464 & 0.7307 & 1.2217 & 0.0584 & 0.0584 & 15 \\
\bottomrule
\end{tabular}%
}
\end{table}

\FloatBarrier

\subsection{Gradient-Coverage Construction}
\label{sec:app-gradient-coverage}

The MNLI coverage experiment changes the data distribution without changing the subset size or
label budget. The candidate pool is sampled once from MNLI train, and the final coverage proxy is
defined by a gradient sketch with respect to the adapted matrix:
\begin{enumerate}[leftmargin=1.5em]
    \item Use the frozen MPNet backbone and the trained linear-probe head as a shared reference
    point.
    \item For each candidate example, choose the gold label and the strongest alternative label, and
    backpropagate the corresponding margin logit to the target query matrix.
    \item Randomly project the activation and gradient-output factors to \(16\times16\), producing a
    \(256\)-dimensional bilinear sketch of the margin-Jacobian feature.
    \item Normalize the sketches and cluster them within each label into eight groups, giving 24
    label-conditioned gradient groups.
    \item Draw no-replacement subsets of \(17{,}526\) examples with the same label budget
    \((5847,5788,5891)\) but different mass over these gradient groups.
\end{enumerate}

The conditions are a balanced distribution with uniform mass over the eight groups within each
label, and two geometric long-tail conditions with decay factors \(0.75\) and \(0.45\).
Figure~\ref{fig:app-coverage-group-mass} shows the resulting group masses.

\begin{figure}[H]
    \centering
    \includegraphics[width=0.95\linewidth]{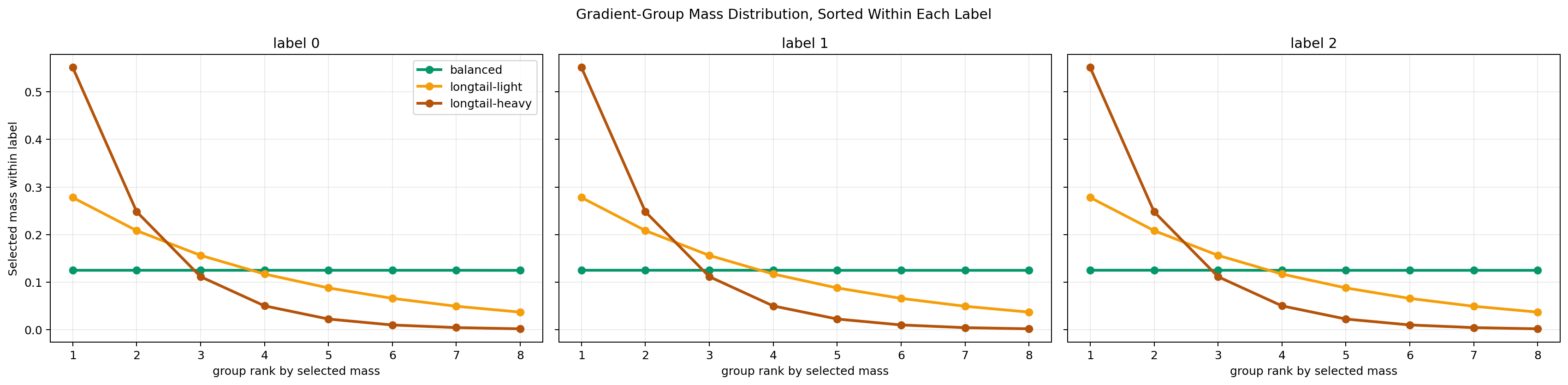}
    \caption{Sampling mass over label-conditioned gradient groups. The heavy long-tail intervention
    collapses most mass onto the first few groups in every label, while the balanced condition spreads
    mass uniformly across the gradient groups.}
    \label{fig:app-coverage-group-mass}
\end{figure}

\begin{table}[H]
\centering
\small
\caption{Group-mass checks for the MNLI gradient-coverage construction. Mass statistics are computed
within label-conditioned gradient groups.}
\label{tab:app-coverage-group-mass}
\resizebox{\textwidth}{!}{%
\begin{tabular}{lrrrrrrr}
\toprule
Condition & Subset size & Unique ratio & Mass min & Mass median & Mass max & Max/min & Label counts \\
\midrule
Balanced        & 17{,}526 & 1.0 & 0.1249 & 0.1250 & 0.1251 & 1.0   & \(5847/5788/5891\) \\
Long-tail Light & 17{,}526 & 1.0 & 0.0371 & 0.1025 & 0.2779 & 7.5   & \(5847/5788/5891\) \\
Long-tail Heavy & 17{,}526 & 1.0 & 0.0020 & 0.0364 & 0.5510 & 270.5 & \(5847/5788/5891\) \\
\bottomrule
\end{tabular}%
}
\end{table}

\begin{table}[H]
\centering
\scriptsize
\caption{MNLI gradient-coverage results. Deltas are relative to the balanced condition at the same
rank.}
\label{tab:app-gradient-coverage}
\resizebox{\textwidth}{!}{%
\begin{tabular}{rlrrrrr}
\toprule
Rank & Condition & Best val acc & Test acc & Final train CE & Best epoch & \(\Delta\) val / test \\
\midrule
4  & Balanced        & 0.5939 & 0.5954 & 0.9547 & 37 &  0.0000 /  0.0000 \\
4  & Long-tail Light & 0.5868 & 0.5875 & 0.9601 & 48 & -0.0071 / -0.0079 \\
4  & Long-tail Heavy & 0.5853 & 0.5838 & 0.9350 & 48 & -0.0086 / -0.0116 \\
16 & Balanced        & 0.6046 & 0.6033 & 0.9353 & 43 &  0.0000 /  0.0000 \\
16 & Long-tail Light & 0.5955 & 0.5940 & 0.9436 & 42 & -0.0091 / -0.0093 \\
16 & Long-tail Heavy & 0.5961 & 0.5922 & 0.9127 & 44 & -0.0085 / -0.0111 \\
32 & Balanced        & 0.6110 & 0.6084 & 0.9237 & 40 &  0.0000 /  0.0000 \\
32 & Long-tail Light & 0.6076 & 0.6016 & 0.9323 & 47 & -0.0034 / -0.0068 \\
32 & Long-tail Heavy & 0.6026 & 0.5946 & 0.9001 & 46 & -0.0084 / -0.0138 \\
64 & Balanced        & 0.6134 & 0.6133 & 0.9176 & 38 &  0.0000 /  0.0000 \\
64 & Long-tail Light & 0.6145 & 0.6067 & 0.9213 & 47 &  0.0011 / -0.0066 \\
64 & Long-tail Heavy & 0.6103 & 0.6023 & 0.8883 & 45 & -0.0031 / -0.0110 \\
\bottomrule
\end{tabular}%
}
\end{table}

\begin{figure}[H]
    \centering
    \includegraphics[width=0.95\linewidth]{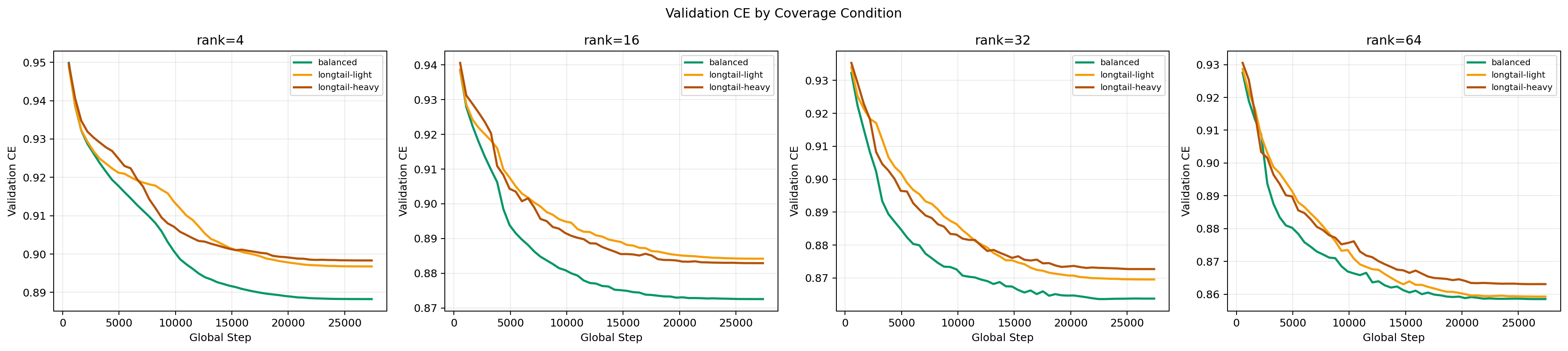}
    \caption{Validation cross-entropy trajectories for the MNLI gradient-coverage intervention. Each
    subplot fixes the LoRA rank and compares balanced group coverage with light and heavy long-tail
    group-mass distributions.}
    \label{fig:app-coverage-val-ce}
\end{figure}

\FloatBarrier

\subsection{Saturation-Band Construction and Results}
\label{sec:app-saturation-band}

The saturation-band experiment uses the same MNLI candidate pool and the same 24
label-conditioned gradient groups as the coverage experiment, but keeps the quota for every
label--gradient group fixed. It therefore changes which examples are selected inside each group,
rather than changing the mass assigned to the groups themselves. For each example, let \(p_{\rm gold}\)
be the model probability of the gold label and let \(p_{\rm alt}\) be the largest non-gold class
probability. We compute the pairwise softmax-curvature score
\[
    p_{\rm pair}=\frac{p_{\rm gold}}{p_{\rm gold}+p_{\rm alt}},
    \qquad
    s=p_{\rm pair}(1-p_{\rm pair}).
\]
This reduces the multiclass prediction to the gold label versus its strongest competitor. In this
pairwise view, if \(m_x(M)\) is the corresponding logit margin, the Gauss--Newton contribution has
the form
\[
G_\nu(M,H)\approx
\mathbb E_\nu\!\left[s\,\bigl(Dm_x(M)[H]\bigr)^2\right].
\]
Thus \(s\) is the softmax-side curvature weight, while \(D m_x(M)[H]\) determines the LoRA
direction receiving that curvature. Within each fixed
label--gradient group, we draw no-replacement examples from one of three quantile bands: easy,
middle, and boundary. Table~\ref{tab:app-saturation-band-checks} records the resulting score
statistics, and Figure~\ref{fig:app-saturation-score-summary} visualizes the separation between the
three selected bands.

\begin{table}[H]
\centering
\small
\caption{Saturation-band checks for the MNLI experiment. The selected subsets all have size
\(17{,}526\), unique ratio \(1.0\), no fallback groups, and fixed label-conditioned gradient-group
quotas.}
\label{tab:app-saturation-band-checks}
\resizebox{\textwidth}{!}{%
\begin{tabular}{lrrrrrr}
\toprule
Condition & Quantile band & Score mean & Score p10 & Score p50 & Score p90 & Subset size \\
\midrule
Easy-band     & \([0.10,0.45]\) & 0.2210 & 0.2044 & 0.2234 & 0.2344 & 17{,}526 \\
Middle-band   & \([0.33,0.67]\) & 0.2389 & 0.2320 & 0.2401 & 0.2445 & 17{,}526 \\
Boundary-band & \([0.62,0.95]\) & 0.2477 & 0.2457 & 0.2481 & 0.2494 & 17{,}526 \\
\bottomrule
\end{tabular}%
}
\end{table}

\begin{figure}[H]
    \centering
    \includegraphics[width=0.8\linewidth]{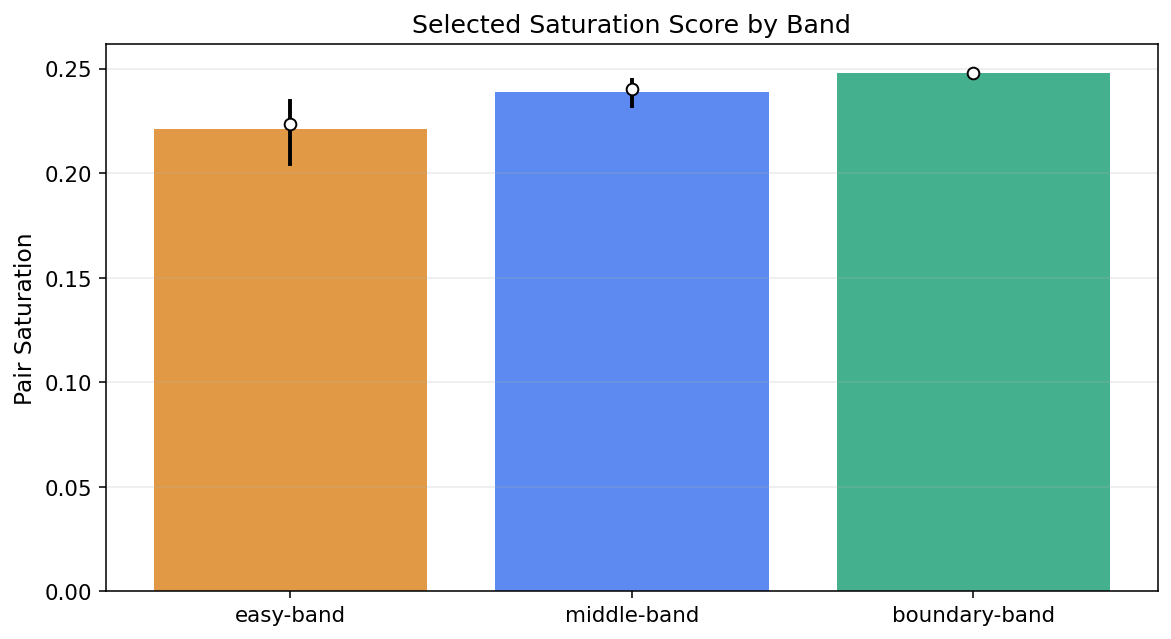}
    \caption{Selected pairwise softmax-curvature scores for the three MNLI saturation bands. The boundary
    band has the highest average score, while the middle band avoids both very easy examples and the
    most boundary-concentrated tail.}
    \label{fig:app-saturation-score-summary}
\end{figure}

\begin{table}[H]
\centering
\scriptsize
\caption{MNLI saturation-band results. All conditions use the same subset size, label budget, and
label-conditioned gradient-group quotas.}
\label{tab:app-saturation-band-results}
\resizebox{\textwidth}{!}{%
\begin{tabular}{rlrrrrr}
\toprule
Rank & Condition & Best val acc & Test acc & Best val CE & Best train CE & Best epoch \\
\midrule
4  & Easy-band     & 0.5853 & 0.5868 & 0.9010 & 0.9509 & 47 \\
4  & Middle-band   & 0.5962 & 0.5984 & 0.8843 & 0.9481 & 42 \\
4  & Boundary-band & 0.5945 & 0.5947 & 0.8860 & 0.9545 & 42 \\
16 & Easy-band     & 0.5967 & 0.5959 & 0.8837 & 0.9326 & 46 \\
16 & Middle-band   & 0.6091 & 0.6074 & 0.8668 & 0.9290 & 39 \\
16 & Boundary-band & 0.6056 & 0.6058 & 0.8676 & 0.9335 & 44 \\
32 & Easy-band     & 0.6032 & 0.6029 & 0.8763 & 0.9251 & 46 \\
32 & Middle-band   & 0.6149 & 0.6168 & 0.8555 & 0.9181 & 49 \\
32 & Boundary-band & 0.6130 & 0.6117 & 0.8590 & 0.9249 & 43 \\
64 & Easy-band     & 0.6183 & 0.6144 & 0.8559 & 0.9094 & 41 \\
64 & Middle-band   & 0.6226 & 0.6186 & 0.8466 & 0.9080 & 49 \\
64 & Boundary-band & 0.6218 & 0.6151 & 0.8478 & 0.9141 & 44 \\
\bottomrule
\end{tabular}%
}
\end{table}

\begin{figure}[H]
    \centering
    \includegraphics[width=0.95\linewidth]{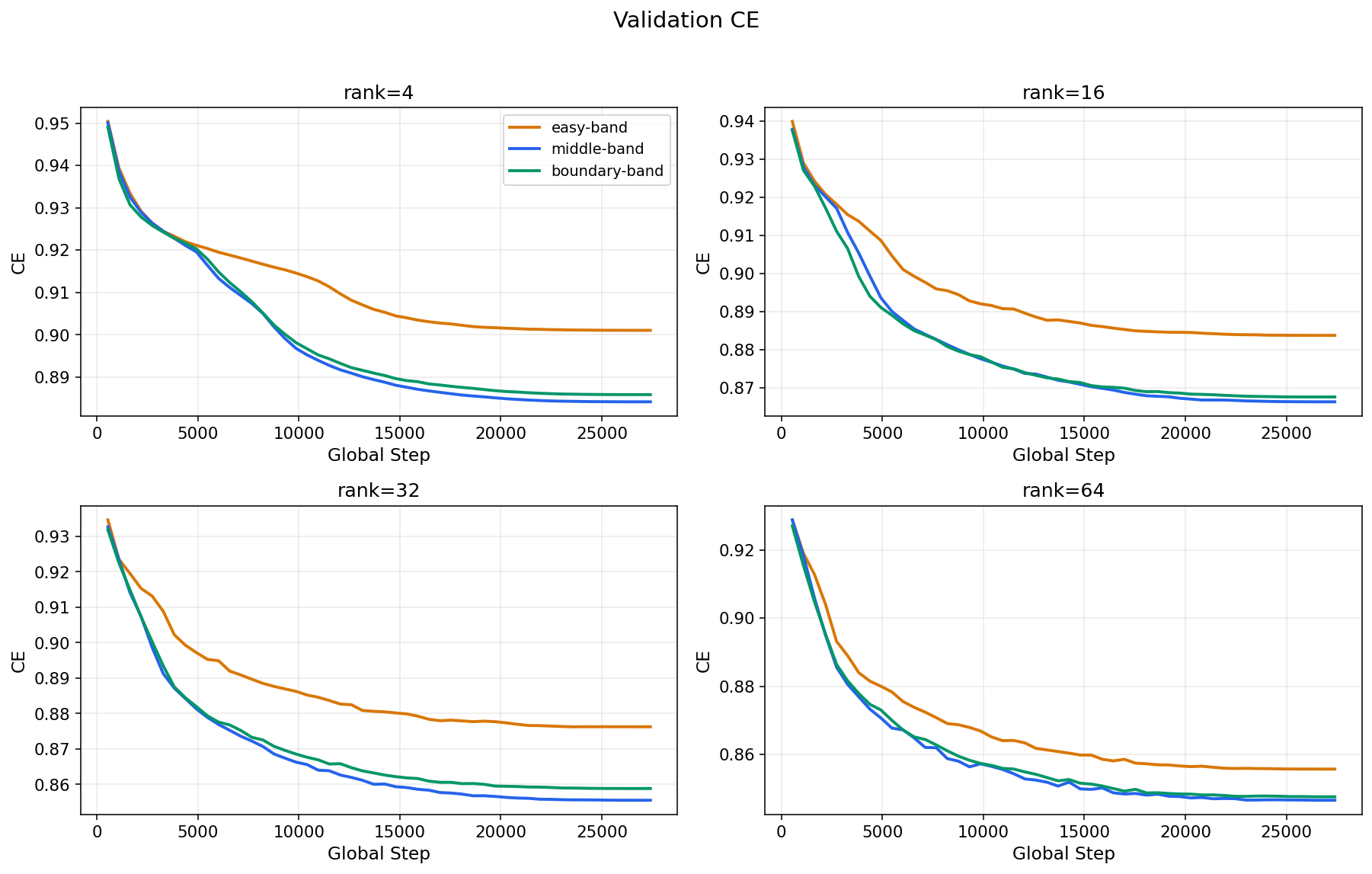}
    \caption{Validation cross-entropy trajectories for the MNLI saturation-band intervention. The
    middle band is consistently strongest, while easy-band examples have lower softmax-side
    curvature leverage under matched gradient-group coverage.}
    \label{fig:app-saturation-val-ce}
\end{figure}

\input{appendix_supplementary_experiments}

\section{Limitations}
\label{app-limit}
Our theoretical analysis focuses on single-matrix LoRA fine-tuning
with cross-entropy loss under low-rank target and LoRA-RIP
assumptions. The resulting rank bounds are sufficient rather
than necessary, leaving room for tighter characterizations of
the relationship between adapter rank and data-induced geometry.
Our empirical evaluation examines this relationship through
controlled classification experiments across datasets and
backbones. Extending the theory to joint multi-layer adaptation
and evaluating the data-selection principles in larger-scale
generative fine-tuning are natural directions for future work.

\FloatBarrier
\clearpage

%% file: appendix_supplementary_experiments.tex
\section{Landscape Diagnostics and Robustness Evaluations}
\label{sec:app-supplementary-experiments}
\setcounter{table}{0}
\setcounter{figure}{0}

This section examines the optimization geometry and stability of LoRA fine-tuning through
rank/checkpoint curvature, ambient matrix-space references, and multi-seed training.
We also study the effects of data composition at a common reference point and along training
trajectories, evaluate coverage across datasets, backbones, and adapter locations, and examine
confidence-band selection on MNLI. The rank-sweep and MNLI data-intervention protocols are described in
Appendix~\ref{sec:app-experiments}.

Together, these evaluations connect rank-dependent optimization gains to data-induced geometry
and the practical effectiveness of landscape-guided selection.

\subsection{Shared diagnostic protocol}
\label{sec:app-diagnostic-protocol}

Unless stated otherwise, the diagnostics use MPNet-base on MNLI, a frozen linear-probe head, a frozen
backbone, and a LoRA update on the layer-10 query projection. Every coverage condition contains
$17{,}526$ unique training examples with label counts $5{,}847/5{,}788/5{,}891$. The shared
data-aligned diagnostic basis has dimension $24$ and is obtained from the eight gradient-sketch
groups within each of the three labels. Each ambient direction represented by this basis has rank at
most $16$. The data-condition diagnostics use $512$ examples per label and three diagnostic-sample
 seeds.

The rank/checkpoint diagnostic uses ranks $4,64,128,256$, optimization seeds $0,1,2$, and start,
middle, and final checkpoints (epochs $0,5,10$). It evaluates projected logit-Jacobian and CE-Hessian
actions in the same $24$-dimensional basis. These actions are estimated by symmetric finite
differences; a step-size check changes the projected matrices by at most $1.9\%$ and does not change
the reported full-Hessian sign counts. Factor regularization and balancing Hessians are excluded
when the ambient update $M$ is studied.

The data-condition diagnostic instead computes exact logit-JVP Gauss--Newton matrices and exact
directional CE-HVPs in the common basis. The zero-update table uses the shared frozen probe point;
the trajectory analysis evaluates matched rank-$16$ checkpoints from the three training seeds. The
reported means and standard deviations are over the three independent diagnostic or training seeds,
as appropriate.

\begin{table}[H]
\centering
\small
\caption{Landscape diagnostic protocols.}
\label{tab:app-diagnostic-protocols}
\resizebox{\textwidth}{!}{%
\begin{tabular}{lllll}
\toprule
Diagnostic & Dataset/model & Adapted matrix & Seeds/checkpoints & Main quantities \\
\midrule
Rank/checkpoint & MNLI/MPNet-base & layer-10 query & $r=4,64,128,256$; 3 seeds; 0/5/10 & projected CE curvature \\
Data condition & MNLI/MPNet-base & layer-10 query & 3 diagnostic seeds & GN condition, stable rank, residual \\
Saturation condition & MNLI/MPNet-base & layer-10 query & 3 diagnostic seeds & softmax weight and GN spectrum \\
Ambient reference & MNLI, CIFAR-100 & layer-10 query & two LoRA warm starts/task & $F_\lambda$ and spectral ranks \\
Multi-initialization & MNLI, CIFAR-100 & layer-10 query & 3 runs/rank & objective and held-out stability \\
\bottomrule
\end{tabular}%
}
\end{table}

\paragraph{Theoretical constants and empirical diagnostics.}
\label{sec:app-theoretical-quantities}
We do not directly compute exact values of
\(\delta_k^{\mathrm{LoRA}}\), \(r_\star\),
\(c_{\nu,k}\), \(C_{\nu,k}\),
\(\omega_{\nu,k}^{\pm}\), or \(\rho_{\nu,k}\).
These quantities parameterize structural guarantees, rather than
serving as per-run statistics. In classical matrix sensing and
low-rank optimization, RIP relates measurement geometry to recovery
and landscape guarantees~\citep{recht2010guaranteed,ge2017nospurious,
zhangGeneralLowrankMatrix2021,zhangSharpGlobalGuarantees2025}.
Its primary value in our framework is likewise the relationship it
establishes between data geometry and rank requirements, not an
isolated numerical constant for a particular run.

Exact certification is already computationally difficult for a fixed
linear compressed-sensing matrix: the condition controls singular
values across all supports of a prescribed size, and computing optimal
RIP constants is NP-hard in general~\citep{tillmann2014complexity}.
Hardness also extends to approximate RIP
certification~\citep{weed2018approximately}.
For LoRA, the defining curvature bounds require uniform control over
the continuous family of target-to-update segments and all admissible
low-rank directions. Evaluating selected checkpoints in a diagnostic
subspace does not compute these extrema. Likewise, determining
\(r_\star\) requires identifying a global minimizer of the nonlinear
regularized matrix-space objective \(F_\lambda\) and certifying its rank.
Local optimization does not establish global optimality or uniqueness;
\(r_\star\) refers to the low-rank target specified in our assumption.
The matrix-space references and threshold or energy ranks reported
below describe numerical solutions, not certified global targets.

The informative consequence of the analysis is how these quantities
interact: The sufficient condition permits larger LoRA-RIP values at larger ranks, while coverage, softmax weighting, and nonlinear residual curvature
enter the data-side certificate in explicit ways.
Our experiments examine these mechanisms through controlled rank and
data interventions, attained objectives, projected GN and CE-Hessian
spectra, and multi-seed performance comparisons. The projected spectra
characterize local geometry, rather than the theorem's uniform
LoRA-RIP constant. Lower attained objectives at larger ranks and the
changes in geometry and performance under coverage and confidence-band
selection provide complementary, qualitative evidence for the
rank--data relationship developed by the theory.

\subsection{Rank and checkpoint curvature}
\label{sec:app-landscape-diagnostics}

Table~\ref{tab:app-rank-checkpoint-full} reports the full checkpoint measurements. The initial
state is shared across ranks because the LoRA initialization has $B=0$ and therefore $M=BA=0$.

\begin{table}[H]
\centering
\scriptsize
\caption{MNLI projected curvature over training checkpoints. Values are mean $\pm$ standard
deviation over three training seeds. GN floor and ceiling are shown in absolute units.}
\label{tab:app-rank-checkpoint-full}
\resizebox{\textwidth}{!}{%
\begin{tabular}{rrcrrrrrr}
\toprule
Rank & Checkpoint & CE & 99\% energy rank & GN floor & GN ceiling & GN condition & Full-H min eig. & Neg. eigs. \\
\midrule
4 & Start  & 0.959$\pm$0.000 & 0.0$\pm$0.0 & $4.911\times10^{-7}$ & $1.606\times10^{-5}$ & 32.70$\pm$0.00 & $-9.012\times10^{-6}$ & 8.00$\pm$0.00 \\
4 & Middle & 0.843$\pm$0.001 & 2.0$\pm$0.0 & $4.136\times10^{-7}$ & $1.764\times10^{-5}$ & 42.62$\pm$2.07 & $-1.914\times10^{-7}$ & 1.33$\pm$0.58 \\
4 & Final  & 0.839$\pm$0.001 & 2.0$\pm$0.0 & $3.967\times10^{-7}$ & $1.754\times10^{-5}$ & 44.22$\pm$2.30 & $-4.915\times10^{-7}$ & 1.67$\pm$0.58 \\
64 & Start  & 0.959$\pm$0.000 & 0.0$\pm$0.0 & $4.911\times10^{-7}$ & $1.606\times10^{-5}$ & 32.70$\pm$0.00 & $-9.012\times10^{-6}$ & 8.00$\pm$0.00 \\
64 & Middle & 0.800$\pm$0.002 & 4.33$\pm$0.58 & $3.445\times10^{-7}$ & $1.966\times10^{-5}$ & 56.83$\pm$9.85 & $3.596\times10^{-7}$ & 0.00$\pm$0.00 \\
64 & Final  & 0.793$\pm$0.002 & 5.0$\pm$0.0 & $3.448\times10^{-7}$ & $1.874\times10^{-5}$ & 54.35$\pm$10.00 & $4.796\times10^{-7}$ & 0.00$\pm$0.00 \\
128 & Start  & 0.959$\pm$0.000 & 0.0$\pm$0.0 & $4.911\times10^{-7}$ & $1.606\times10^{-5}$ & 32.70$\pm$0.00 & $-9.012\times10^{-6}$ & 8.00$\pm$0.00 \\
128 & Middle & 0.789$\pm$0.002 & 6.0$\pm$0.0 & $3.344\times10^{-7}$ & $1.749\times10^{-5}$ & 52.33$\pm$5.05 & $3.085\times10^{-7}$ & 0.00$\pm$0.00 \\
128 & Final  & 0.783$\pm$0.002 & 6.67$\pm$0.58 & $3.358\times10^{-7}$ & $1.808\times10^{-5}$ & 53.80$\pm$3.08 & $2.929\times10^{-7}$ & 0.00$\pm$0.00 \\
256 & Start  & 0.959$\pm$0.000 & 0.0$\pm$0.0 & $4.911\times10^{-7}$ & $1.606\times10^{-5}$ & 32.70$\pm$0.00 & $-9.012\times10^{-6}$ & 8.00$\pm$0.00 \\
256 & Middle & 0.779$\pm$0.001 & 7.0$\pm$0.0 & $3.432\times10^{-7}$ & $1.775\times10^{-5}$ & 51.89$\pm$6.16 & $2.884\times10^{-7}$ & 0.00$\pm$0.00 \\
256 & Final  & 0.771$\pm$0.001 & 7.67$\pm$0.58 & $3.586\times10^{-7}$ & $1.952\times10^{-5}$ & 54.52$\pm$6.76 & $2.360\times10^{-7}$ & 0.00$\pm$0.00 \\
\bottomrule
\end{tabular}%
}
\end{table}

Ranks $64$--$256$ reach positive CE-Hessian curvature throughout the shared projected subspace
by the middle checkpoint and retain it at the final checkpoint, while rank $4$ retains weak negative
curvature. Final CE also decreases from $0.839$ at rank $4$ to $0.771$ at rank $256$.
Together, these trends link the gains from larger search ranks to lower-loss solutions with more
favorable second-order geometry, consistent with the role of over-parameterization in our landscape
analysis. Figure~\ref{fig:app-rank-checkpoint-curvature} visualizes these trends.

\begin{figure}[!htbp]
    \centering
    \includegraphics[width=0.95\linewidth]{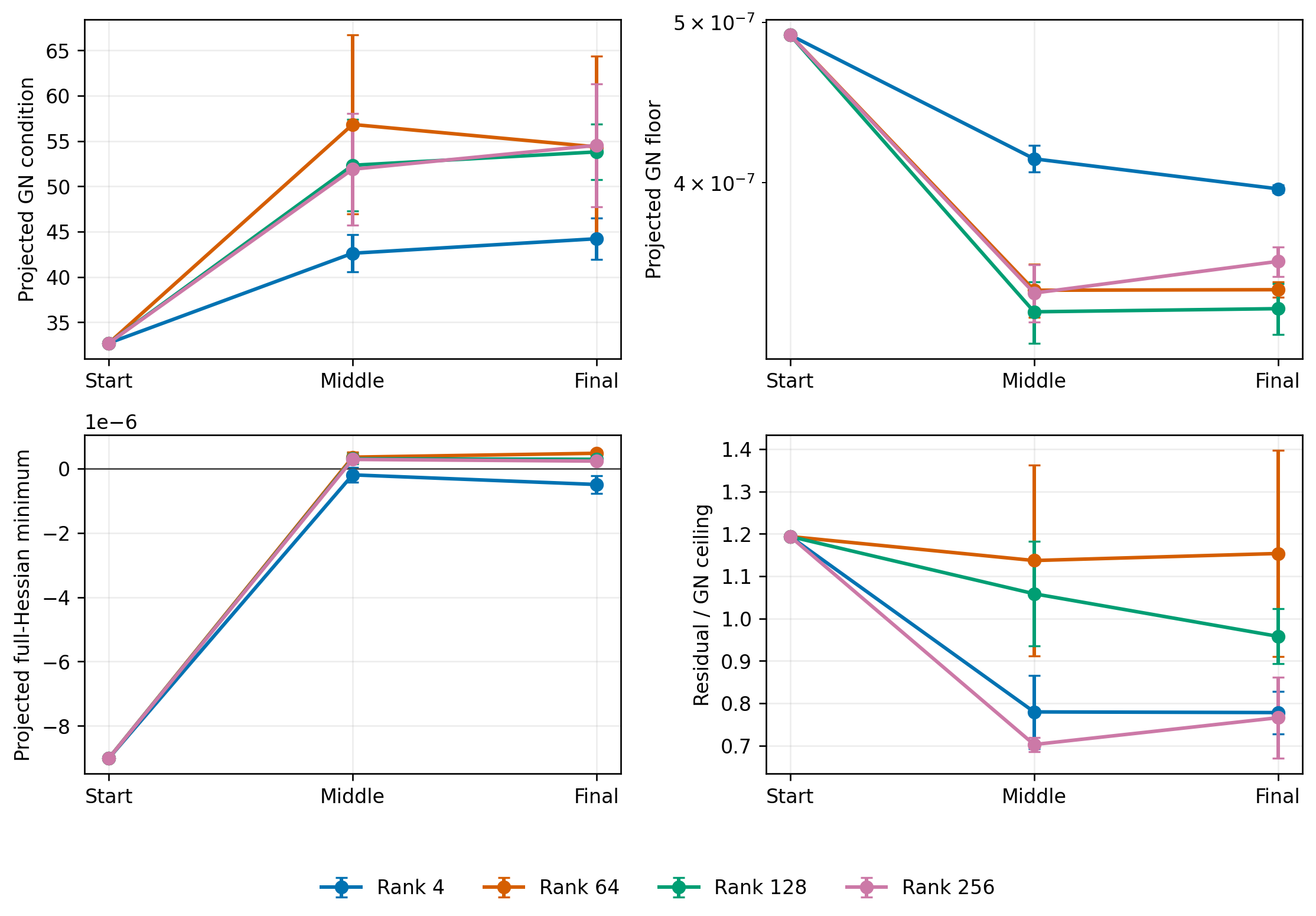}
    \caption{Projected GN and CE-Hessian diagnostics over rank and checkpoint.}
    \label{fig:app-rank-checkpoint-curvature}
\end{figure}

\FloatBarrier

\subsection{Ambient matrix-space reference and effective rank}
\label{sec:app-ambient-reference}

We obtain numerical matrix-space references by directly optimizing
$F_\lambda(M)=L_{\mathrm{train}}(M)+\lambda\|M\|_*$ with $\lambda=10^{-4}$. We use full-batch
gradients, FP64 singular-value thresholding, monotone backtracking, and two existing LoRA solutions
as warm starts for each task. The threshold rank counts singular values above $10^{-4}\sigma_1$;
energy ranks count the number of directions needed to capture the indicated fraction of squared
singular-value energy.

\begin{table}[H]
\centering
\scriptsize
\caption{Ambient matrix-space proximal diagnostics from two LoRA warm starts. Energy ranks
summarize spectral concentration, and the relative mapping is the normalized proximal-gradient
mapping used to monitor the optimization residual.}
\label{tab:app-ambient-reference}
\resizebox{\textwidth}{!}{%
\begin{tabular}{lllrrrrr}
\toprule
Task & Warm start & Final $F_\lambda$ & Threshold rank & $r_{90}$ & $r_{95}$ & $r_{99}$ & Rel. prox. map. \\
\midrule
MNLI & LoRA rank 256 & 0.790509 & 44  & 4  & 6  & 8  & $5.00\times10^{-5}$ \\
MNLI & LoRA rank 64  & 0.809354 & 35  & 3  & 4  & 5  & $7.26\times10^{-5}$ \\
CIFAR-100 & LoRA rank 256 & 0.061640 & 406 & 24 & 33 & 62 & $4.30\times10^{-4}$ \\
CIFAR-100 & LoRA rank 64  & 0.085553 & 457 & 9  & 13 & 25 & $2.95\times10^{-3}$ \\
\bottomrule
\end{tabular}%
}
\end{table}

\begin{table}[H]
\centering
\small
\caption{Best observed LoRA objective gaps to the corresponding ambient reference.}
\label{tab:app-ambient-gaps}
\begin{tabular}{lrrrr}
\toprule
Task & Rank 4 & Rank 64  & Rank 256 \\
\midrule
MNLI & 0.06450 & 0.02130 & 0.00261 \\
CIFAR-100 & 0.04666 & 0.02545 & 0.00038 \\
\bottomrule
\end{tabular}
\end{table}

Table~\ref{tab:app-ambient-reference} shows concentrated update energy: $99\%$ lies in
$5$--$8$ directions on MNLI and $25$--$62$ on CIFAR-100. Table~\ref{tab:app-ambient-gaps}
shows that increasing the search rank from $4$ to $256$ reduces the best observed objective gap
to the matrix-space reference from $0.06450$ to $0.00261$ on MNLI and from $0.04666$ to
$0.00038$ on CIFAR-100. Together, the spectral summaries and shrinking objective gaps connect
compact update structure with the optimization benefits of a larger search rank.

\FloatBarrier

\subsection{Multi-seed and subset-construction robustness}
\label{sec:app-multiinit}

We repeat representative MNLI and CIFAR-100 rank experiments with three independent runs per
rank. The runs jointly vary initialization, minibatch order, and stochastic training operations to
measure the stability of the rank-dependent optimization and performance trends.

\begin{table}[H]
\centering
\small
\caption{Three-run multi-initialization diagnostic. Accuracy is reported as a fraction; values are
mean $\pm$ standard deviation.}
\label{tab:app-multiinit}
\begin{tabular}{llrr}
\toprule
Task & Rank & Best validation accuracy & Final factor objective \\
\midrule
MNLI & 4   & $0.6199\pm0.0009$ & $0.855593\pm0.000605$ \\
MNLI & 64  & $0.6556\pm0.0007$ & $0.812421\pm0.000531$ \\
MNLI & 128 & $0.6626\pm0.0014$ & $0.803074\pm0.001195$ \\
MNLI & 256 & $0.6695\pm0.0009$ & $0.793288\pm0.000174$ \\
CIFAR-100 & 4   & $0.7389\pm0.0009$ & $0.108684\pm0.000127$ \\
CIFAR-100 & 64  & $0.7407\pm0.0005$ & $0.087717\pm0.000223$ \\
CIFAR-100 & 256 & $0.7429\pm0.0010$ & $0.062622\pm0.000067$ \\
\bottomrule
\end{tabular}
\end{table}

Across both tasks, larger ranks consistently achieve lower final factor objectives and higher mean
validation accuracy. Within each rank, validation-accuracy standard deviations remain between
$0.05$ and $0.14$ percentage points. The rank trends are therefore reproducible across training
randomness, reinforcing the empirical connection between search rank and attainable objective
quality. Figure~\ref{fig:app-multiinit} shows the corresponding multi-run diagnostics.

\paragraph{Coverage stability across training and subset-construction seeds.}
The Balanced-versus-Heavy comparison on MNLI also evaluates sensitivity to training randomness
and subset construction. Across six ranks from $4$ to $256$ with three training seeds,
Balanced has higher validation accuracy at all six ranks. With independently reconstructed
subsets, Heavy falls below Balanced in all $12$ paired runs. This consistency across both sources
of randomness shows that preserving gradient-group coverage yields reliable improvements under
a fixed data budget.

\begin{figure}[!htbp]
    \centering
    \begin{minipage}{0.98\linewidth}
        \centering
        \includegraphics[width=\linewidth]{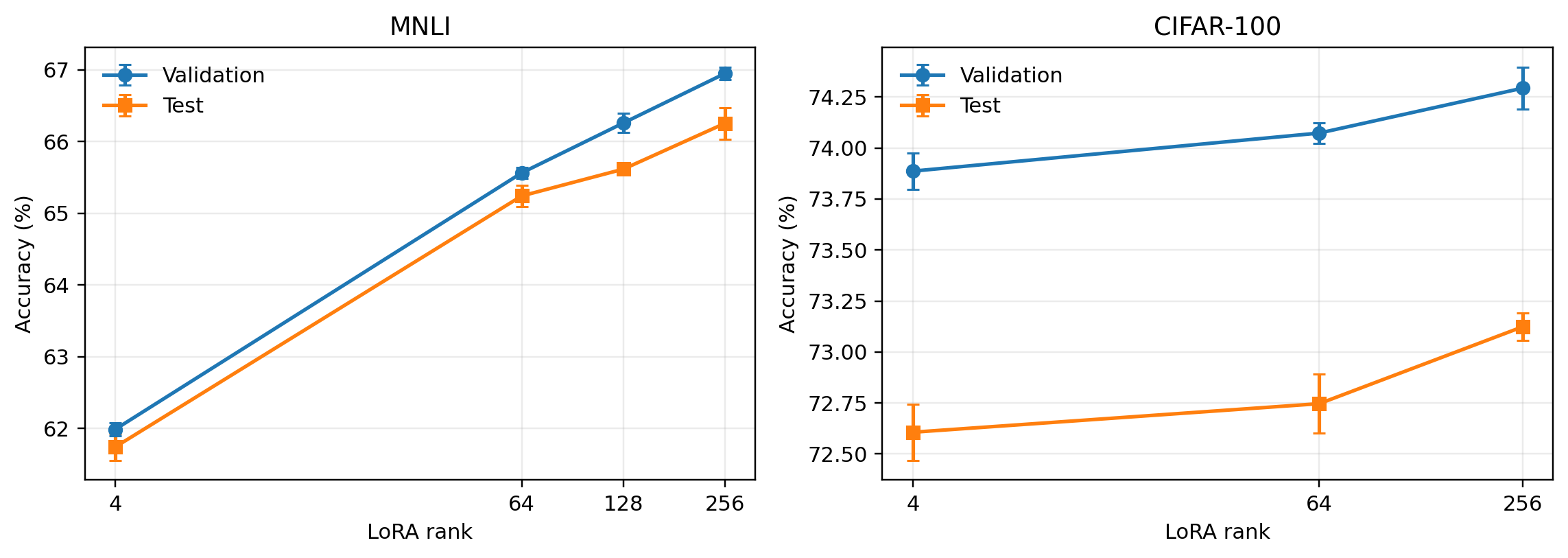}
    \end{minipage}
    \par
    \begin{minipage}{0.98\linewidth}
        \centering
        \includegraphics[width=\linewidth]{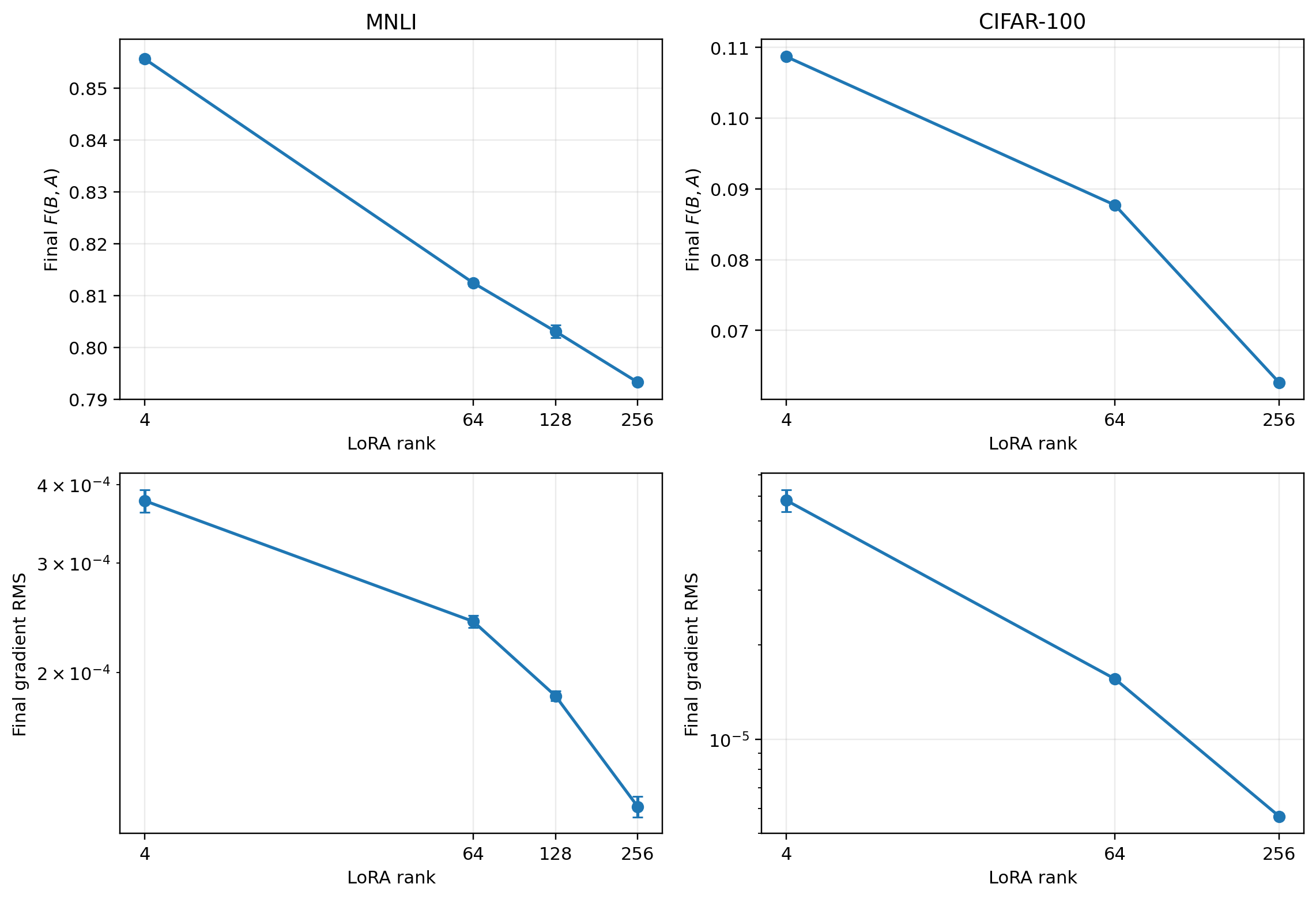}
    \end{minipage}
    \caption{Multi-initialization performance and objective diagnostics.}
    \label{fig:app-multiinit}
\end{figure}

\FloatBarrier

\subsection{Projected data-condition diagnostics}
\label{sec:app-data-diagnostics}

The coverage diagnostic compares the balanced, light long-tail, and heavy long-tail subsets under
the same label budget and subset size. The projected GN condition number is the ratio of the largest
to smallest eigenvalue in the shared span. Stable rank is computed from the projected GN spectrum,
and the residual norm is the operator norm of the nonlinear CE-Hessian residual in that span.

\begin{table}[H]
\centering
\small
\caption{Exact projected data-condition diagnostics at the shared zero-update model. Means and
standard deviations are over three diagnostic seeds; training-trajectory results are reported
separately in Table~\ref{tab:app-trajectory-diagnostics}.}
\label{tab:app-data-condition}
\resizebox{\textwidth}{!}{%
\begin{tabular}{lrrrrr}
\toprule
Condition & GN floor ($\times10^{-7}$) & GN ceiling ($\times10^{-5}$) & GN condition & Stable rank & Residual ($\times10^{-5}$) \\
\midrule
Balanced & $4.66\pm0.14$ & $1.59\pm0.01$ & $34.19\pm1.17$ & $4.81\pm0.02$ & $2.11\pm0.16$ \\
Light long-tail & $4.88\pm0.14$ & $1.73\pm0.02$ & $35.37\pm0.97$ & $4.47\pm0.04$ & $3.27\pm0.17$ \\
Heavy long-tail & $4.93\pm0.10$ & $1.90\pm0.04$ & $38.60\pm1.32$ & $4.04\pm0.05$ & $4.19\pm0.43$ \\
\bottomrule
\end{tabular}%
}
\end{table}

Balanced yields the lowest projected GN condition number and the highest stable rank.
With GN floors on comparable scales, the lower ceiling identifies reduced concentration in dominant directions as the main source of better
conditioning. Balanced also has a smaller nonlinear residual. These results support the coverage mechanism:
spreading the data budget across response modes produces more even directional support and a
more favorable curvature profile.

Figure~\ref{fig:app-data-condition} displays the corresponding GN condition numbers and curvature spectra.

\begin{figure}[!htbp]
    \centering
    \begin{minipage}{0.98\linewidth}
        \centering
        \includegraphics[width=\linewidth]{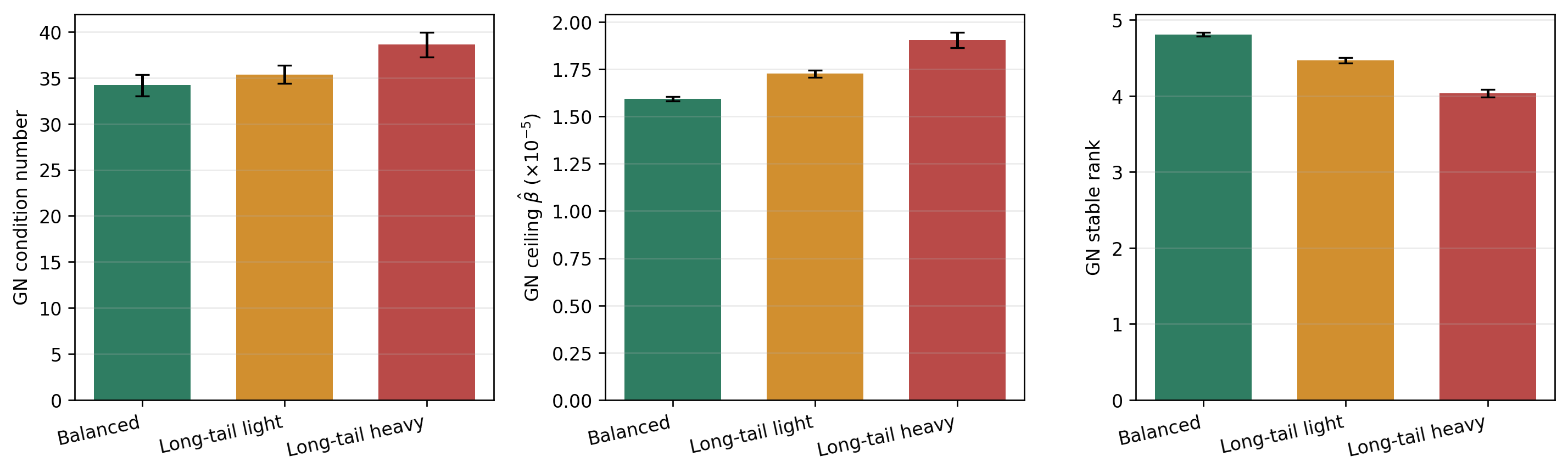}
    \end{minipage}
    \par
    \begin{minipage}{0.98\linewidth}
        \centering
        \includegraphics[width=\linewidth]{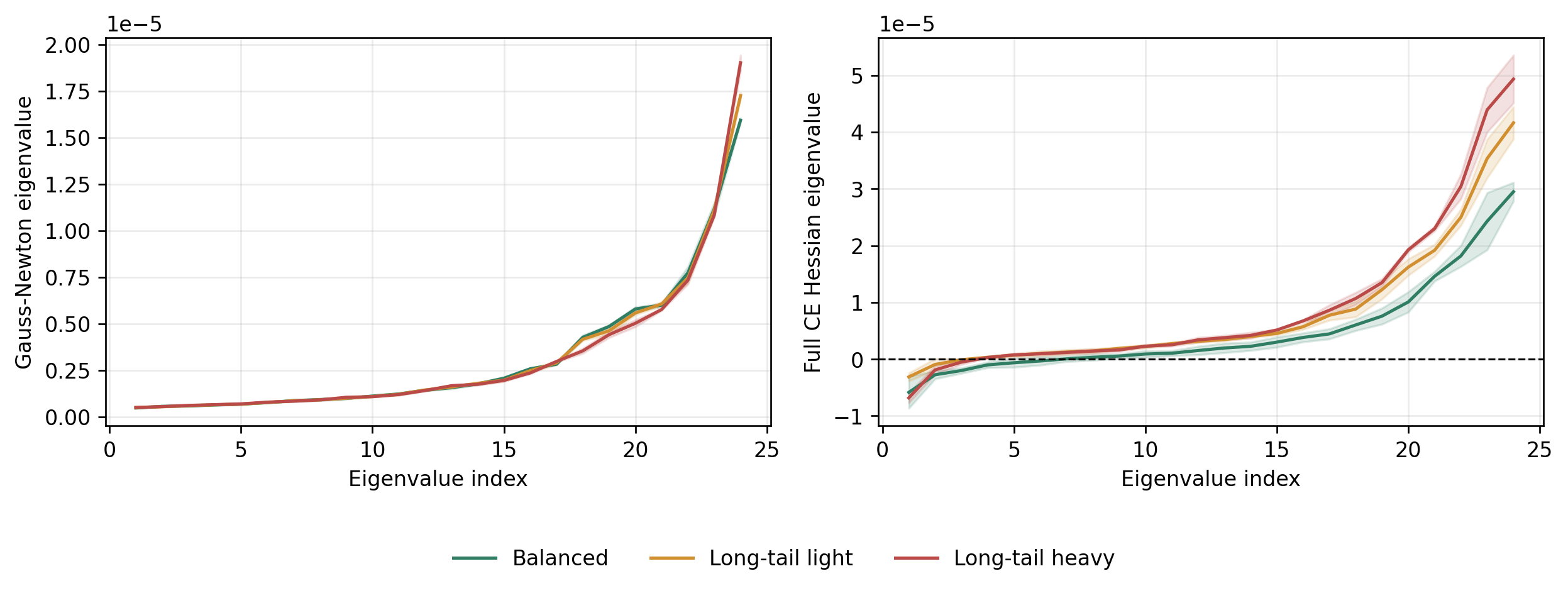}
    \end{minipage}
    \caption{Projected Gauss--Newton condition numbers and restricted curvature spectra for the
    gradient-coverage subsets.}
    \label{fig:app-data-condition}
\end{figure}

\FloatBarrier

\subsection{CE-Hessian geometry along training}
\label{sec:app-trajectory-diagnostics}

We compare the projected CE-Hessian geometry reached by Balanced, Light, and Heavy training
at start, middle, and final checkpoints, using the same $1{,}536$-example diagnostic set and
$24$-dimensional basis. Figure~\ref{fig:app-trajectory-diagnostics} focuses on the minimum
CE-Hessian eigenvalue and the number of negative eigenvalues in this shared subspace.
Table~\ref{tab:app-trajectory-diagnostics} records the complete checkpoint measurements.

\begin{table}[H]
\centering
\scriptsize
\caption{Complete training-trajectory measurements. Reported uncertainties are standard deviations
over three training seeds; start is one shared state. The accompanying figure displays the two
projected CE-Hessian sign diagnostics.}
\label{tab:app-trajectory-diagnostics}
\resizebox{\textwidth}{!}{%
\begin{tabular}{llrrrr}
\toprule
Condition & Checkpoint & GN condition & Full-H min eig. & Negative eigs. & Residual/GN ceiling \\
\midrule
Balanced & Start  & $32.70\pm0.00$ & $-9.012\mathrm{e}{-6}$ & $8.00\pm0.00$ & $1.194\pm0.000$ \\
Balanced & Middle & $57.54\pm1.88$ & $1.816\mathrm{e}{-7}$ & $0.00\pm0.00$ & $0.755\pm0.059$ \\
Balanced & Final  & $59.60\pm1.52$ & $2.096\mathrm{e}{-7}$ & $0.00\pm0.00$ & $0.756\pm0.020$ \\
Light & Middle & $41.92\pm3.85$ & $-1.437\mathrm{e}{-6}$ & $3.67\pm0.58$ & $0.721\pm0.217$ \\
Light & Final  & $40.16\pm2.69$ & $-1.440\mathrm{e}{-6}$ & $3.00\pm1.00$ & $0.799\pm0.178$ \\
Heavy & Middle & $44.44\pm0.58$ & $-2.543\mathrm{e}{-6}$ & $5.00\pm0.00$ & $0.594\pm0.033$ \\
Heavy & Final  & $47.01\pm1.56$ & $-2.375\mathrm{e}{-6}$ & $4.33\pm0.58$ & $0.556\pm0.024$ \\
\bottomrule
\end{tabular}%
}
\end{table}

Balanced reaches a positive mean minimum CE-Hessian eigenvalue by the middle checkpoint,
and all three training runs have zero detected negative eigenvalues at both the middle and final
checkpoints. At the final checkpoint, Light and Heavy retain an average of $3.00$ and $4.33$
negative eigenvalues, respectively. This consistent separation links balanced gradient-group
coverage to training states with more favorable local second-order geometry, complementing
the common-reference coverage analysis.

\begin{figure}[!htbp]
    \centering
    \begin{minipage}[t]{0.49\linewidth}
        \centering
        \includegraphics[width=\linewidth,trim={364.9267bp 289.3230bp 0bp 0bp},clip]{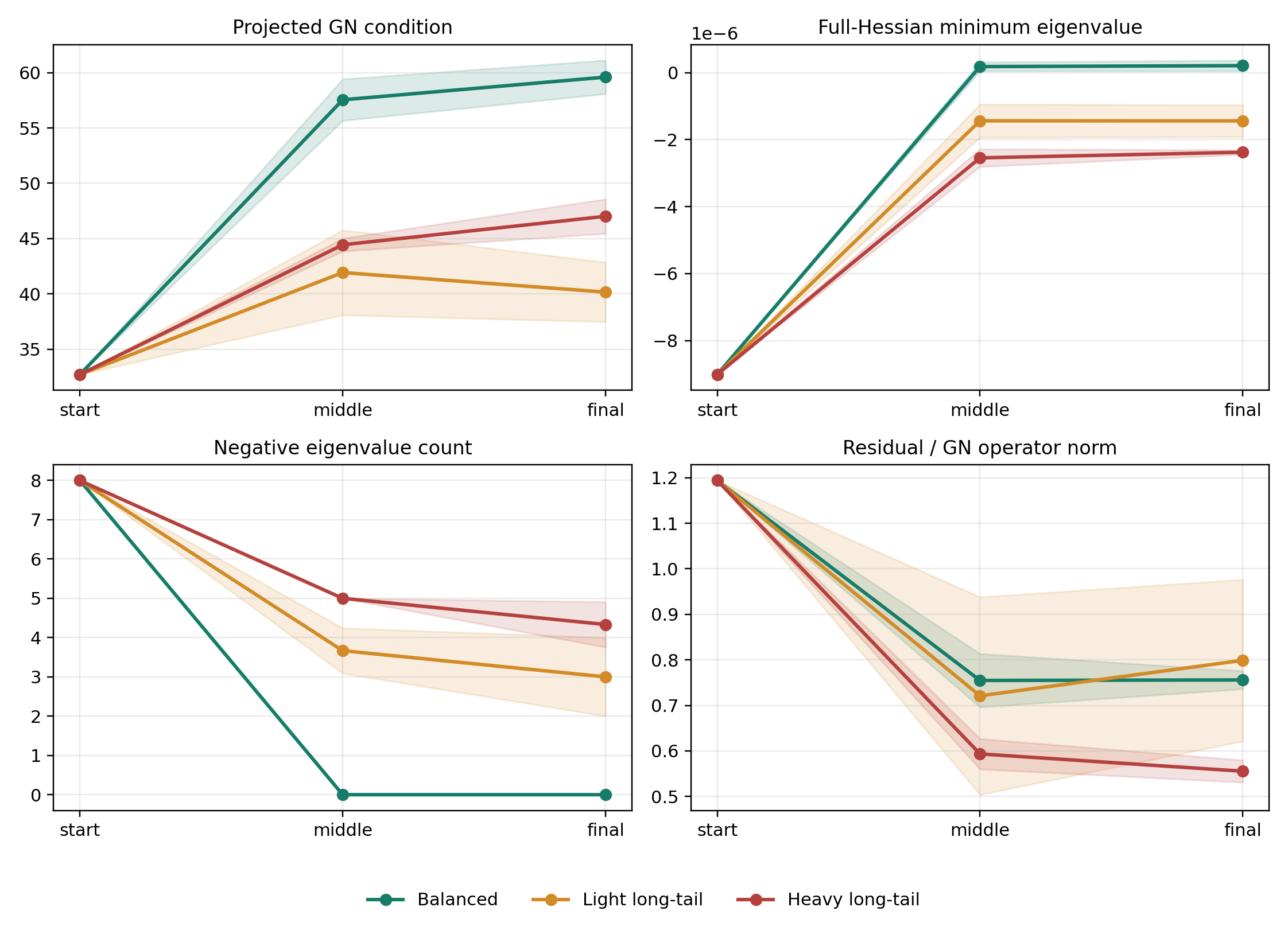}
    \end{minipage}\hfill
    \begin{minipage}[t]{0.49\linewidth}
        \centering
        \includegraphics[width=\linewidth,trim={0bp 49.7479bp 369.5087bp 239.5752bp},clip]{figures/experiments/diagnostics/mnli_data_condition_hessian_trajectory.png}
    \end{minipage}
    \par\smallskip
    \includegraphics[width=0.66\linewidth,trim={201.2824bp 5.8912bp 53.6753bp 497.4785bp},clip]{figures/experiments/diagnostics/mnli_data_condition_hessian_trajectory.png}
    \caption{Projected CE-Hessian geometry along MNLI training trajectories. Left: minimum
    eigenvalue. Right: number of negative eigenvalues. All conditions use the same diagnostic
    set and projection basis; curves and shaded bands summarize three training seeds.}
    \label{fig:app-trajectory-diagnostics}
\end{figure}

\FloatBarrier

\subsection{Confidence-band diagnostics and multi-seed evaluation}
\label{sec:app-saturation-diagnostics}

The MNLI confidence-band comparison fixes subset size, label counts, and gradient-group quotas.
The score is \(s=p_{\rm pair}(1-p_{\rm pair})\), where
\(p_{\rm pair}=p_{\rm gold}/(p_{\rm gold}+p_{\rm alt})\). All bands are evaluated at the same
zero-update model in the common 24-dimensional span. This tests whether changing average
pairwise confidence mainly changes curvature scale or also changes projected conditioning.

\begin{table}[H]
\centering
\small
\caption{Confidence-band curvature summaries at the shared reference model. GN statistics are
means $\pm$ standard deviations over three diagnostic seeds.}
\label{tab:app-saturation-diagnostics}
\begin{tabular}{lrrr}
\toprule
Band & Pairwise score $s$ & GN trace ($\times10^{-5}$) & GN condition \\
\midrule
Easy & 0.2210 & $7.43\pm0.03$ & $32.40\pm0.55$ \\
Middle & 0.2389 & $7.71\pm0.10$ & $33.16\pm1.59$ \\
Boundary & 0.2477 & $7.73\pm0.07$ & $32.59\pm1.21$ \\
\bottomrule
\end{tabular}
\end{table}

With gradient-group quotas fixed, Middle and Boundary provide higher GN trace than Easy in all
three diagnostic samples, while mean projected condition numbers remain close ($32.40$--$33.16$).
This highlights the role of softmax weighting in the amount of curvature supplied to covered
directions, rather than a clear conditioning advantage for a particular band. The training
comparisons below separately evaluate how these confidence-band choices affect fine-tuning performance.

\begin{table}[H]
\centering
\small
\caption{Three-seed MNLI saturation-band training results. Entries are validation accuracy in percent,
reported as mean $\pm$ standard deviation.}
\label{tab:app-saturation-training}
\begin{tabular}{rrrr}
\toprule
Rank & Easy & Middle & Boundary \\
\midrule
4  & $58.83\pm0.47$ & $59.58\pm0.04$ & $59.43\pm0.04$ \\
16 & $60.10\pm0.60$ & $60.89\pm0.07$ & $60.59\pm0.02$ \\
32 & $60.95\pm0.55$ & $61.36\pm0.12$ & $61.36\pm0.06$ \\
64 & $61.65\pm0.33$ & $62.04\pm0.19$ & $62.13\pm0.05$ \\
\bottomrule
\end{tabular}
\end{table}

Table~\ref{tab:app-saturation-training} shows that both Middle and Boundary improve mean
validation accuracy over Easy at all four ranks, with smaller run-to-run variability.
Their relative ordering varies: Middle leads at ranks $4$ and $16$, the reported means tie at
rank $32$, and Boundary has the highest mean at rank $64$. Together with the reference-point
diagnostics, these results support confidence-aware selection that avoids concentrating the
budget on highly saturated examples while maintaining directional coverage, rather than a
universal preference for the middle band.

\FloatBarrier

\subsection{Cross-Backbone and Cross-Dataset Gradient Coverage}
\label{sec:app-coverage-extensions}

We extend the gradient-coverage intervention to a second backbone on MNLI and to QNLI with
MPNet-base. Both experiments compare Balanced, Light long-tail, and Heavy long-tail sampling.
Balanced assigns approximately equal example counts to the eight gradient groups within each
label. At each rank, the three conditions share the same subset size, label budget, target module,
and training settings; the intervention changes how the data budget is allocated across gradient
groups. Both experiments use ranks $r\in\{16,64\}$, seeds $0,1,2$, 50 training epochs, and batch
size 32. Validation and test accuracies are evaluated at the checkpoint with the lowest validation
cross-entropy in each run. The tables report three-seed summaries in percentage units.

\paragraph{DeBERTa-v3-base on MNLI.}
We use \texttt{microsoft/deberta-v3-base}, with LoRA applied only to the layer-10 query projection
and the existing linear-probe head kept fixed. Each selected subset contains $17{,}526$ examples,
with label counts $(5{,}847,5{,}788,5{,}891)$. Normalized gradient sketches are clustered into
eight groups per label, giving 24 label-conditioned groups. The Light and Heavy long-tail
conditions use group-mass decay factors $0.60$ and $0.45$, respectively. The maximum sequence
length is 128.

Table~\ref{tab:app-deberta-mnli-coverage} shows that Balanced achieves the highest validation and
test accuracy at both ranks, followed by Light and then Heavy long-tail. Relative to Heavy
long-tail, Balanced improves test accuracy by $2.18$ percentage points at $r=16$ and $1.62$
points at $r=64$. The Balanced-versus-Heavy advantage across MPNet and DeBERTa supports
gradient-response coverage as a model-aware criterion for selecting fine-tuning data.

\begin{table}[H]
\centering
\small
\caption{DeBERTa-v3-base on MNLI: gradient-coverage results over seeds $0,1,2$ at the best
validation-CE checkpoint of each run. Accuracy is in percent; bold marks the best condition at
each rank.}
\label{tab:app-deberta-mnli-coverage}
\begin{tabular}{rlrr}
\toprule
Rank & Condition & Validation accuracy (\%) & Test accuracy (\%) \\
\midrule
16 & Balanced & $\mathbf{44.54\pm0.06}$ & $\mathbf{45.04\pm0.06}$ \\
16 & Light long-tail & $43.42\pm0.07$ & $43.76\pm0.11$ \\
16 & Heavy long-tail & $42.25\pm0.03$ & $42.86\pm0.03$ \\
\midrule
64 & Balanced & $\mathbf{45.12\pm0.09}$ & $\mathbf{45.16\pm0.18}$ \\
64 & Light long-tail & $44.10\pm0.16$ & $44.38\pm0.03$ \\
64 & Heavy long-tail & $42.92\pm0.11$ & $43.54\pm0.16$ \\
\bottomrule
\end{tabular}
\end{table}

\paragraph{MPNet-base on QNLI.}
We use \texttt{microsoft/mpnet-base} on QNLI, targeting the layer-10 attention query projection.
Each condition contains $5{,}125$ examples with label counts $(2{,}534,2{,}591)$.
Normalized gradient/margin-Jacobian sketches form eight groups per label, giving 16
label-conditioned groups. The model, adapted layer, total subset size, label budget, and training
settings are held fixed across the three sampling conditions at each rank.

As shown in Table~\ref{tab:app-qnli-coverage-extension}, Balanced again outperforms Light
long-tail, which outperforms Heavy long-tail, on both validation and test accuracy. Balanced
improves test accuracy over Heavy long-tail by $5.50$ percentage points at $r=16$ and $6.79$
points at $r=64$. The sustained gains at both ranks show that allocating a fixed sample budget
across complementary response groups remains valuable as the adapter capacity increases.

\begin{table}[H]
\centering
\small
\caption{MPNet-base on QNLI: three-condition gradient-coverage experiment with $5{,}125$ examples.
Results summarize seeds $0,1,2$ at each run's best validation-CE checkpoint. Accuracy is in percent;
bold marks the best condition at each rank.}
\label{tab:app-qnli-coverage-extension}
\begin{tabular}{rlrr}
\toprule
Rank & Condition & Validation accuracy (\%) & Test accuracy (\%) \\
\midrule
16 & Balanced & $\mathbf{71.73\pm0.27}$ & $\mathbf{72.16\pm0.33}$ \\
16 & Light long-tail & $70.63\pm0.32$ & $70.38\pm0.26$ \\
16 & Heavy long-tail & $68.15\pm0.33$ & $66.66\pm0.19$ \\
\midrule
64 & Balanced & $\mathbf{75.01\pm0.29}$ & $\mathbf{76.06\pm0.33}$ \\
64 & Light long-tail & $73.83\pm0.27$ & $73.71\pm0.32$ \\
64 & Heavy long-tail & $71.02\pm0.21$ & $69.27\pm0.20$ \\
\bottomrule
\end{tabular}
\end{table}

Across both evaluations, Balanced at rank $16$ outperforms Heavy at rank $64$, using only one
quarter of the adapter rank. These comparisons show how better coverage can improve the
effectiveness of a smaller adapter, directly supporting the joint consideration of data composition
and rank in resource-efficient LoRA fine-tuning.

\subsection{Adapter-location robustness}
\label{sec:app-robustness}

We evaluate the Balanced-versus-Heavy coverage comparison at several adapter locations on MNLI.
For layers 5, 7, and 8, gradient sketches are recomputed at the corresponding query projection.
The all-layer query/value run evaluates placement transfer using the fixed layer-10-selected
subsets.

\begin{table}[!htbp]
\centering
\small
\caption{Adapter-location robustness on MNLI. Gains are Balanced minus Heavy test accuracy
in percentage points.}
\label{tab:app-robustness}
\begin{tabular}{lr}
\toprule
Adapter location & Balanced gain \\
\midrule
Layer-5 query & $+0.890$ \\
Layer-7 query & $+1.063$ \\
Layer-8 query & $+1.939$ \\
All-layer query/value & $+1.124$ \\
\bottomrule
\end{tabular}
\end{table}

Balanced improves test accuracy at every tested location, by $0.890$--$1.939$ percentage points
for single-layer query adapters and $1.124$ points for all-layer query/value adaptation. The
all-layer gain uses the fixed layer-10-selected subsets, showing that the coverage benefit
transfers across adaptation configurations.

\FloatBarrier